\theoremstyle{plain}
\newtheorem{theorem}{Theorem}[section]
\newtheorem{lemma}[theorem]{Lemma}
\theoremstyle{definition}
\newtheorem{assumption}[theorem]{Assumption}
\theoremstyle{remark}
\newcommand{\SNR}{\text{SNR}}
\title{Federated Learning from Vision-Language Foundation Models: Theoretical Analysis and Method}
\author{
  Bikang Pan\\
  ShanghaiTech University\\
  \texttt{panbk2023@shanghaitech.edu.cn} \\
  \And
  Wei Huang $^*$ \\
  RIKEN Center for Advanced Intelligence Project \\
  \texttt{wei.huang.vr@riken.jp} \\
  \And
  Ye Shi \thanks{Corresponding author.}  \\
  ShanghaiTech University \\
  \texttt{shiye@shanghaitech.edu.cn} \\
  }
\begin{document}

\maketitle

\begin{abstract}
Integrating pretrained vision-language foundation models like CLIP into federated learning has attracted significant attention for enhancing generalization across diverse tasks. Typically, federated learning of vision-language models employs prompt learning to reduce communication and computational costs, i.e., prompt-based federated learning. However, there is limited theoretical analysis to understand the performance of prompt-based federated learning. In this work, we construct a theoretical analysis framework for prompt-based federated learning via feature learning theory. Specifically, we monitor the evolution of signal learning and noise memorization in prompt-based federated learning, demonstrating that performance can be assessed by the ratio of task-relevant to task-irrelevant coefficients. Furthermore, we draw an analogy between income and risk in portfolio optimization and the task-relevant and task-irrelevant terms in feature learning. Leveraging inspiration from portfolio optimization that combining two independent assets will maintain the income while reducing the risk, we introduce two prompts: global prompt and local prompt to construct a prompt portfolio to balance the generalization and personalization. Consequently, we showed the performance advantage of the prompt portfolio and derived the optimal mixing coefficient. These theoretical claims have been further supported by empirical experiments. Our code is available at: \url{https://github.com/PanBikang/PromptFolio.git}.
\end{abstract}

\section{Introduction}

Federated learning (FL) \cite{FedAvg, liang2020think} stands out as a powerful framework that enables machine learning over decentralized data while maintaining data privacy and reducing reliance on centralized data repositories \cite{zhangFederatedFuzzyNeural2022}. Despite its advantages, the intensive computational and communication demands during the training phase often constrain the scalability of the models utilized. A transformative advancement in this field is the integration of prompt-based learning within the federated framework \cite{CoOp, PromptFL, deng2024unlocking}. 

Prompt learning adapts models such as Contrastive Language-Image Pretraining (CLIP) \cite{CLIP} through minimal modifications, typically in the form of prompts or cues, guiding the model's predictions. As an emerging idea in machine learning, it has shown significant promise in various applications by allowing models to perform specialized tasks without undergoing complete retraining. Prompts effectively adjust pre-trained models for new tasks or datasets, which is particularly valuable in federated environments where data privacy and bandwidth constraints often limit conventional training methods. A prominent example is CoOp \cite{CoOp}, with its streamlined federated version known as PromptFL \cite{PromptFL}.

Despite the empirical success of prompt-based federated learning \cite{liGlobalLocalPrompts2024, PromptFL, pFedPrompt}, theoretical analysis in this area remains limited. In this paper, we use feature learning theory \cite{allen-zhuUnderstandingEnsembleKnowledge2022} to propose an analytical framework for prompt-based federated learning with vision-language foundation models. Feature learning theory divides data into task-relevant and task-irrelevant features, allowing learnable weights to be expressed as a linear combination of these features. By introducing feature alignment across bimodal pretrained models, we demonstrate the relationship between learnable prompts and the features in latent space. To understand the process of signal learning and noise memorization, we use a two-stage analysis to examine the dynamics of coefficients for task-relevant and task-irrelevant features. Leveraging these coefficients, we demonstrate that prompt learning can be evaluated by comparing the ratio of task-relevant to task-irrelevant coefficients. In this way, we establish a theoretical foundation for prompt-based federated learning.

Additionally, we treat task-relevant coefficients and task-irrelevant coefficients as income and risk in investment portfolios \cite{markowitzPortfolioSelection1952, markowitzMeanvarianceAnalysisPortfolio2000}. Inspired by investment portfolios, where combining two independent assets can reduce risk, we introduce two learnable prompts: a global prompt and a local prompt, and simply mix them to form a prompt portfolio to balance generalization and personalization. By mixing them to create a prompt portfolio, we balance the generalization and personalization under severe data heterogeneity. Leveraging the analysis framework we proposed, we prove the performance advantage of this mixed algorithm and derive the optimal coefficient. Besides, we use comprehensive experiments to support our theoretical results.

In this paper, our primary contributions are threefold:
\begin{itemize}
\item We present an analytical framework for prompt-based federated learning using vision-language foundation models. This framework aligns text and image features into a shared latent feature space and utilizes a two-stage analysis to understand the dynamics of prompt learning. By tracking the progression of signal learning and noise memorization, we show that the effectiveness of prompt-based federated learning can be measured by the ratio of task-relevant to task-irrelevant coefficients.
\item Additionally, we introduce a prompt portfolio mechanism to address severe data heterogeneity and balance generalization and personalization. Within our proposed analytical framework, we draw an intuitive analogy between task-relevant features and income in portfolio optimization, as well as task-irrelevant features and risk. Consequently, we demonstrate that the combination of prompts performs better than using a single prompt, and we provide the optimal mixing ratio.
\item The theoretical result has been empirically validated through rigorous experiments. Our results not only align with theoretical predictions but also consistently demonstrate the practical superiority of our approach in real-world scenarios.
\end{itemize}

\section{Related Work}

In this section, we examine prior research that serves as the basis for our study. Our investigation is divided into two primary areas: prompt-based federated learning and feature learning theory. Together, these fields create a complete context for our contributions.

\subsection{Prompt-based federated learning}
Prompt learning, initially developed in the field of natural language processing, has expanded its reach to vision language models. Examples include the CLIP model \cite{CLIP}, which originally utilized manually engineered templates. However, more recent advancements have shifted towards developing prompts within a continuous embedding space. Innovations like CoOp \cite{CoOp} have refined the CLIP model by introducing continuous prompt vectors, sparking a surge of studies aimed at enhancing the efficiency of prompt learning and providing a solid foundation for further investigation \cite{cuiHarmonizingGeneralizationPersonalization2024}.

To enhance the integration of global data and address challenges in scenarios with limited user data, FedPrompt \cite{FedPrompt} and PromptFL \cite{PromptFL} have effectively integrated the concept of prompt learning with federated learning \cite{caiFedCOCooperationOnline2024, caoKnowledgeawareFederatedActive2023, liFedtpFederatedLearning2023, qiuFederatedTextdrivenPrompt2024,jiangheterogeneous}. To tackle the statistical heterogeneity often found in client data, pFedPrompt \cite{pFedPrompt} introduces a non-parametric approach, providing each client with a personalized attention module. This module is designed to refine spatial visual features to better align with the unique characteristics of local data. Concurrently, pFedPG \cite{pFedPG} introduces a novel prompt generator located at the server, which customizes prompts for each client, thus enhancing the personalization of the federated learning process. Additionally, a recent study, FedOTP \cite{liGlobalLocalPrompts2024}, leverages optimal transport theory to improve the balance between achieving global consensus and allowing for local customization through a strategic combination of global and local prompts.

However, there is limited theoretical analysis of federated prompt learning. For instance, the theoretical examination of the CLIP model by \cite{chenUnderstandingTransferableRepresentation2023} enhances prompt learning theory by exploring how CLIP learns transferable representations across various modalities and improves zero-shot transfer performance with a recently developed regularization technique. However, to the best of our knowledge, no research analyzes prompt learning within a federated setting that elucidates the cooperation between prompts. In this paper, we analyze how different prompts interact with feature learning theory and demonstrate the provable benefits of cooperation between global and local prompts.

\subsection{Feature learning theory}
Feature learning theory \cite{allen-zhuUnderstandingEnsembleKnowledge2022} has revolutionized our understanding of how machine learning models learn and represent information. Unlike other theories, feature learning accommodates substantial weight updates during gradient descent, enabling the network to uncover complex patterns in data. Feature learning theory has been successfully applied to various neural network architectures, including convolutional neural networks \cite{caoBenignOverfittingTwolayer2022, kouBenignOverfittingTwolayer2023}, graph neural networks \cite{huangGraphNeuralNetworks2023}, and vision transformers \cite{jelassiVisionTransformersProvably2022, liTheoreticalUnderstandingShallow2023}. Moreover, feature learning theory has been used to analyze different training algorithms, such as Adam \cite{zouUnderstandingGeneralizationAdam2021}, momentum \cite{liTheoreticalUnderstandingShallow2023}, out-of-distribution learning \cite{chenUnderstandingFeatureLearning2023}, and Mixup \cite{zouBenefitsMixupFeature2023, chidambaramProvablyLearningDiverse2023}. Notably, \cite{huangUnderstandingConvergenceGeneralization2023} have analyzed the convergence and generalization in general federated learning. Despite progress in feature learning theory, the study of federated prompt learning is sparse. Our work uniquely addresses this gap by analyzing feature learning under prompt-based federated learning, providing crucial insights for their effective adaptation and optimization in such contexts.

\section{Preliminary}

\textbf{Notation}\quad In our notation, vectors are represented by lowercase bold letters, matrices by uppercase bold letters, and scalars by regular, non-bold letters. The $\ell_2$-norm of a vector \(\mathbf{v}\) is indicated as \(|\mathbf{v}|_2\). The spectral norm of a matrix \(\mathbf{A}\) is denoted by \(|\mathbf{A}|_2\), and its Frobenius norm by \(|\mathbf{A}|_F\). To compare the growth or decline of two sequences, we use standard asymptotic symbols like \(O(\cdot)\), \(o(\cdot)\), \(\Omega(\cdot)\), and \(\Theta(\cdot)\), which describe their behavior as they approach infinity. We introduce notations \(\tilde{O}(\cdot)\), \(\tilde{\Omega}(\cdot)\), and \(\tilde{\Theta}(\cdot)\) to obscure logarithmic factors within these expressions. Notably, \(\mathds{1}(\cdot)\) represents the indicator function. Lastly, we represent sequences of integers as \([n] = \{1, 2, \ldots, n\}\) and sequences of elements such as vectors can also be represented similarly \(\mathbf{v}_{[n]} = \{\mathbf{v}_1, \mathbf{v}_2, \ldots, \mathbf{v}_n\}\).

\subsection{Prompt-based federated learning}
In this part, we demonstrate how to fine-tune a learnable text prompt under a vision language pre-trained model. Here, we consider the classification task, where we assume that we have an image \(\mathbf{x}\). The objective is to correctly classify the image into class \(y \in [C]\), where the total number of classes is \(C\). From the vision language pre-trained model, we expect that the latent spaces of the text encoder and image encoder are aligned. Thus, when we input different prompts, the text feature generated by the correct prompt will have the highest similarity with the image feature. We input a learnable prompt \(\mathbf{p}\) and a fixed class prompt \(\mathbf{p}_c \in \{\mathbf{p}_1, \cdots, \mathbf{p}_C\}\), which correspond to the classes, into the text encoder \(h\). This process generates the text feature for class \(c\): \(\mathbf{h}_{c} = h(\mathbf{p}, \mathbf{p}_c)\). On the other hand, the image feature \(\mathbf{g}\) is generated by the image encoder \(g\): \(\mathbf{g}_{k, i} = g(\mathbf{x}_{k, i}) \in \mathbb{R}^m\). We define the similarity function between the image feature \(\mathbf{g}\) and the text feature \(\mathbf{h}\) as \(\bm{\rho} := [\rho_c] = \text{sim}(\mathbf{g}, \mathbf{h}_{c})\). The training process mirrors traditional classification tasks, where the objective loss \(\ell(\bm{\rho}, \mathbf{e}_y)\) is the distance between the similarity vector and the actual label \(y\). Here, \(\ell\) represents the loss function that measures the distance between two vectors, and \(\mathbf{e}_y\) is the one-hot vector derived from the ground truth label \(y\).

To illustrate the prompt-based federated learning framework, consider a federated system with a central server and \(K\) clients. Assume client \(k\) has \(n_k\) training samples: \(\{\mathbf{x}_{k, i}, y_{k, i}\}_{i=1}^{n_k}\). The learnable prompt in client \(k\) is denoted as \(\mathbf{p}_k\) and the learnable prompt is aggregated at each communication round.

\section{Analysis Framework for Prompt-based Federated Learning}
\label{section: analysis framework}
In this subsection, we present the analysis framework for prompt-based federated learning from vision-language pre-trained models. The central concept of this framework is the alignment of features between text and image encoders in the vision-language pre-trained model. To achieve this, we link the text encoder and image encoder through a shared latent feature space, described as follows.

\textbf{Feature representation and client distribution} Inspired by \cite{caoBenignOverfittingTwolayer2022, kouBenignOverfittingTwolayer2023, huangUnderstandingConvergenceGeneralization2023}, we expect that the latent feature space will contain task-relevant and task-irrelevant features. In federated learning settings, the task-relevant features can be categorized into global task-relevant features \(\bm{\mu}_G\) and local task-relevant features \(\bm{\nu}_1, \cdots, \bm{\nu}_S\), where \(S\) is the length of local task-relevant features. Additionally, the task-irrelevant features can be listed as \(\bm{\xi}_1, \cdots, \bm{\xi}_L\), where \(L\) is the length of task-irrelevant features. Here, we assume that the dimension of the latent space is \(m\) and these features \(\bm{\mu}_{(\cdot)}, \bm{\nu}_{(\cdot)},\) and \(\bm{\xi}_{(\cdot)}\) are elements of \(\mathbb{R}^m\). For simplicity, we assume that these features are orthogonal to each other. In our theoretical examination, we address a binary classification scenario with \(y_{k,i} \in \{+1, -1\}\). We consider a scenario with \(K\) clients, each linked to a distribution \(\mathcal{D}_k, \forall k \in [K]\). Initially, we choose the signal vector \(\bm{\mu}_k\) for client \(k\) by sampling from \(P(\bm{\nu}_1, \bm{\nu}_2, \dots, \bm{\nu}_S)\), where \(P\) represents a discrete distribution that assigns probabilities to each local task-relevant feature vector \(\bm{\mu}_k := \bm{\nu}_s, s \in [S]\).

\textbf{Text encoder} Here, we suggest coupling the learnable prompt and the class prompt and propose the structure of the text encoder. By adopting a similar setting as \cite{wenUnderstandingFeatureLearning2021b}, we suppose the generation of the text feature can be written as follows:
\begin{equation}
\label{def of text encoder}
\begin{aligned}
    \mathbf{h}_{k, i} &= h(\mathbf{p}_k, \mathbf{p}_{y_{k, i}}) = \sigma(\mathbf{W}\mathbf{p}_k + \mathbf{W}\mathbf{p}_{y_{k, i}}) - \sigma(-\mathbf{W}\mathbf{p}_k + \mathbf{W}\mathbf{p}_{y_{k, i}}),
\end{aligned}
\end{equation}
where \(\mathbf{W} \in \mathbb{R}^{m \times m_p}\) is the weight matrix, and \(\mathbf{p}_{y_{k, i}} \in \mathbb{R}^{m_p}\) is the prompt linked to a ground truth class. In this definition, the introduction of \(\mathbf{W}\mathbf{p}_{y_{k, i}}\) introduces nonlinearity between the trainable prompt and the class prompt while keeping the overall function nonlinear. Note that for a binary classification problem, the vector \(\mathbf{p}_{y_{k, i}}\) takes \(\mathbf{p}_{1}\) when \(y_{k, i} = 1\) and \(\mathbf{p}_{-1}\) when \(y_{k, i} = -1\). To reveal the properties of the text encoder in the pre-trained model, we assume that the weight matrix \(\mathbf{W}\) is composed of the following rows:
\begin{align}
    \mathbf{W} &= \begin{bmatrix}
        \bm{\mu}_G, \bm{\nu}_1, \cdots, \bm{\nu}_s, \cdots, \bm{\nu}_S, \bm{\xi}_1, \cdots, \bm{\xi}_L
    \end{bmatrix}^T.
\end{align}
The assumption of the weight matrix is inspired by \cite{huangWhatMakesMultimodal2021a}, and the evidence of this assumption is listed in the Appendix \ref{evidence_assump}. In our analysis framework, we adapt FedAvg \cite{FedAvg} as our prompt aggregation algorithm, which is named PrompFL \cite{PromptFL}. The aggregation formula is given by:
\begin{align}
    \mathbf{p}_G^{(t+1)} \leftarrow \sum\limits_{k = 1}^K \frac{n_k}{n} \mathbf{p}_{G, k}^{(t)}, 
\end{align}
where \(n := \sum_k n_k\) denotes the total number of data samples across all clients.

\textbf{Image encoder} Let us consider the image network, represented as \(\mathbf{g}_{k, i} = g(\mathbf{x}_{k, i}) \in \mathbb{R}^m\). We assume that the image encoder also aligns the feature space of the text encoder within the pre-trained model. As a result, we assume the image feature generated by data \(\mathbf{x}_{k, i}\) in client \(i\) can be expressed as follows:
\begin{align}
    \mathbf{g}_{k, i} = g(\mathbf{x}_{k, i}) &= [y_{k, i}, \underbrace{0, \cdots,}_{(s-1) \text{ zeros}} y_{k, i}, \underbrace{\cdots, 0,}_{(S-s) \text{ zeros}} x_{k, i, 1}, \cdots, x_{k, i, L}]^T
\end{align}
where \(x_{k, i, l} \sim \mathcal{N}(0, \sigma_p^2), \forall l \in [L]\) represents the coefficient of task-irrelevant terms in the data, and \(\sigma_p^2\) is the variance. This assumption implies that task-relevant features vary based on whether the label is positive or negative, whereas task-irrelevant features persist as arbitrary and unrelated to the label's polarity. The similarity score between an image \(\mathbf{x}_{k, i}\) and class \(y_{k, i}\) is expressed as \(\text{sim}(\mathbf{g}_{k, i}, \mathbf{h}_{k, i}) = \langle \mathbf{g}_{k, i}, \mathbf{h}_{k, i} \rangle\). Moreover, the objective of the training loss is designed to enhance the resemblance between the image feature \(g(\mathbf{x}_{k, i})\) and the text feature created by the ground truth prompt \(\mathbf{p}_{y_{k, i}}\).

\textbf{Signal-noise decomposition} Based on the above model, we introduce the signal-noise decomposition of the learnable prompt here. Note that the proofs of the following lemmas and theorems are listed in the appendix.
\begin{lemma}[\textbf{Feature Representation}]
    \label{main_lemma}
    At the \(t\)-th iteration, the learnable prompt \(\mathbf{p}_{k}^{(t)}\) for client \(k\) and the aggregated prompt \(\overline{\mathbf{p}}^{(t)}\) can be rewritten as a linear combination of features and prompt initialization:
    \begin{align}
        \nonumber \mathbf{p}_{k}^{(t)} &= \beta_{k}^{(t)} ||\bm{\mu}_G||_2^{-2} \bm{\mu}_G + \sum\limits_{k'=1}^K (\alpha_{k, k'}^{(t)} \mathbf{p}_{k'}^{(0)} + \gamma_{k, k'}^{(t)} ||\bm{\mu}_{k'}||_2^{-2} \bm{\mu}_{k'}) + \sum\limits_{l = 1}^L \phi_{k, l}^{(t)} ||\bm{\xi}_{l}||_2^{-2} \bm{\xi}_{l}, \\
        \overline{\mathbf{p}}^{(t)} &= \overline{\beta}^{(t)} ||\bm{\mu}_G||_2^{-2} \bm{\mu}_G + \sum\limits_{k=1}^K (\overline{\alpha}_{k}^{(t)} \mathbf{p}_{k}^{(0)} + \overline{\gamma}_{k}^{(t)} ||\bm{\mu}_{k}||_2^{-2} \bm{\mu}_{k}) + \sum\limits_{l = 1}^L \overline{\phi}_{l}^{(t)} ||\bm{\xi}_{l}||_2^{-2} \bm{\xi}_{l}.
    \end{align}
    where \(\alpha_{\cdot, \cdot}^{(t)}\) are the coefficients of initialization, \(\beta_{\cdot}^{(t)}\) is the coefficient of global task-relevant features, \(\gamma_{\cdot, \cdot}^{(t)}\) is the coefficient of local task-relevant features, \(\phi_{\cdot, \cdot}^{(t)}\) are the coefficients of task-irrelevant features, and the overlined coefficients are the averaged versions of the original coefficients.
\end{lemma}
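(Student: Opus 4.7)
The plan is induction on the iteration index $t$, using the key structural fact that every local gradient step produces an update lying in the column space of $\mathbf{W}^{T}$, which by the weight-matrix assumption equals $\operatorname{span}\{\bm{\mu}_G,\bm{\nu}_1,\ldots,\bm{\nu}_S,\bm{\xi}_1,\ldots,\bm{\xi}_L\}$. The base case $t=0$ is immediate: choose $\alpha_{k,k'}^{(0)}=\mathds{1}\{k'=k\}$ and $\beta_{k}^{(0)}=\gamma_{k,k'}^{(0)}=\phi_{k,l}^{(0)}=0$, which makes the right-hand side collapse to $\mathbf{p}_{k}^{(0)}$.

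For the inductive step, I would differentiate $\langle \mathbf{g}_{k,i},\mathbf{h}_{k,i}\rangle$ through the text encoder in \eqref{def of text encoder}. A direct chain-rule calculation expresses the local gradient in the compact form
\[
\mathbf{W}^{T}\Bigl[\bigl(\sigma'(\mathbf{W}\mathbf{p}_{k}+\mathbf{W}\mathbf{p}_{y_{k,i}})+\sigma'(-\mathbf{W}\mathbf{p}_{k}+\mathbf{W}\mathbf{p}_{y_{k,i}})\bigr)\odot \mathbf{g}_{k,i}\Bigr].
\]
Because the columns of $\mathbf{W}^{T}$ are exactly $\bm{\mu}_G,\bm{\nu}_1,\ldots,\bm{\nu}_S,\bm{\xi}_1,\ldots,\bm{\xi}_L$, this gradient is automatically a linear combination of those features, with each coefficient read off directly from the matching entry of the bracket. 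Moreover, since the image feature $\mathbf{g}_{k,i}$ carries nonzero mass only along $\bm{\mu}_G$, along the client-specific signal $\bm{\mu}_k$, and along the $\bm{\xi}_l$'s, the coefficient attached to any $\bm{\nu}_{s'}\neq \bm{\mu}_k$ vanishes identically. Hence a local step only perturbs $\beta_{k}^{(t)}$, $\gamma_{k,k}^{(t)}$, and the $\phi_{k,l}^{(t)}$'s, and leaves every other coefficient intact.

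The aggregation step is handled by linearity: from $\overline{\mathbf{p}}^{(t)}=\sum_{k}(n_k/n)\,\mathbf{p}_{k}^{(t)}$ one immediately reads off $\overline{\beta}^{(t)}=\sum_{k}(n_k/n)\beta_{k}^{(t)}$, $\overline{\alpha}_{k}^{(t)}=\sum_{k'}(n_{k'}/n)\alpha_{k',k}^{(t)}$, $\overline{\gamma}_{k}^{(t)}=\sum_{k'}(n_{k'}/n)\gamma_{k',k}^{(t)}$, and $\overline{\phi}_{l}^{(t)}=\sum_{k}(n_k/n)\phi_{k,l}^{(t)}$, so the averaged prompt inherits exactly the claimed form. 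Redistributing $\overline{\mathbf{p}}^{(t)}$ back to client $k$ at the start of the next round is precisely what injects every other client's local-signal component $\bm{\mu}_{k'}$ and initialization $\mathbf{p}_{k'}^{(0)}$ into $\mathbf{p}_{k}$, producing the full double sum $\sum_{k'}\bigl(\alpha_{k,k'}\mathbf{p}_{k'}^{(0)}+\gamma_{k,k'}\|\bm{\mu}_{k'}\|_{2}^{-2}\bm{\mu}_{k'}\bigr)$ appearing in the client-level decomposition.

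The main obstacle is not any single calculation but the bookkeeping across the interleaving of local gradient steps---which touch only $\bm{\mu}_G$, $\bm{\mu}_k$, and the $\bm{\xi}_l$'s for client $k$---and aggregation rounds---which mix every client's local signal and initialization into every other client's prompt. The mutual orthogonality of $\bm{\mu}_G,\bm{\nu}_s,\bm{\xi}_l$ decouples the four families of coefficients $\{\beta,\alpha,\gamma,\phi\}$, which is what ultimately makes the recurrence tractable and cleanly isolates signal learning from noise memorization for the dynamical analysis in the later lemmas.
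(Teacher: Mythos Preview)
Your proposal is correct and follows essentially the same approach as the paper: compute the gradient of the similarity through the text encoder, observe that it is a linear combination of the rows $\mathbf{w}_r$ of $\mathbf{W}$ (equivalently, lies in the column space of $\mathbf{W}^T$), and then read off the coefficient updates row by row, with aggregation handled by linearity. The paper presents this more tersely as explicit per-coefficient recurrences rather than framing it as an induction, but the substance is identical.
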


 Here, since the learnable prompts can be written as a linear combination of the features, we can analyze the dynamics of these coefficients to understand the learning progress of the prompts. The normalized factor such as $||\bm\mu_G||_2^{-2}$ is used to make the coefficient similar to the inner product of the prompts and the features, \(\beta_{k}^{(t)} \approx \langle \mathbf{p}_{k}^{(t)}, \bm{\mu}_{G} \rangle\).
 
\textbf{Coefficient dynamics} Inspired by previous studies \cite{caoBenignOverfittingTwolayer2022, kouBenignOverfittingTwolayer2023, huangUnderstandingConvergenceGeneralization2023}, we employ a two-phase analysis to track the dynamics of coefficients in prompt-based federated learning from vision-language foundation models. By analyzing the dynamics of the coefficients, we can obtain the feature learning procedure during training. This two-stage analysis allows us to establish the order of coefficients and explore how they are affected by the mixing parameter \(\theta\). For the theorem and proof of this analysis, please refer to Appendix \ref{Appendix: two-stage analysis}.

\begin{theorem}[\textbf{Training Dynamics}]
    \label{Stage1main}
    There exists a total number of local updates $T_1 = R_1E = O(\eta^{-1}K n \sigma_p^2 \sigma_L^2) $ such that
    \begin{equation}
\begin{aligned}
&\overline{\beta}^{(T_1)}  = \Theta(\overline{n} K \SNR_G^2),
&\overline{\gamma}_{k}^{(T_1)} = \Theta(\overline{n} \chi_k \SNR_k^2),\quad 
&\overline{\phi}_{l}^{(T_1)} = O(1) \quad \forall k \in [K], l \in [L].
\end{aligned}
\end{equation}
Here, $\overline{n} = \sum_{k} n_k / K$ is the average number of data in each client, and $\SNR_G = ||\bm\mu_G||/(\sigma_p\sqrt{m})$, $\SNR_k = ||\bm\mu_k||/(\sigma_p\sqrt{m})$ denote the signal-to-noise ratio between the task-relevant feature and task-irrelevant feature. We define $\chi_k = \sum_{k'=1}^K\langle \bm\mu_k, \bm\mu_{k'}\rangle/ ||\bm\mu_k||_2^2$.
\end{theorem}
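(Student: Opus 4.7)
The plan is to translate the FedAvg prompt update into coupled scalar recursions for the coefficients $\beta_k^{(t)}, \gamma_{k,k'}^{(t)}, \phi_{k,l}^{(t)}$ from Lemma~\ref{main_lemma}, and then run a two-stage signal-learning versus noise-memorization analysis in the spirit of \cite{caoBenignOverfittingTwolayer2022, huangUnderstandingConvergenceGeneralization2023}.

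First I would compute $\nabla_{\mathbf{p}_k}\ell(\bm{\rho}_{k,i},\mathbf{e}_{y_{k,i}})$ via the chain rule through the text encoder (\ref{def of text encoder}) and the bilinear similarity with $\mathbf{g}_{k,i}$. Since the feature directions $\bm{\mu}_G, \bm{\nu}_s, \bm{\xi}_l$ are mutually orthogonal, projecting each local gradient step onto a single direction isolates exactly one coefficient and, after absorbing the $\|\cdot\|_2^{-2}$ normalizers of Lemma~\ref{main_lemma}, produces schematic updates
\begin{align*}
\beta_k^{(t+1)}-\beta_k^{(t)} &= -\eta\,\Theta\!\Bigl(\textstyle\sum_i \ell'_{k,i}\, y_{k,i}\Bigr)\|\bm{\mu}_G\|_2^{2},\\
\gamma_{k,k'}^{(t+1)}-\gamma_{k,k'}^{(t)} &= -\eta\,\Theta\!\Bigl(\textstyle\sum_i \ell'_{k,i}\, y_{k,i}\Bigr)\langle\bm{\mu}_{k'},\bm{\mu}_k\rangle,\\
\phi_{k,l}^{(t+1)}-\phi_{k,l}^{(t)} &= -\eta\,\Theta\!\Bigl(\textstyle\sum_i \ell'_{k,i}\, x_{k,i,l}\Bigr)\|\bm{\xi}_l\|_2^{2}.
\end{align*}
The crucial asymmetry is that each signal increment picks up a fixed sign from $y_{k,i}$ and scales with $\|\bm{\mu}\|_2^{2}$, whereas the noise increment is a mean-zero sub-Gaussian sum with per-step standard deviation of order $\sigma_p\|\bm{\xi}_l\|_2\sqrt{n_k}$, producing the $\SNR\cdot\sqrt{m}$ gap that will ultimately separate the trajectories.

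Next I would set up the stage-1 induction under the hypotheses that the per-sample derivatives $\ell'_{k,i}$ remain $\Theta(1)$ (since all logits stay small while the signals are still growing) and that $\overline{\phi}_l^{(s)}=O(1)$ for every $s\le t$. Summing increments over $E$ local steps across $R_1$ rounds and averaging across clients via FedAvg gives $\overline{\beta}^{(T_1)}=\Theta(\eta T_1\,\overline{n}/(\sigma_p^2 m))\cdot K\|\bm{\mu}_G\|_2^{2}$, with the factor $K$ coming from the fact that every client contributes to the global signal, and $\overline{\gamma}_k^{(T_1)}=\Theta(\eta T_1\,\overline{n}/(\sigma_p^2 m))\cdot\chi_k\|\bm{\mu}_k\|_2^{2}$, where $\chi_k=\sum_{k'}\langle\bm{\mu}_k,\bm{\mu}_{k'}\rangle/\|\bm{\mu}_k\|_2^{2}$ emerges automatically when aggregating the $\gamma_{k,k'}$ contributions from every client $k'$ whose local signal overlaps with $\bm{\mu}_k$. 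Setting $T_1=\Theta(\eta^{-1}K n\sigma_p^{2}\sigma_L^{2})$ and converting via $\SNR_G=\|\bm{\mu}_G\|_2/(\sigma_p\sqrt{m})$ and $\SNR_k=\|\bm{\mu}_k\|_2/(\sigma_p\sqrt{m})$ yields exactly $\Theta(\overline{n}K\SNR_G^{2})$ and $\Theta(\overline{n}\chi_k\SNR_k^{2})$.

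The main obstacle will be closing the $\overline{\phi}_l^{(t)}=O(1)$ induction, because each $x_{k,i,l}\sim\mathcal{N}(0,\sigma_p^2)$ gives the noise update random signs and only a concentration argument can prevent cumulative drift. I would run a martingale/Hoeffding-type bound on the partial sums $\sum_{s\le t}\sum_{k,i}\ell'_{k,i}x_{k,i,l}$, combined with the inductive assumption on $\ell'_{k,i}$, and close the loop via a union bound over $s\le T_1$ and $l\in[L]$; the scaling $T_1=\Theta(\eta^{-1}Kn\sigma_p^{2}\sigma_L^{2})$ is precisely calibrated so that even the worst-case concentrated noise coefficient remains $O(1)$. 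A secondary subtlety is that aggregation happens only every $E$ local steps, so I must verify that within-round drift of each local coefficient is dominated by its round-level increment, which follows once $\eta E\,\overline{n}\max_l\|\bm{\xi}_l\|_2^{2}\ll 1$; this condition is automatic under the scaling assumed for $T_1$. Substituting the resulting coefficient bounds back into Lemma~\ref{main_lemma} gives the three claims of the theorem.
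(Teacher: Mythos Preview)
Your overall architecture matches the paper: project each local gradient step onto the orthogonal feature directions to isolate scalar recursions, invoke $\ell'_{k,i}=\Theta(1)$ throughout stage one (the paper isolates this as a preliminary lemma), telescope over the $E$ local steps within a round, and extract $\chi_k$ from the FedAvg averaging of the per-client $\gamma$ contributions. The paper carries out exactly this round-by-round bookkeeping for matching upper and lower linear bounds, and then obtains the $\overline{\beta}$ claim simply by observing that $\bm{\mu}_G$ is the special case with $\chi_k=K$.

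The gap is in your noise analysis. You treat the per-step noise increment as a mean-zero sub-Gaussian sum and propose a martingale/Hoeffding bound on $\sum_{s\le t}\sum_{k,i}\ell'_{k,i}\,x_{k,i,l}$ to ``prevent cumulative drift.'' But the dataset is fixed throughout training: the same realizations $x_{k,i,l}$ recur at every step $s$, so the summands across $s$ are deterministically coupled, there is no martingale filtration over time, and the noise coefficient genuinely grows linearly in $t$ rather than like $\sqrt{t}$. The paper does not try to prevent this drift. Following \cite{caoBenignOverfittingTwolayer2022}, it splits each $\phi$ into two monotone pieces $\psi$ and $\varphi$ according to the sign of $y_{k,i}$, applies the \emph{one-shot} concentration $\|\bm{\xi}_{k,i}\|_2^{2}=\Theta(\sigma_p^{2}\sigma_L^{2})$ from Lemma~\ref{bound of noise length} to bound each step's growth deterministically, and then tracks the resulting linear rate through FedAvg averaging (which inserts a $1/K$ factor for noise just as it inserts $\chi_k/K$ for the local signal). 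The horizon $T_1$ is then \emph{defined} as the moment the noise has climbed to $\Theta(1)$; the separation in the theorem comes entirely from the ratio of the two linear growth rates, which is precisely $\Theta(\overline{n}\chi_k\SNR_k^{2})$, not from any concentration over iterations.
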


\textbf{Test performance evaluation with coefficients} Here, we suppose the classification output of the \(i\)-th data in client \(k\) is the class corresponding to the highest similarity between the text feature and image feature, denoted as \(\hat{y}_{k,i}\). To assess the algorithm's performance, we evaluate the error rate in the test procedure as our test loss \(L_{\mathcal{D}}\):
\begin{equation}
    \begin{aligned}
         L_{\mathcal{D}}(\overline{\mathbf{p}}) &=\frac{1}{n}\sum\limits_{k=1}^K\sum\limits_{i=1}^{n_k} \mathds{1}(\hat{y}_{k,i} = y_{k, i}).
    \end{aligned}
\end{equation}
The following theorem demonstrates that the test loss can be considered as the probability that a Gaussian random variable falls below zero, with the mean and variance influenced by the task-relevant and task-irrelevant coefficients.
\begin{theorem}[\textbf{Test Loss}]
    \label{lemma of test}
    The expectation of test loss $L_\mathcal{D}$ of an algorithm can be treated as the probability
    \begin{align}
        \mathds{E}\left[L_{\mathcal{D}}\right] := P(z < 0), \qquad z \sim \mathcal{N}(\mu, \sigma^2),
    \end{align}
    where $\mu$ and $\sigma$ are functions of task-relevant and task-irrelevant coefficients, as defined in Appendix \ref{analysis framework}. 
\end{theorem}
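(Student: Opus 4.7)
The plan is to reduce the expected test loss to a one-dimensional probability over the randomness in a single test example, by exploiting the signal--noise decomposition in Lemma \ref{main_lemma}. First I would substitute the decomposition of $\overline{\mathbf{p}}$ into the text encoder \eqref{def of text encoder}, and use the assumption that the rows of $\mathbf{W}$ are the mutually orthogonal features $\bm{\mu}_G, \bm{\nu}_1, \ldots, \bm{\nu}_S, \bm{\xi}_1, \ldots, \bm{\xi}_L$. Under this orthogonality, $\mathbf{W}\overline{\mathbf{p}}$ becomes a vector whose non-negligible coordinates are (up to initialization terms that the two-stage analysis already shows to be small) precisely the coefficients $\overline{\beta}$, $\{\overline{\gamma}_k\}$, and $\{\overline{\phi}_l\}$. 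The class prompt $\mathbf{p}_{y_{k,i}}$ introduces a $y$-dependent shift inside $\sigma$, and after the subtraction $\sigma(\mathbf{W}\overline{\mathbf{p}} + \mathbf{W}\mathbf{p}_{y_{k,i}}) - \sigma(-\mathbf{W}\overline{\mathbf{p}} + \mathbf{W}\mathbf{p}_{y_{k,i}})$ the expression reduces to one that is linear in these coefficients on each feature coordinate.

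Next I would form the margin
\[
F(\mathbf{x}_{k,i}) \;:=\; \langle \mathbf{g}_{k,i}, h(\overline{\mathbf{p}}, \mathbf{p}_{y_{k,i}})\rangle \;-\; \langle \mathbf{g}_{k,i}, h(\overline{\mathbf{p}}, \mathbf{p}_{-y_{k,i}})\rangle,
\]
so that the correctness event $\hat{y}_{k,i} = y_{k,i}$ is equivalent to $F > 0$. Using the explicit coordinate structure of $\mathbf{g}_{k,i}$ (its first coordinate equals $y_{k,i}$, its $(1{+}s)$-th coordinate equals $y_{k,i}$ when $\bm{\mu}_k = \bm{\nu}_s$, and its last $L$ coordinates are independent $x_{k,i,l} \sim \mathcal{N}(0,\sigma_p^2)$), the inner product splits into a signal piece built from the task-relevant coefficients $\overline{\beta}$, $\{\overline{\gamma}_k\}$ and the feature norms $\|\bm{\mu}_G\|, \|\bm{\mu}_k\|$, and a noise piece of the form $\sum_{l=1}^L c_l \, x_{k,i,l}$ where each $c_l$ is proportional to $\overline{\phi}_l$ (with a factor coming from the $\sigma$-activation on the class-prompt shift). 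The signal piece is deterministic given the trained prompt, while the noise piece is a linear combination of independent centered Gaussians.

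Since any linear combination of independent Gaussians is Gaussian, $F(\mathbf{x}_{k,i})$ is Gaussian with mean $\mu$ expressible through $\overline{\beta}$, $\{\overline{\gamma}_k\}$ and variance $\sigma^2 = \sigma_p^2 \sum_l c_l^2$ expressible through $\{\overline{\phi}_l\}$ and $\|\bm{\xi}_l\|$. Taking the expectation of the indicator over the test example then gives
\[
\mathbb{E}[L_{\mathcal{D}}] \;=\; P\bigl(y_{k,i} F(\mathbf{x}_{k,i}) \le 0\bigr) \;=\; P(z < 0), \qquad z \sim \mathcal{N}(\mu, \sigma^2),
\]
with the sign convention chosen so that $\mu$ is positive when signal learning dominates; the closed forms of $\mu$ and $\sigma^2$ are the ones deferred to Appendix \ref{analysis framework}. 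The main obstacle is the nonlinearity $\sigma$ in \eqref{def of text encoder}: I have to verify that on the task-irrelevant coordinates the two $\sigma$-terms combine to leave a genuinely linear dependence on $x_{k,i,l}$, rather than producing a censored or folded contribution, so that the resulting margin is exactly Gaussian rather than merely sub-Gaussian; handling this cleanly on each coordinate (using the class-prompt shift $\mathbf{W}\mathbf{p}_y$ as the activation bias) is the step that requires care.
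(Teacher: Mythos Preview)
Your plan is essentially the paper's own argument: form the two-class margin, use the coordinate structure of $\mathbf{g}_{k,i}$ to separate a deterministic signal part (built from $\overline{\beta},\overline{\gamma}_k$) from a Gaussian noise part (built from $\overline{\phi}_l$), and conclude that the margin is $\mathcal{N}(\mu,\sigma^2)$. The paper organizes this by first defining $F_+(\mathbf{p}),F_-(\mathbf{p})$ and showing $F_y-F_{-y}=y(F_+-F_-)$, then writing $\langle F_+-F_-,\mathbf{g}\rangle$ coordinate-wise as $y f_G+y f_s+\sum_j x_j f_{K+j}$ so that, after multiplying by $y$, the noise appears as $\sum_j (y x_j)\,f_{K+j}$, which is Gaussian by symmetry and independence of $y$ and $x_j$. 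So your route and the paper's coincide.

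One point you should correct, though, is your ``main obstacle''. You worry that on the task-irrelevant coordinates the two $\sigma$-terms might fail to combine linearly in $x_{k,i,l}$. But the nonlinearity $\sigma$ in \eqref{def of text encoder} acts only on $\mathbf{W}\mathbf{p}+\mathbf{W}\mathbf{p}_y$, i.e.\ on the \emph{text} side; once the prompt is fixed, $h(\overline{\mathbf{p}},\mathbf{p}_c)$ is a fixed vector in $\mathbb{R}^m$. The Gaussian coordinates $x_{k,i,l}$ live in the image feature $\mathbf{g}_{k,i}$ and enter only through the bilinear similarity $\langle\mathbf{g}_{k,i},\mathbf{h}_{k,i}\rangle$, hence \emph{automatically} linearly. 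The paper's coordinate functions $f_r(\cdot)$ absorb all the $\sigma$-nonlinearity on the text side; their exact form is irrelevant because they are constants in the randomness of the test point. So the Gaussianity of the margin is immediate, not delicate. (Relatedly, since your $F$ is already the true-vs-wrong margin, the misclassification event is $F\le 0$; the extra factor of $y_{k,i}$ in your final display is spurious.)
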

Drawing from this theorem and the properties of Gaussian distributions, an algorithm's performance can be evaluated by the ratio \(\mu/\sigma\). This ratio highlights the influence of task-relevant and task-irrelevant features on test loss. Specifically, a higher task-relevant coefficient coupled with a lower task-irrelevant coefficient typically leads to better performance.

\textbf{Connection with portfolio optimization} The Markowitz mean-variance model is a famous framework for assembling a portfolio of assets such that the expected return is maximized for a given level of risk \cite{markowitzPortfolioSelection1952, markowitzMeanvarianceAnalysisPortfolio2000}. This model characterizes assets by their expected returns and risks. It claims that the return of the whole portfolio is a proportionally weighted combination of the assets' returns, and the risk of the whole portfolio is a function of the correlations of the component assets. According to the properties of task-relevant and task-irrelevant coefficients, the task-relevant coefficient can be directly added, and the task-irrelevant feature follows the additive property of Gaussian random variables. Thus, we connect the task-relevant coefficient to the return and the task-irrelevant feature to the risk. This connection provides insight that the combination of prompts, i.e., a prompt portfolio, will lead to a higher ratio of task-relevant features to task-irrelevant features.

\section{PromptFolio: Global-local Prompt Portfolio for Federated Learning}
\label{Methodology} 
Building on the significant connection between the feature learning process and portfolio optimization, we treat the prompt trained by CoOp and the prompt trained by PromptFL as the two prompt assets and propose a simple yet powerful mixing algorithm, PromptFolio \footnote{PromptFolio is pronounced as \textipa{/pr\textturna mpt"foUlioU/}.}. For simplicity, we refer to the prompt trained by CoOp as the local prompt $\mathbf{p}_L$ and the prompt trained by PromptFL as the global prompt $\mathbf{p}_G$.
\begin{algorithm}[htbp]
\caption{(PromptFolio) Global-local Prompt Portfolio\label{main_algo}}
\begin{algorithmic}[1] 
\State Initialize $\mathbf{p}_G$ and $\mathbf{p}_{L, k}$ for all clients $k$
\State $t \gets 0$ \Comment{Initialization of the iteration counter}
\While{not converged}
    \For{each client $k$ in parallel}
        \State Send $\mathbf{p}_G^{(t)}$ to client $k$, $\mathbf{p}_{G, k}^{(t)} \gets \mathbf{p}_G^{(t)}$
        \For{each sample $(\mathbf{x}_{k, i}, y_{k, i})$ in client $k$'s data}
            \State Compute $\mathbf{g}_{k, i} \gets g(\mathbf{x}_{k, i})$
            \For{$c \gets 1$ to $C$}
                \State Compute $\mathbf{h}_{k, i, c} \gets (1-\theta) \cdot h(\mathbf{p}_{G, k}^{(t)}, \mathbf{p}_c) + \theta \cdot h(\mathbf{p}_{L, k}^{(t)}, \mathbf{p}_c)$
                \State Compute similarity $\rho_{k, i, c} \gets \text{sim}(\mathbf{g}_{k, i}, \mathbf{h}_{k, i, c})$
            \EndFor
            \State Update $\mathbf{p}_{G, k}, \mathbf{p}_{L, k}$ by minimizing train loss $\ell(\bm{\rho}_{k, i}, \mathbf{e}_{y_{k, i}})$
        \EndFor
        \State Send $\mathbf{p}_{G, k}^{(t+1)}$ to server
    \EndFor
    \State Update $\mathbf{p}_G^{(t+1)} \gets \sum_{k = 1}^K \frac{n_k}{n} \mathbf{p}_{G, k}^{(t+1)}$ \Comment{FedAvg to aggregate global prompt}
    \State $t \gets t + 1$
\EndWhile
\State \Return $\mathbf{p}_G$, $\mathbf{p}_{L, k}$ for all $k$
\end{algorithmic}
\end{algorithm}
\subsection{PromptFolio Method}
The local learning process generates the local feature by including a specific local prompt, whereas the global learning process adopts a similar strategy but also uses FedAvg to compile learnable prompts from various clients. We enhance cooperation between the local and global learning processes by merging both local and global features to create the final text feature. The text feature is produced as follows:
\begin{align}
    \label{mix of text feature}
    \mathbf{h}_{k, i, c} = (1- \theta)\cdot h(\mathbf{p}_G, \mathbf{p}_c)+ \theta \cdot h(\mathbf{p}_{L, k}, \mathbf{p}_c),
\end{align}
where $\theta\in [0, 1]$ serves as a coefficient to balance the mix of the two features, which addresses the balancing between personalization and generalization. The variation in the parameter $\theta$ influences the outcomes of the inference. Specifically, when $\theta = 0$, the algorithm reverts to PrompFL \cite{PromptFL}, whereas at $\theta = 1$, it shifts to CoOp \cite{CoOp}. Our approach consists of combining these features and using the resulting mixed feature to determine their similarity. This feature is subsequently utilized to evaluate the similarity between text and image features. Note that this algorithm differs from typical personalized algorithms as it focuses on integrating text features instead of adjusting training weights. The framework of PromptFolio is described in Algorithm \ref{main_algo}.

\subsection{Analysis for PromptFolio}
In this part, we offer a theoretical demonstration of the performance advantage of PromptFolio and the selection of the optimal mixing coefficient $\theta$. According to Theorem \ref{lemma of test}, each algorithm can be regarded as a Gaussian random variable. The test performance correlates with the ratio of task-relevant features to task-irrelevant features. This ratio enables us to analyze the test results of various learning algorithms.
\begin{figure}[tbp]
\centering

  \includegraphics[width=\linewidth]{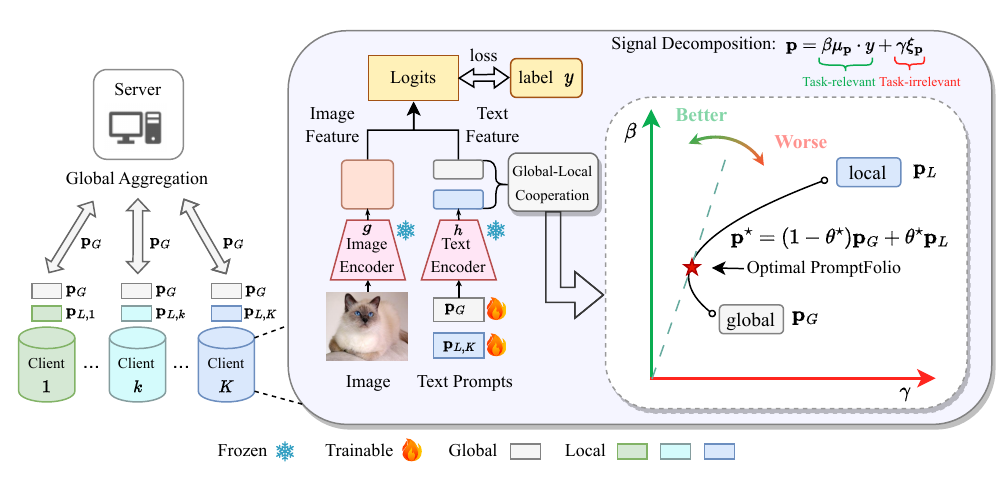}
  \label{fig:framework}

 \caption{The image demonstrates the framework of the PromptFolio algorithm. The algorithm updates the global prompt and local prompt while keeping the weights of the fixed vision-language pretrained model unchanged. Additionally, it aggregates the global prompts from each client. The right side of the image intuitively demonstrates the advantages of global-local cooperation for performance when global and local are treated as two assets.}
\end{figure}

  Suppose that the coefficients of the prompt via local training at step $k$ are $\beta_k, \gamma_k$ and $\phi_k$, and the coefficients of the global prompt at step $k$ are  $\overline{\beta}_k, \overline{\gamma}_k$ and $\overline{\phi}_k$. We define the mean and variance of the Gaussian variable corresponding to the local prompt as $\mu_k$ and $\sigma_k$, and the mean and variance of the Gaussian variable corresponding to the global prompt as $\overline{\mu}_k$ and $\overline{\sigma}_k$. Let $\rho = \Theta(1/k) \in [0, 1]$ be the correlation between the Gaussian variables of the local prompt and the global prompt. Here, we define $a := \frac{\mu_k}{ \overline{\mu}_k} = \Theta(\frac{\beta_k + \gamma_k}{\overline{\beta}_k + \overline{\gamma}_k}) = \Theta(\frac{K\SNR_G + K\SNR_k}{K\SNR_G +\chi_k\SNR_k})$ and $b:= \frac{\sigma_k}{\overline{\sigma}_k} = O(\frac{\phi_k}{\overline{\phi}_k}) = O(K)$ as the ratio of different coefficients. Here, the order of $a$ and $b$ depends on the coefficient derived in Lemma \ref{Stage1main}. As a result, we have the following theorem, and the proof can be referred to in Appendix \ref{Analysisofmodel}.
\begin{theorem}[\textbf{PromptFolio Advantage}]
    \label{maintheorem}
    The mixed PromptFolio algorithm has a lower test loss than the mixing test loss of CoOp and PromptFL:
    \begin{align}
        L_{\mathcal{D}}((1-\theta) \mathbf{p}_G + \theta \mathbf{p}_L) \leq (1-\theta)  L_{\mathcal{D}}(\mathbf{p}_G) + \theta L_{\mathcal{D}}(\mathbf{p}_L)
    \end{align}
    \begin{equation}
        \begin{aligned}
            \forall \ \theta &\in \left[0, \underset{[0,1]}{\text{proj}}\left(\frac{C_b - C_c}{2C_a}\right)\right],
        \end{aligned}\text{ where }\left\{
        \begin{aligned}
            \nonumber C_a &= (b-a)(b^2 + 2\rho b + 1)\\
        C_b &= (a+b)(b^2 - 1) - 4b(\rho b - 1)\\
        \nonumber C_c &= (b-1)\sqrt{(a+b)^2 (b + 1)^2 - 8ab^2(\rho + 1)^2}
        \end{aligned}\right..
    \end{equation}
\end{theorem}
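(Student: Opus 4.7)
The plan is to apply the Test Loss Theorem (Theorem \ref{lemma of test}) to express each of the three losses as $\Phi(-\mu/\sigma)$ with $\Phi$ the standard normal CDF, and then to separate the inequality into a Jensen step that exploits the convexity of $\Phi$ on $(-\infty,0]$ and an algebraic step that reduces to a quadratic condition on $\theta$.

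First, I would identify the Gaussian associated with the mixed prompt. Because the signal-noise decomposition of Lemma \ref{main_lemma} is linear in the prompt, and the similarity score that defines $z$ in Theorem \ref{lemma of test} is a linear functional of those coefficients, the variable associated with $(1-\theta)\mathbf{p}_G + \theta \mathbf{p}_L$ is $z_{\text{mix}} = (1-\theta) z_G + \theta z_L$, which is jointly Gaussian with mean $\mu_{\text{mix}}(\theta) = (1-\theta)\overline{\mu}_k + \theta \mu_k$ and variance $\sigma_{\text{mix}}^2(\theta) = (1-\theta)^2 \overline{\sigma}_k^2 + 2\theta(1-\theta)\rho\, \overline{\sigma}_k \sigma_k + \theta^2 \sigma_k^2$. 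Writing $r_1 := \overline{\mu}_k/\overline{\sigma}_k$ and $r_2 := \mu_k/\sigma_k = (a/b)\, r_1$, the LHS of the target inequality becomes $\Phi(-r_1 R(\theta))$, where $R(\theta) := [(1-\theta) + \theta a]\big/\sqrt{(1-\theta)^2 + 2\theta(1-\theta)\rho b + \theta^2 b^2}$.

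Next, I would note that since $r_1, r_2 \geq 0$ (better-than-chance classification), all three CDF arguments lie in $(-\infty,0]$, where $\Phi''(x) = -x\phi(x) \geq 0$. Convexity gives $(1-\theta)\Phi(-r_1) + \theta\Phi(-r_2) \geq \Phi\bigl(-(1-\theta)r_1 - \theta r_2\bigr)$, so by monotonicity of $\Phi$ it suffices to establish the pointwise ratio bound $r_1 R(\theta) \geq (1-\theta) r_1 + \theta r_2 = r_1[(1-\theta) + \theta a/b]$. Cancelling $r_1$ and squaring then turns the claim into a polynomial inequality $u(\theta)^2 \geq v(\theta)^2 w(\theta)^2$, where $u(\theta) = (1-\theta) + \theta a$, $v(\theta) = (1-\theta) + \theta a/b$ are affine, and $w(\theta)^2$ is the normalized variance.

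Finally, I would observe that direct substitution makes both $\theta = 0$ and $\theta = 1$ saturate the squared inequality, so the degree-four polynomial $u^2 - v^2 w^2$ factors as $\theta(1-\theta) Q(\theta)$ for some quadratic $Q$. Expanding and matching coefficients in $a, b, \rho$ identifies $Q$; its roots are exactly $(C_b \pm C_c)/(2 C_a)$ for the stated $C_a, C_b, C_c$, and the claimed range $\bigl[0,\, \mathrm{proj}_{[0,1]}((C_b - C_c)/(2 C_a))\bigr]$ is precisely the intersection of $\{Q \geq 0\}$ with $[0,1]$. The main obstacle will be the algebraic bookkeeping in this final step: expanding $u^2 - v^2 w^2$ and carrying out the factorization so that the quadratic factor matches the stated $C_a, C_b, C_c$ with the correct signs, plus verifying that the squaring step preserves equivalence (i.e., that $u(\theta)$ and $v(\theta) w(\theta)$ both remain nonnegative over the relevant $\theta$-range, which holds under the mild assumption $a, b \geq 0$ from Theorem \ref{Stage1main}).
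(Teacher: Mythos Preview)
Your approach is essentially the paper's: both identify $z_{\text{mix}}=(1-\theta)z_G+\theta z_L$, reduce to the ratio inequality $\mu_{\text{mix}}/\sigma_{\text{mix}}\ge(1-\theta)\,\overline\mu_k/\overline\sigma_k+\theta\,\mu_k/\sigma_k$, and then factor the resulting quartic as $\theta(1-\theta)Q(\theta)$ to read off the threshold $(C_b-C_c)/(2C_a)$. If anything you are more careful, since the paper jumps directly from the ratio inequality to the test-loss inequality while you correctly insert the Jensen step via convexity of $\Phi$ on $(-\infty,0]$; one small correction to your justification is that $z_{\text{mix}}=(1-\theta)z_G+\theta z_L$ holds not because the similarity is a linear functional of the coefficients (the encoder $h$ contains the nonlinearity $\sigma$) but because PromptFolio mixes the text \emph{features} linearly as in equation~(\ref{mix of text feature}) and $\langle\mathbf g,\mathbf h\rangle$ is linear in $\mathbf h$.
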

The results discussed demonstrate how combining global and local text features enhances performance and illustrate the optimal way to balance personalization with generalization. Here, $a$ and $b$ are the ratio of coefficients and reveal how the global feature and local feature interact. Drawing on principles from portfolio optimization, which involves blending two assets that are not perfectly correlated, we can construct a portfolio that maximizes returns while minimizing risk. Given the characteristics of Gaussian random variables, we intuitively correlate the coefficient of task-relevant features with returns and the coefficient of task-irrelevant features with risk. Thus, the first part of Theorem \ref{maintheorem} provides a rationale that a well-balanced portfolio of prompt features can significantly improve performance.

Similar to the portfolio optimization problem, we can also derive the optimal mixing coefficient $\theta$. 
\begin{theorem}[\textbf{Optimal Mixing Coefficient}]
    The optimal mixing coefficient $\theta^\star$ follows
    \begin{align}
        \theta^\star = \underset{[0,1]}{\text{proj}}\left(\frac{a - \rho b}{(a+b^2)-\rho b(a + 1)}\right).
    \end{align}
\end{theorem}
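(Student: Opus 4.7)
The plan is to reduce the optimization of $\theta$ to maximizing a signal-to-noise ratio of a Gaussian, and then to carry out a direct one-variable calculus argument that, somewhat fortuitously, collapses from a quadratic to a linear equation. By Theorem \ref{lemma of test}, the test loss of any prompt is $P(z<0)$ for a Gaussian $z \sim \mathcal{N}(\mu,\sigma^2)$; since $P(z<0) = \Phi(-\mu/\sigma)$ and $\Phi$ is strictly increasing, minimizing the test loss is equivalent to maximizing the ratio $\mu/\sigma$. Hence the optimal $\theta$ is the one that maximizes $\mu_{\mathrm{mix}}(\theta)/\sigma_{\mathrm{mix}}(\theta)$ for the Gaussian attached to $(1-\theta)\mathbf{p}_G + \theta \mathbf{p}_L$.

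The first step is to write down $\mu_{\mathrm{mix}}(\theta)$ and $\sigma_{\mathrm{mix}}^2(\theta)$ explicitly. Linearity of expectation and the bilinearity of covariance give
\begin{equation*}
\mu_{\mathrm{mix}}(\theta) = (1-\theta)\overline{\mu}_k + \theta \mu_k, \qquad \sigma_{\mathrm{mix}}^2(\theta) = (1-\theta)^2 \overline{\sigma}_k^2 + \theta^2 \sigma_k^2 + 2\theta(1-\theta)\rho\, \overline{\sigma}_k \sigma_k,
\end{equation*}
where $\rho$ is the correlation introduced in the setup preceding Theorem \ref{maintheorem}. Dividing numerator by $\overline{\mu}_k$ and denominator by $\overline{\sigma}_k$, and using $a = \mu_k/\overline{\mu}_k$ and $b = \sigma_k/\overline{\sigma}_k$, maximizing $\mu_{\mathrm{mix}}/\sigma_{\mathrm{mix}}$ reduces to maximizing
\begin{equation*}
F(\theta) \;=\; \frac{1 + \theta(a-1)}{\sqrt{(1-\theta)^2 + \theta^2 b^2 + 2\theta(1-\theta)\rho b}}.
\end{equation*}

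The second step is to differentiate. Setting $M(\theta) = 1 + \theta(a-1)$ and $V(\theta)$ equal to the radicand, the first-order condition $2M'(\theta)V(\theta) = M(\theta)V'(\theta)$ expands, after collecting terms in $\theta$, into an equation of the form
\begin{equation*}
(a-1)\bigl[1 - 2\theta(1-\rho b) + \theta^2(1 + b^2 - 2\rho b)\bigr] = \bigl[1 + \theta(a-1)\bigr]\bigl[-(1-\rho b) + \theta(1 + b^2 - 2\rho b)\bigr].
\end{equation*}
The key observation, and the step that makes the closed form possible, is that the coefficient of $\theta^2$ on both sides equals $(a-1)(1+b^2-2\rho b)$ and therefore cancels. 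What remains is linear in $\theta$, and solving directly yields
\begin{equation*}
\theta^\star = \frac{(a-1) + (1 - \rho b)}{(1 + b^2 - 2\rho b) + (a-1)(1 - \rho b)} = \frac{a - \rho b}{(a + b^2) - \rho b(a+1)},
\end{equation*}
which is exactly the claimed expression before projection.

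The third step is to justify that this critical point is a maximum on $[0,1]$ and to enforce the feasibility constraint. Since $F(\theta)^2 = M^2/V$ is a ratio of a quadratic over a strictly positive quadratic (note that $V(\theta) > 0$ on $[0,1]$ because $V$ is the variance of a Gaussian, so $V \geq (1-|\rho|)\min(\overline{\sigma}_k,\sigma_k)^2 > 0$), any interior critical point of $F$ on the domain where $M > 0$ is either the unique interior maximum or the unique interior minimum; comparing $F$ at the critical point to its values at the endpoints (which correspond to pure PromptFL and pure CoOp, whose relative performance is captured by $a$ and $b$) identifies the maximizer. Finally, to keep $\theta \in [0,1]$ as required by the algorithm's convex combination, we clip by the projection $\mathrm{proj}_{[0,1]}(\cdot)$, giving the stated formula. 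The only real obstacle here is the algebraic cancellation of the $\theta^2$ term: once one notices that, the argument is a short derivative computation; the rest is bookkeeping about when the unconstrained optimum lies outside $[0,1]$, which is precisely what the projection handles.
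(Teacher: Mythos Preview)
Your proposal is correct and follows essentially the same approach as the paper: both reduce to maximizing the squared signal-to-noise ratio $M(\theta)^2/V(\theta)$, take the first-order condition, observe that the $\theta^2$ terms cancel so that the stationarity equation is linear, solve it, and then project onto $[0,1]$. The only cosmetic difference is that you normalize by $\overline{\mu}_k,\overline{\sigma}_k$ before differentiating while the paper differentiates first and substitutes $a,b$ afterward; your added third step (checking that the critical point is a maximum and that $V>0$) is extra rigor the paper omits.
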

Theoretically, if we further simplify the mixing coefficient with the order of $a$, $b$ and $\rho$, then we get that
\begin{align}
    \theta^\star = \Theta\left(\frac{(K-\chi_k)\SNR_k}{(K^2 - 1)(K\SNR_G + \chi_k \SNR_k)}\right).
\end{align}
In this theorem, we note that a lower \(\chi_k\) indicates greater data heterogeneity, which lead to a higher \(\theta^\star\). This observation aligns with the intuition that, due to the non-i.i.d. distribution of data, the model should incorporate more local information, thereby making the optimal \(\theta\) closer to 1.

\section{Experiments}\label{section: experiment}
In this section, we conduct experiments with the CLIP model to empirically demonstrate the performance advantages of PromptFolio. Specifically, the image network $g$ and the text network $h$ are components of a pre-trained CLIP model. By evaluating results obtained using various mixing coefficients across different datasets, data distribution, and client number, we align theory with practice. We use the Dirichlet distribution to manage data heterogeneity and employ FedAvg as the aggregation strategy. The experiment is conducted on the CIFAR-100 dataset by default, with the model trained for 10 epochs locally and the results evaluated over 100 communication rounds. 

\subsection{Performance evaluation on various datasets}
\label{casestudy}
In this section, we observe that the combination of global and local algorithms outperforms both the prompt-based federated learning with FedAvg and individual prompt learning approaches. Under the  CLIP model setting, the global and local prompt learning algorithm degenerates to PromptFL and CoOp, respectively. To explore why this global-local collaboration is more effective than either approach alone, we evaluate the accuracy of various mixing coefficients $\theta$ across different datasets. We use CIFAR-100 \cite{krizhevskyLearningMultipleLayers2009}, DomainNet \cite{peng2019moment}, Office-Caltech10 \cite{gongGeodesicFlowKernel2012}, OxfordPets \cite{parkhiCatsDogs2012}, and DTD \cite{cimpoiDescribingTexturesWild2014}, adopting $\theta=0.2$ as the general mixing coefficient for our algorithm. The quantitative results are shown in Table \ref{exp_table}. From this table, it is evident that blending global and local prompts consistently leads to enhanced accuracy, with the accuracy curve also showing a peak. Further performance evaluations can be found in Appendix \ref{App: performance evaluation}. 
\begin{table}[htbp]
    \centering
    \caption{Accuracy of CoOp, PromptFL, PromptFolio on different datasets.\label{exp_table}}
    \begin{tabular}{ccccccc}
        \hline
        &\textbf{Cifar100}&\textbf{DomainNet}&\textbf{Office-Cal10}&\textbf{OxfordPets} & \textbf{DTD}\\
        \hline
        CoOp&76.88 $\pm $ 0.07&91.83 $\pm $ 0.13&97.10 $\pm$ 0.20&87.85 $\pm $ 0.32&56.39 $\pm $ 0.48\\
        PromptFL&78.16 $ \pm$ 0.16&92.72 $\pm $ 0.16&95.51 $\pm$ 2.62& 88.91 $\pm $ 0.72&70.99 $\pm$ 0.32\\
        PromptFolio &\textbf{80.17} $\pm$ \textbf{0.05}&\textbf{93.04} $\pm$ \textbf{0.09}&\textbf{97.24} $\pm $ \textbf{0.11}&\textbf{92.17} $\pm $ \textbf{0.32}&\textbf{71.32} $\pm$ \textbf{0.49}\\
        \hline
    \end{tabular}
\end{table}

\subsection{Performance evaluation under various data heterogeneity}
We then conduct the experiment over different data distributions. By varying the parameters of the Dirichlet distribution exponentially from 0.01 to 10, we controlled the heterogeneity of the data. A larger $\alpha$ indicates that the data is closer to an i.i.d. distribution. Using 10 users, we performed our experiments, and the results are shown in Figure \ref{fig:alpha_user}(a). 

\begin{figure}[htbp]
\centering
\includegraphics[width=\linewidth]{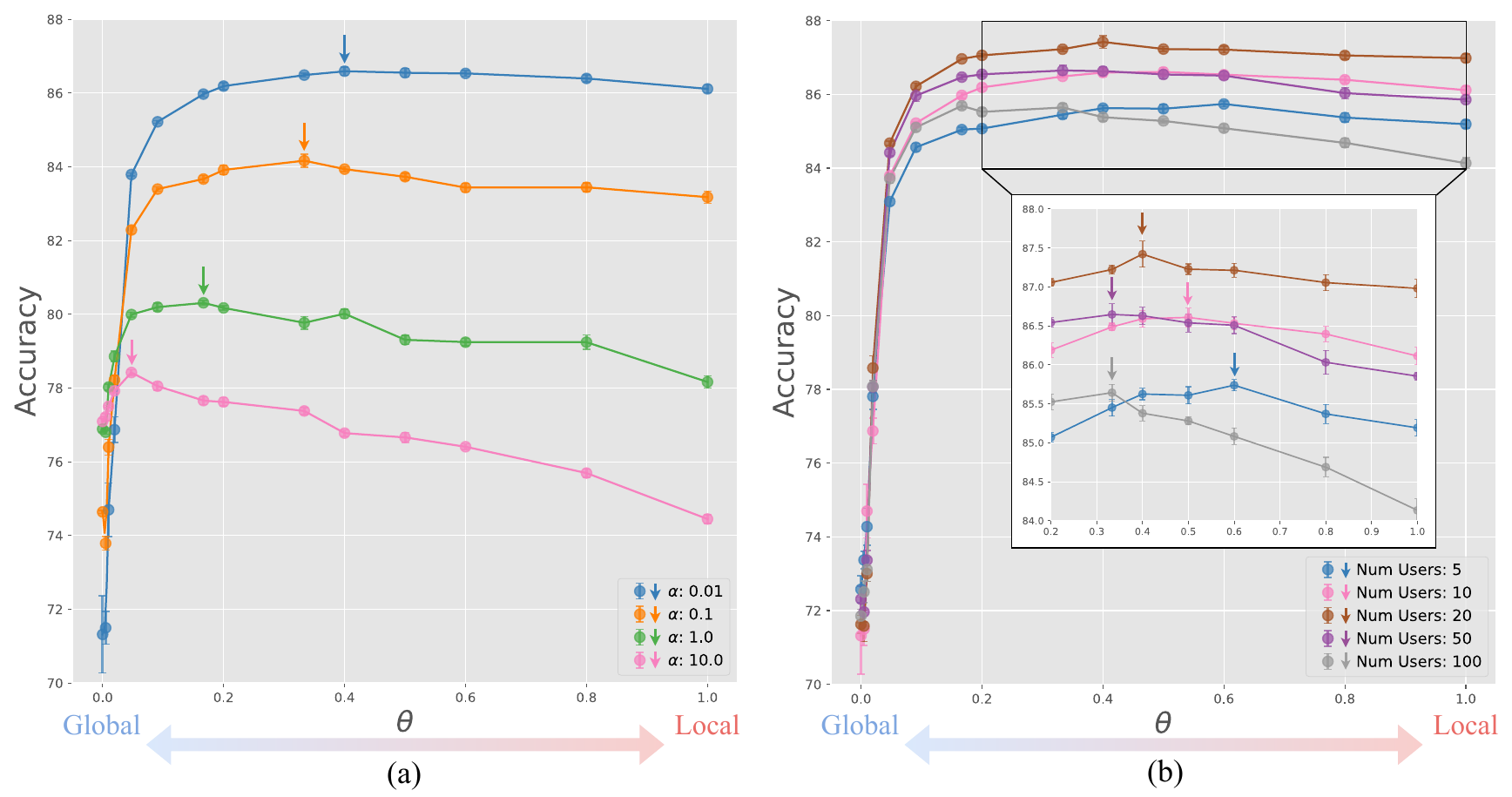}
 \caption{The x-axis represents the mixing coefficients, which range from 0 to 1, and the y-axis shows the accuracy of the test set after training. The left figure depicts the result under different data distributions, and the right figure reveals the result under different users.}
 \label{fig:alpha_user}
\end{figure}

From Figure \ref{fig:alpha_user}(a), we observe that hybrid approaches outperform solitary methods, consistent with our theoretical analysis. Additionally, higher $\alpha$ values, indicating a more uniform distribution, generally result in a superior global model compared to local models tailored for specific users. This finding supports our conclusion that a more IID distribution leads to $\chi_k$ being higher and closer to $K$, resulting in a higher task-relevant to task-irrelevant coefficient ratio, and thus better performance. Conversely, in scenarios where data is highly non-IID, there is a preference for using more local models to maintain high accuracy, which aligns with our analysis of the optimal mixing coefficient $\theta^\star$. 

\subsection{Performance evaluation with different client number}
Furthermore, we conducted experiments with different numbers of users on the CIFAR-100 dataset, keeping the Dirichlet distribution parameter fixed at $\alpha = 0.01$, which represents a pathological non-i.i.d. distribution. The number of users varies from 5 to 100, and the results are shown in Figure \ref{fig:alpha_user}(b). From these results, we observe that the trend of the mixing strategy outperforming the independent global and local algorithms remains consistent, regardless of the number of users. As the number of users increases, the optimal $\theta$ shifts closer to zero, indicating that with more users, each client's information becomes less significant, necessitating more global information. Additionally, the accuracy first increases and then decreases with the number of users, suggesting that there is an optimal number of users, consistent with theoretical results. 

\section{Conclusion}
This work presents a thorough theoretical and empirical exploration of prompt-based federated learning, integrating vision-language foundation models such as CLIP. By developing an analytical framework based on feature learning theory, we have examined the dynamics of signal learning and noise memorization specific to federated settings, providing a robust mechanism to evaluate the effectiveness of prompt-based learning strategies. Notably, our introduction of PromptFolio, which combines global and local prompts into a prompt portfolio, offers an approach to balancing generalization with personalization, drawing an innovative parallel with portfolio optimization in finance. This approach balances generalization with personalization, supported by an optimal mixing coefficient from our theoretical framework to tailor adaptability in various federated settings. Empirical tests confirm our method's superiority, aligning with theoretical insights and outperforming traditional federated learning approaches.  Limitations include a simplified text model with a single activation function, suggesting future work with more complex models to better capture deep network behaviors in federated environments.

\section*{Acknowledgement}
This work was supported by NSFC (No.62303319), Shanghai Sailing Program (22YF1428800, 21YF1429400), Shanghai Local College Capacity Building Program (23010503100), Shanghai Frontiers Science Center of Human-centered Artificial Intelligence (ShangHAI), MoE Key Laboratory of Intelligent Perception and HumanMachine Collaboration (ShanghaiTech University), and Shanghai Engineering Research Center of Intelligent Vision and Imaging. Wei Huang was supported by JSPS KAKENHI Grant Number 24K20848. Additionally, we would like to thank Leqi Zhou for her contributions to proofreading this paper.

\bibliography{neurips_2024.bib}

\begin{thebibliography}{10}

\bibitem{allen-zhuUnderstandingEnsembleKnowledge2022}
Zeyuan Allen-Zhu and Yuanzhi Li.
\newblock Towards understanding ensemble, knowledge distillation and self-distillation in deep learning.
\newblock In {\em The {Eleventh} {International} {Conference} on {Learning} {Representations}}, 2022.

\bibitem{arivazhaganFederatedLearningPersonalization2019}
Manoj~Ghuhan Arivazhagan, Vinay Aggarwal, Aaditya~Kumar Singh, and Sunav Choudhary.
\newblock Federated {Learning} with {Personalization} {Layers}, December 2019.
\newblock arXiv:1912.00818 [cs, stat].

\bibitem{bossardFood101MiningDiscriminative2014}
Lukas Bossard, Matthieu Guillaumin, and Luc Van~Gool.
\newblock Food-101 -- mining discriminative components with random forests.
\newblock In {\em European Conference on Computer Vision}, 2014.

\bibitem{caiFedCOCooperationOnline2024}
Zhongyi Cai, Ye~Shi, Wei Huang, and Jingya Wang.
\newblock Fed-{CO2} : {Cooperation} of {Online} and {Offline} {Models} for {Severe} {Data} {Heterogeneity} in {Federated} {Learning}.
\newblock {\em Advances in Neural Information Processing Systems}, 36, 2024.

\bibitem{caoKnowledgeawareFederatedActive2023}
Yu-Tong Cao, Ye~Shi, Baosheng Yu, Jingya Wang, and Dacheng Tao.
\newblock Knowledge-aware federated active learning with non-iid data.
\newblock In {\em Proceedings of the {IEEE}/{CVF} {International} {Conference} on {Computer} {Vision}}, pages 22279--22289, 2023.

\bibitem{caoBenignOverfittingTwolayer2022}
Yuan Cao, Zixiang Chen, Misha Belkin, and Quanquan Gu.
\newblock Benign overfitting in two-layer convolutional neural networks.
\newblock {\em Advances in neural information processing systems}, 35:25237--25250, 2022.

\bibitem{chenUnderstandingFeatureLearning2023}
Yongqiang Chen, Wei Huang, Kaiwen Zhou, Yatao Bian, Bo~Han, and James Cheng.
\newblock Understanding and improving feature learning for out-of-distribution generalization.
\newblock {\em Advances in Neural Information Processing Systems}, 36, 2024.

\bibitem{chenUnderstandingTransferableRepresentation2023}
Zixiang Chen, Yihe Deng, Yuanzhi Li, and Quanquan Gu.
\newblock Understanding {{Transferable Representation Learning}} and {{Zero-shot Transfer}} in {{CLIP}}.
\newblock In {\em The {{Twelfth International Conference}} on {{Learning Representations}}}, 2024.

\bibitem{chengFinetuningFineFederated2021}
Gary Cheng, Karan Chadha, and John Duchi.
\newblock Fine-tuning is fine in federated learning.
\newblock {\em arXiv preprint arXiv:2108.07313}, 3, 2021.

\bibitem{chidambaramProvablyLearningDiverse2023}
Muthu Chidambaram, Xiang Wang, Chenwei Wu, and Rong Ge.
\newblock Provably learning diverse features in multi-view data with midpoint mixup.
\newblock In {\em International {Conference} on {Machine} {Learning}}, pages 5563--5599. PMLR, 2023.

\bibitem{cimpoiDescribingTexturesWild2014}
Mircea Cimpoi, Subhransu Maji, Iasonas Kokkinos, Sammy Mohamed, and Andrea Vedaldi.
\newblock Describing textures in the wild.
\newblock In {\em Proceedings of the {IEEE} conference on computer vision and pattern recognition}, pages 3606--3613, 2014.

\bibitem{cuiHarmonizingGeneralizationPersonalization2024}
Tianyu Cui, Hongxia Li, Jingya Wang, and Ye~Shi.
\newblock Harmonizing {{Generalization}} and {{Personalization}} in {{Federated Prompt Learning}}.
\newblock In {\em Forty-First {{International Conference}} on {{Machine Learning}}}, 2024.

\bibitem{deng2024unlocking}
Wenlong Deng, Christos Thrampoulidis, and Xiaoxiao Li.
\newblock Unlocking the potential of prompt-tuning in bridging generalized and personalized federated learning.
\newblock In {\em Proceedings of the IEEE/CVF Conference on Computer Vision and Pattern Recognition}, pages 6087--6097, 2024.

\bibitem{fei-feiLearningGenerativeVisual2004}
Li~Fei-Fei, Rob Fergus, and Pietro Perona.
\newblock Learning generative visual models from few training examples: {An} incremental bayesian approach tested on 101 object categories.
\newblock In {\em 2004 conference on computer vision and pattern recognition workshop}, pages 178--178. IEEE, 2004.

\bibitem{gongGeodesicFlowKernel2012}
Boqing Gong, Yuan Shi, Fei Sha, and Kristen Grauman.
\newblock Geodesic flow kernel for unsupervised domain adaptation.
\newblock In {\em 2012 {IEEE} conference on computer vision and pattern recognition}, pages 2066--2073. IEEE, 2012.

\bibitem{pFedPrompt}
Tao Guo, Song Guo, and Junxiao Wang.
\newblock p{F}ed{P}rompt: {Learning} personalized prompt for vision-language models in federated learning.
\newblock In {\em Proceedings of the {ACM} {Web} {Conference} 2023}, pages 1364--1374, 2023.

\bibitem{PromptFL}
Tao Guo, Song Guo, Junxiao Wang, Xueyang Tang, and Wenchao Xu.
\newblock Prompt{FL}: {Let} federated participants cooperatively learn prompts instead of models-federated learning in age of foundation model.
\newblock {\em IEEE Transactions on Mobile Computing}, 2023.
\newblock ISBN: 1536-1233 Publisher: IEEE.

\bibitem{huangGraphNeuralNetworks2023}
Wei Huang, Yuan Cao, Haonan Wang, Xin Cao, and Taiji Suzuki.
\newblock Graph {Neural} {Networks} {Provably} {Benefit} from {Structural} {Information}: {A} {Feature} {Learning} {Perspective}, August 2023.
\newblock arXiv:2306.13926 [cs].

\bibitem{huangUnderstandingConvergenceGeneralization2023}
Wei Huang, Ye~Shi, Zhongyi Cai, and Taiji Suzuki.
\newblock Understanding {Convergence} and {Generalization} in {Federated} {Learning} through {Feature} {Learning} {Theory}.
\newblock In {\em The {Twelfth} {International} {Conference} on {Learning} {Representations}}, 2023.

\bibitem{huangWhatMakesMultimodal2021a}
Yu~Huang, Chenzhuang Du, Zihui Xue, Xuanyao Chen, Hang Zhao, and Longbo Huang.
\newblock What makes multi-modal learning better than single (provably).
\newblock {\em Advances in Neural Information Processing Systems}, 34:10944--10956, 2021.

\bibitem{huangPersonalizedCrosssiloFederated2021}
Yutao Huang, Lingyang Chu, Zirui Zhou, Lanjun Wang, Jiangchuan Liu, Jian Pei, and Yong Zhang.
\newblock Personalized cross-silo federated learning on non-iid data.
\newblock In {\em Proceedings of the {AAAI} conference on artificial intelligence}, volume~35, pages 7865--7873, 2021.
\newblock Issue: 9.

\bibitem{jelassiVisionTransformersProvably2022}
Samy Jelassi, Michael Sander, and Yuanzhi Li.
\newblock Vision transformers provably learn spatial structure.
\newblock {\em Advances in Neural Information Processing Systems}, 35:37822--37836, 2022.

\bibitem{jiangheterogeneous}
Meirui Jiang, Anjie Le, Xiaoxiao Li, and Qi~Dou.
\newblock Heterogeneous personalized federated learning by local-global updates mixing via convergence rate.
\newblock In {\em The Twelfth International Conference on Learning Representations}, 2024.

\bibitem{kouBenignOverfittingTwolayer2023}
Yiwen Kou, Zixiang Chen, Yuanzhou Chen, and Quanquan Gu.
\newblock Benign overfitting in two-layer {ReLU} convolutional neural networks.
\newblock In {\em International {Conference} on {Machine} {Learning}}, pages 17615--17659. PMLR, 2023.

\bibitem{krizhevskyLearningMultipleLayers2009}
Alex Krizhevsky and Geoffrey Hinton.
\newblock Learning multiple layers of features from tiny images.
\newblock 2009.
\newblock Publisher: Toronto, ON, Canada.

\bibitem{liTheoreticalUnderstandingShallow2023}
Hongkang Li, Meng Wang, Sijia Liu, and Pin-Yu Chen.
\newblock A {{Theoretical Understanding}} of {{Shallow Vision Transformers}}: {{Learning}}, {{Generalization}}, and {{Sample Complexity}}.
\newblock In {\em The {{Eleventh International Conference}} on {{Learning Representations}}}, 2023.

\bibitem{liFedtpFederatedLearning2023}
Hongxia Li, Zhongyi Cai, Jingya Wang, Jiangnan Tang, Weiping Ding, Chin-Teng Lin, and Ye~Shi.
\newblock Fed{TP}: {Federated} learning by transformer personalization.
\newblock {\em IEEE transactions on neural networks and learning systems}, 2023.
\newblock ISBN: 2162-237X Publisher: IEEE.

\bibitem{liGlobalLocalPrompts2024}
Hongxia Li, Wei Huang, Jingya Wang, and Ye~Shi.
\newblock Global and local prompts cooperation via optimal transport for federated learning.
\newblock In {\em Proceedings of the {{IEEE}}/{{CVF Conference}} on {{Computer Vision}} and {{Pattern Recognition}}}, pages 12151--12161, 2024.

\bibitem{liFederatedOptimizationHeterogeneous2020}
Tian Li, Anit~Kumar Sahu, Manzil Zaheer, Maziar Sanjabi, Ameet Talwalkar, and Virginia Smith.
\newblock Federated optimization in heterogeneous networks.
\newblock {\em Proceedings of Machine learning and systems}, 2:429--450, 2020.

\bibitem{liang2020think}
Paul~Pu Liang, Terrance Liu, Liu Ziyin, Nicholas~B Allen, Randy~P Auerbach, David Brent, Ruslan Salakhutdinov, and Louis-Philippe Morency.
\newblock Think locally, act globally: Federated learning with local and global representations.
\newblock {\em arXiv preprint arXiv:2001.01523}, 2020.

\bibitem{markowitzPortfolioSelection1952}
Harry~M Markowitz.
\newblock Portfolio selection.
\newblock {\em The Journal of Finance}, 7(1):77, 1952.

\bibitem{markowitzMeanvarianceAnalysisPortfolio2000}
Harry~M. Markowitz and G.~Peter Todd.
\newblock {\em Mean-variance analysis in portfolio choice and capital markets}, volume~66.
\newblock John Wiley \& Sons, 2000.

\bibitem{FedAvg}
Brendan McMahan, Eider Moore, Daniel Ramage, Seth Hampson, and Blaise~Aguera y~Arcas.
\newblock Communication-efficient learning of deep networks from decentralized data.
\newblock In {\em Artificial intelligence and statistics}, pages 1273--1282. PMLR, 2017.

\bibitem{nilsbackAutomatedFlowerClassification2008}
Maria-Elena Nilsback and Andrew Zisserman.
\newblock Automated flower classification over a large number of classes.
\newblock In {\em 2008 {Sixth} {Indian} conference on computer vision, graphics \& image processing}, pages 722--729. IEEE, 2008.

\bibitem{parkhiCatsDogs2012}
Omkar~M. Parkhi, Andrea Vedaldi, Andrew Zisserman, and C.~V. Jawahar.
\newblock Cats and dogs.
\newblock In {\em 2012 {IEEE} conference on computer vision and pattern recognition}, pages 3498--3505. IEEE, 2012.

\bibitem{peng2019moment}
Xingchao Peng, Qinxun Bai, Xide Xia, Zijun Huang, Kate Saenko, and Bo~Wang.
\newblock Moment matching for multi-source domain adaptation.
\newblock In {\em Proceedings of the IEEE International Conference on Computer Vision}, pages 1406--1415, 2019.

\bibitem{qiuFederatedTextdrivenPrompt2024}
Chen Qiu, Xingyu Li, Chaithanya~Kumar Mummadi, Madan~Ravi Ganesh, Zhenzhen Li, Lu~Peng, and Wan-Yi Lin.
\newblock Federated text-driven prompt generation for vision-language models.
\newblock In {\em The {{Twelfth International Conference}} on {{Learning Representations}}}, 2024.

\bibitem{CLIP}
Alec Radford, Jong~Wook Kim, Chris Hallacy, Aditya Ramesh, Gabriel Goh, Sandhini Agarwal, Girish Sastry, Amanda Askell, Pamela Mishkin, and Jack Clark.
\newblock Learning transferable visual models from natural language supervision.
\newblock In {\em International conference on machine learning}, pages 8748--8763. PMLR, 2021.

\bibitem{wenUnderstandingFeatureLearning2021b}
Zixin Wen and Yuanzhi Li.
\newblock Toward understanding the feature learning process of self-supervised contrastive learning.
\newblock In {\em International {Conference} on {Machine} {Learning}}, pages 11112--11122. PMLR, 2021.

\bibitem{pFedPG}
Fu-En Yang, Chien-Yi Wang, and Yu-Chiang~Frank Wang.
\newblock Efficient model personalization in federated learning via client-specific prompt generation.
\newblock In {\em Proceedings of the {IEEE}/{CVF} {International} {Conference} on {Computer} {Vision}}, pages 19159--19168, 2023.

\bibitem{zhangFederatedFuzzyNeural2022}
Leijie Zhang, Ye~Shi, Yu-Cheng Chang, and Chin-Teng Lin.
\newblock Federated fuzzy neural network with evolutionary rule learning.
\newblock {\em IEEE Transactions on Fuzzy Systems}, 2022.
\newblock ISBN: 1063-6706 Publisher: IEEE.

\bibitem{FedPrompt}
Haodong Zhao, Wei Du, Fangqi Li, Peixuan Li, and Gongshen Liu.
\newblock Fed{P}rompt: {Communication}-efficient and privacy-preserving prompt tuning in federated learning.
\newblock In {\em {ICASSP} 2023-2023 {IEEE} {International} {Conference} on {Acoustics}, {Speech} and {Signal} {Processing} ({ICASSP})}, pages 1--5. IEEE, 2023.

\bibitem{CoOp}
Kaiyang Zhou, Jingkang Yang, Chen~Change Loy, and Ziwei Liu.
\newblock Learning to prompt for vision-language models.
\newblock {\em International Journal of Computer Vision}, 130(9):2337--2348, 2022.
\newblock ISBN: 0920-5691 Publisher: Springer.

\bibitem{zouUnderstandingGeneralizationAdam2021}
Difan Zou, Yuan Cao, Yuanzhi Li, and Quanquan Gu.
\newblock Understanding the {{Generalization}} of {{Adam}} in {{Learning Neural Networks}} with {{Proper Regularization}}.
\newblock In {\em The {{Eleventh International Conference}} on {{Learning Representations}}}, 2021.

\bibitem{zouBenefitsMixupFeature2023}
Difan Zou, Yuan Cao, Yuanzhi Li, and Quanquan Gu.
\newblock The benefits of mixup for feature learning.
\newblock In {\em International {Conference} on {Machine} {Learning}}, pages 43423--43479. PMLR, 2023.

\end{thebibliography}
\bibliographystyle{plain}
\newpage
\newpage

\appendix
\startcontents[appendix]
\printcontents[appendix]{l}{1}{\section*{Supplementary organization:}\setcounter{tocdepth}{2}}
\newpage

\section{Performance evaluation}
\label{App: performance evaluation}
In this paper, we adopt a similar setting to the experimental result in \cite{liGlobalLocalPrompts2024}. The experiments are conducted on a cluster with 2 Intel Xeon 5218R, 512GB memory, and 8 NVIDIA Tesla A40 GPUs 48GB. Here, we use Food101 \cite{bossardFood101MiningDiscriminative2014}, DTD \cite{cimpoiDescribingTexturesWild2014}, Caltech101 \cite{fei-feiLearningGenerativeVisual2004}, Flowers102 \cite{nilsbackAutomatedFlowerClassification2008}, OxfordPets \cite{parkhiCatsDogs2012} to evaluate our algorithm. The baseline algorithms are CoOp \cite{CoOp}, PromptFL \cite{PromptFL}, the variants of PromptFL \cite{chengFinetuningFineFederated2021, liFederatedOptimizationHeterogeneous2020, arivazhaganFederatedLearningPersonalization2019, huangPersonalizedCrosssiloFederated2021} and FedTPG \cite{qiuFederatedTextdrivenPrompt2024}. We use Dirichlet distribution with parameter $\alpha = 0.3$ to split the dataset. Consider we use 10 users with 100 communication rounds to assess our algorithm. 10 epochs are conducted each communication round. There are 8 prompts that are randomly initialized during prompt learning. The accuracy is shown in Table \ref{tab:method_comparison}. 
\begin{table}[h!]
    \centering
    \begin{tabular}{lccccc}
        \hline
        & \text{Food101} & \text{DTD} & \text{Caltech101} & \text{Flowers102} & \text{OxfordPets} \\
        \hline
        CoOp & 83.09$\pm$0.58 & 56.40$\pm$0.53 & 90.62$\pm$0.72 & 69.03$\pm$1.04 & 90.18$\pm$0.74 \\
        PromptFL & 85.15$\pm$0.25 & 51.99$\pm$1.41 & 93.47$\pm$0.30 & 74.47$\pm$1.40 & 91.08$\pm$0.53 \\
        \text{PromptFL+FT} & 80.85$\pm$0.27 & 50.72$\pm$1.17 & 89.04$\pm$0.59 & 69.05$\pm$1.94 & 87.12$\pm$1.15 \\
        \text{PromptFL+FedProx} & 85.48$\pm$0.19 & 52.38$\pm$1.95 & 93.57$\pm$0.46 & 74.27$\pm$1.40 & 91.07$\pm$0.41 \\
        \text{PromptFL+FedPer} & 82.20$\pm$1.03 & 58.14$\pm$0.37 & 91.70$\pm$0.58 & 71.71$\pm$0.53 & 90.65$\pm$0.69 \\
        \text{PromptFL+FedAMP} & 83.24$\pm$0.52 & 56.40$\pm$0.53 & 91.38$\pm$0.21 & 70.80$\pm$0.42 & 90.56$\pm$0.43 \\
        \text{FedTPG} & \textbf{86.59$\pm$0.03} & 52.01$\pm$0.80 & 93.88$\pm$0.12 & 72.91$\pm$1.23 & 90.98$\pm$0.24 \\
        \text{PromptFolio (Ours)} & 86.50$\pm$1.34 & \textbf{61.04$\pm$0.69} & \textbf{93.59$\pm$0.39} & \textbf{74.61$\pm$0.53} & \textbf{92.08$\pm$0.14} \\
        \hline
    \end{tabular}
    \caption{Comparison of different methods on various datasets.}
    \label{tab:method_comparison}
\end{table}

\section{Further ablation study}
This paper further conduct experiments on different shot numbers and different backbones. The results in shown as Figure \ref{fig:shot_bb}.
\begin{figure}[htbp]
    \centering
  \includegraphics[width=\linewidth]{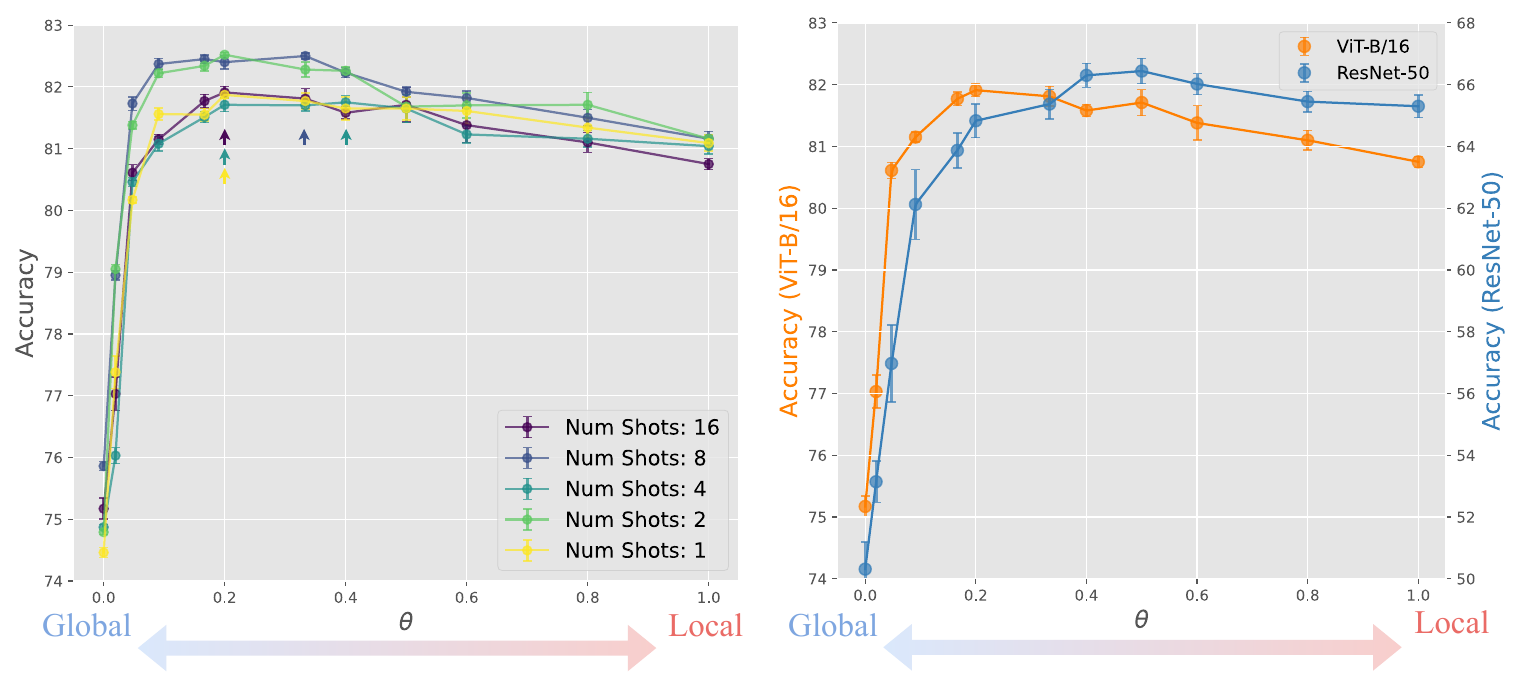}
  
 \caption{The accuracy curve among different shot numbers (left) and different backbones (right).}
 \label{fig:shot_bb}
\end{figure}

\textbf{Different shots} As shown in Figure \ref{fig:shot_bb}(a), we tested the accuracy using different numbers of shots, ranging from 16-shot to 1-shot. It can be observed that the optimal coefficient remains stable and close to 0.2, demonstrating the robustness of our algorithm across various shot numbers..

\textbf{Different backbones} Figure \ref{fig:shot_bb}(b) illustrates the accuracy using different vision backbones of CLIP, namely ViT-B/16 and ResNet-50. In this experiment, a similar trend is observed, where accuracy first increases and then decreases with the mixing coefficient, regardless of the backbone used. This outcome indicates that our theoretical framework is consistent across different backbones, further validating its applicability.

\section{Introduction to feature learning}
Feature learning theory is a new theoretical framework proposed recently. The seminal work \cite{allen-zhuUnderstandingEnsembleKnowledge2022} proposed feature learning theory to understand ensemble, knowledge
distillation, and self-distillation in deep learning. They show that when data has a structure containing features,
the network's dynamics can be tracked during gradient training, which can further be used to characterize
generalization. Later, this theory was further systematized and standardized by \cite{chenUnderstandingFeatureLearning2023}, forming a fundamental
theoretical framework to explain benign overfitting. Since then, feature learning theory has been widely used to
understand algorithms \cite{chenUnderstandingTransferableRepresentation2023, jelassiVisionTransformersProvably2022} and techniques \cite{huangWhatMakesMultimodal2021a} in deep learning, forming a comprehensive theoretical
system.

To provide more details on the general feature learning process:
\begin{enumerate}
    \item  \textbf{Defining the feature space}: As outlined in Section \ref{section: analysis framework}, features are categorized as task-relevant and task-irrelevant. For example, in an image of a cat, features representing the cat itself are task-relevant, while background features are task-irrelevant.
    \item  \textbf{Parameter representation}: As described in Lemma \ref{main_lemma}, learnable parameters can be decomposed into the feature space. In our framework, the learnable prompts are expressed as a linear combination of task-relevant and task-irrelevant features.
    \item  \textbf{Learning dynamics}: Theorem \ref{Stage1main} examines the learning dynamics of coefficients of feature learning, where coefficients are defined by the weight decomposition into the feature space, providing valuable insights into the learning process. When task-relevant features dominate, the network learns the target effectively and demonstrates good performance.
    \item \textbf{Generalization bound}: By leveraging the learning dynamics of the coefficients, we can demonstrate the generalization bound or test performance post-training. In our work, we connect the performance of prompt fine-tuning with the ratio between task-relevant coefficients and task-irrelevant coefficients.
\end{enumerate}

\section{Evidence of assumption}
\label{evidence_assump}
Here's the evidence supporting the second assumption that the features of the pretrained model are orthogonal. We focus on the final projection layer of the text encoder in CLIP \cite{CLIP} and calculate its cosine similarity between each row. The statistical distribution of the cosine similarity is shown in Figure \ref{fig:cosine}.
\begin{figure}[htbp]
    \centering
  \includegraphics[width=0.7\linewidth]{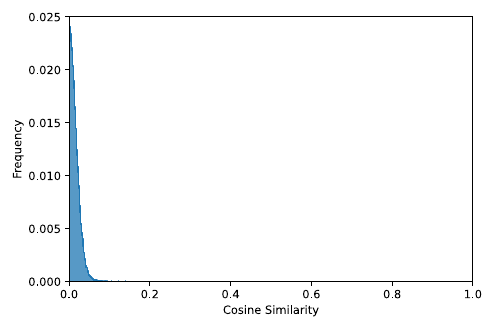}
  \label{fig:cosine}
 \caption{The distribution of cosine similarity between different rows of the final projection layer in CLIP \cite{CLIP}.}
\end{figure}

Here, we find that nearly all of the cosine similarity are smaller than $0.1$, which is close to orthogonal. This evidence support our theoretical assumption. Except the evidence of cosine similarity, the previous research \cite{huangWhatMakesMultimodal2021a} also make assumption of orthonormal rows of weight matrix to the multi-modality latent space.

\section{Analysis framework}
\label{analysis framework}
In this paper, our analysis will be made under the following assumptions:
\begin{assumption}\label{assum:numerical}Suppose that:
    \begin{enumerate}
        \item The dimension of latent space $m$ is sufficiently large $m = \tilde{\Omega}(\overline{n})$.
        \item The number of training samples and clients follows $\overline{n} = \Omega(\text{polylog(d)}), K = \Theta(1)$. The federated network shares the same initialization.
        \item The learning rate $\eta \leq \tilde{O}(\frac{K}{E}\min \{||\bm\mu||_2^2, \sigma_p^{-2}m^{-1}\})$ and the standard deviation of network weight initialization $\sigma_0 \leq \tilde{O}(mn)\cdot \min \{(||\bm\mu||_2^2, \sigma_p\sqrt{d}^{-1})\}$ 
    \end{enumerate}
\end{assumption}
For the first assumption, we assume the dimension of latent space is large enough to make our analysis ensure an over-parameterized setting. The second assumption provides statistical properties to the training data and network weight initialization. The third assumption of a small learning rate and small initialization guarantees that gradient descent effectively minimizes the training loss.

Note that this analysis framework can be directly applied to our algorithm PromptFolio. To achieve a basic federated version of prompt learning, i.e., PromptFL, we apply the learnable prompt $\mathbf{p}$ as the global prompt $\mathbf{p}_G$ in PromptFolio and apply the mixing coefficient $\theta = 0$, we obtain the theoretical result of PromptFL. Similarly, we can apply the learnable prompt as the local prompt $\mathbf{p}_L$ and apply the mixing coefficient $\theta = 1$, we obtain the theoretical result of CoOp.

\subsection{Gradient update analysis}
This section discusses the computational approach used to analyze the performance of the algorithm, focusing on the gradient calculations essential for optimizing the model parameters. To calculate the gradient, we first need to find the partial derivatives of $\mathbf{sim}(\mathbf{g}_{k,i}, \mathbf{h}_{k, i})$ with respect to $p_G$ and $p_L$. Let $\sigma'_{G, r, k, i}$ and $\sigma'_{L, r, k, i}$ are the $r$-th diagonal elements in $\sigma'(\mathbf{w}_r^T\mathbf{p}_{G, k} + \mathbf{w}_r^T\mathbf{p}_{y_{k, i}}) + \sigma'(-\mathbf{w}_r^T\mathbf{p}_{G, k} + \mathbf{w}_r^T\mathbf{p}_{y_{k, i}})$ and $\sigma'(\mathbf{w}_r^T\mathbf{p}_{L, k} + \mathbf{w}_r^T\mathbf{p}_{y_{k, i}}) +\sigma'(-\mathbf{w}_r^T\mathbf{p}_{L, k} + \mathbf{w}_r^T\mathbf{p}_{y_{k, i}})$ respectively:
\begin{align}
\nonumber&\frac{\partial \mathbf{sim}(\mathbf{g}_{k,i}, \mathbf{h}_{k, i})}{\partial \mathbf{p}_{G, k}} =(1-\theta)(\mathbf{W}^T (\sigma'(\mathbf{W}\mathbf{p}_{G, k} + \mathbf{W}\mathbf{p}_{y_{k, i}})+\sigma'(-\mathbf{W}\mathbf{p}_{G, k} + \mathbf{W}\mathbf{p}_{y_{k, i}})))\cdot g(\mathbf{x}_{k, i}),\\
&\frac{\partial \mathbf{sim}(\mathbf{g}_{k,i}, \mathbf{h}_{k, i})}{\partial \mathbf{p}_{L, k}} = \theta(\mathbf{W}^T (\sigma'(\mathbf{W}\mathbf{p}_{L, k} + \mathbf{W}\mathbf{p}_{y_{k, i}})+(\sigma'(-\mathbf{W}\mathbf{p}_{L, k} + \mathbf{W}\mathbf{p}_{y_{k, i}})))\cdot g(\mathbf{x}_{k, i}).
\end{align}
Using the logistic loss $L^{\mathcal{T}}$ for client $k$, the goal is to minimize the subsequent loss function to achieve supervised fine-tuning of the text prompts:
\begin{align}
    L^{\mathcal{T}}_{k}(\mathbf{p}_{G, k}; \mathbf{p}_{L, k}) = -\frac{1}{n_k}\sum\limits_{i=1}^{n_k} \log(1 + \exp(\textbf{sim}(\mathbf{g}_{k, i}, \mathbf{h}_{k,i}))).
\end{align}
Now, we can use these partial derivatives to compute the gradients with respect to $\mathbf{p}_G$ and $\mathbf{p}_L$:
\begin{equation}
    \begin{aligned}
        &\nabla_{\mathbf{p}_{G, k}}L^{\mathcal{T}}_{k}(\mathbf{p}_{G, k}) = -\frac{1}{n_k}\sum\limits_{i=1}^{n_k} \frac{\exp (\mathbf{sim}(\mathbf{g}_{k,i}, \mathbf{h}_{k, i}))}{1 + \exp (\mathbf{sim}(\mathbf{g}_{k,i}, \mathbf{h}_{k, i}))}\frac{\partial \mathbf{sim}(\mathbf{g}_{k,i}, \mathbf{h}_{k, i})}{\partial \mathbf{p}_{G, k}},\\
&\nabla_{\mathbf{p}_{L, k}}L^{\mathcal{T}}_{k}(\mathbf{p}_{L, k})= -\frac{1}{n_k}\sum\limits_{i=1}^{n_k} \frac{\exp (\mathbf{sim}(\mathbf{g}_{k,i}, \mathbf{h}_{k, i}))}{1 + \exp (\mathbf{sim}(\mathbf{g}_{k,i}, \mathbf{h}_{k, i}))}\frac{\partial \mathbf{sim}(\mathbf{g}_{k,i}, \mathbf{h}_{k, i})}{\partial \mathbf{p}_{L, k}}.
    \end{aligned}
\end{equation}
To simplify the update process, we can rewrite it using summation notation. Let $\ell_{k, i}'$ denote $\frac{\exp (\mathbf{sim}(\mathbf{g}_{k,i}, \mathbf{h}_{k, i}))}{1 + \exp (\mathbf{sim}(\mathbf{g}_{k,i}, \mathbf{h}_{k, i}))}$:

\begin{align}
\nabla_{\mathbf{p}_{G, k}}L^{\mathcal{T}}_{ k}(\mathbf{p}_{G, k})
&= -\frac{1-\theta}{n_k}\sum\limits_{i=1}^{n_k}\ell_{k, i}'(\mathbf{W}^T (\sigma'(\mathbf{W}\mathbf{p}_{G, k} + \mathbf{W}\mathbf{p}_{y_{k, i}})+ \sigma'(-\mathbf{W}\mathbf{p}_{G, k} + \mathbf{W}\mathbf{p}_{y_{k, i}})))\cdot g(\mathbf{x}_{k, i})\nonumber\\
&= -\frac{1-\theta}{n_k}\sum\limits_{i=1}^{n_k}\sum\limits_{r=1}^m \ell_{k, i}' x_{r, k, i}\sigma'_{G, r, k, i}\mathbf{w}_r,\\
\nabla_{\mathbf{p}_{L, k}}L^{\mathcal{T}}_{ k}(\mathbf{p}_{L, k})
&= -\frac{\theta}{n_k}\sum\limits_{i=1}^{n_k}\ell_{k, i}'(\mathbf{W}^T \sigma'(\mathbf{W}\mathbf{p}_{L, k} + \mathbf{W}\mathbf{p}_{y_{k, i}})+ \sigma'(-\mathbf{W}\mathbf{p}_{L, k} + \mathbf{W}\mathbf{p}_{y_{k, i}}))\cdot g(\mathbf{x}_{k, i})\nonumber\\
&= -\frac{\theta}{n_k}\sum\limits_{i=1}^{n_k}\sum\limits_{r=1}^m \ell_{k, i}' x_{r, k, i}\sigma'_{L, r, k, i}\mathbf{w}_r.
\end{align}
In the above equations, $x_{r, k, i}$ represents the $r$-th row of $g(\mathbf{x}_{k, i})$. Note that $\mathbf{w}_r^T$ represents the $r$-th row of the weight matrix $\mathbf{W}$.

\subsection{Signal-noise decomposition}
In this subsection, we provide the proof of signal-noise decomposition as mentioned in Lemma \ref{main_lemma}. Note that the local update of global prompt coefficients can be generalized to local prompts by changing the subscript $G$ to $L$. The coefficient $\beta_{G/L, k}$ denotes the coefficient of the global prompt at client $k$. The coefficient $\gamma_{G/L, k, k'}$ denotes the coefficient of the local prompt $k'$ at client $k$. The coefficient $\phi_{G/L, k, k', i}$ denotes the coefficient of client $k'$'s $i$-th task-irrelevant feature at client $k$.

Here, we suppose that the weight of the pretrained model consists of two kinds of weights, task-relevant weight $\bm{\mu}$ and task-irrelevant weight $\bm{\xi}$. Without loss of generality, we can rewrite the gradient descent update of the prompt as follows:
\begin{align}
        \mathbf{p}_{G, k}^{(t)} &= \beta_{G, k}^{(t)}||\bm\mu_G||_2^{-2}\bm\mu_G + \sum\limits_{k'=1}^K(\alpha_{G, k, k'}^{(t)}\mathbf{p}_{G, k'}^{(0)} + \gamma_{G, k, k'}^{(t)}||\bm\mu_{k'}||_2^{-2}\bm\mu_{k'}) + \sum\limits_{l = 1}^L\phi_{G,k,l}^{(t)}||\bm\xi_{l}||_2^{-2}\bm\xi_{l},\\
        \overline{\mathbf{p}}_{G}^{(t)} &= \overline{\beta}_{G}^{(t)}||\bm\mu_G||_2^{-2}\bm\mu_G + \sum\limits_{k=1}^K(\overline{\alpha}_{G, k}^{(t)}\mathbf{p}_{G, k}^{(0)} + \overline{\gamma}_{G, k}^{(t)}||\bm\mu_{k}||_2^{-2}\bm\mu_{k}) + \sum\limits_{l =1}^L\overline{\phi}_{G,l}^{(t)}||\bm\xi_{l}||_2^{-2}\bm\xi_{l},\\
        \mathbf{p}_{L, k}^{(t)} &= \beta_{L, k}^{(t)}||\bm\mu_G||_2^{-2}\bm\mu_G + \alpha_{L, k, k}^{(t)}\mathbf{p}_{L, k}^{(0)} + \gamma_{L, k, k}^{(t)}||\bm\mu_{k}||_2^{-2}\bm\mu_{k} + \sum\limits_{l = 1}^L\phi_{L,k,l}^{(t)}||\bm\xi_{l}||_2^{-2}\bm\xi_{l}.
    \end{align}
According to the update formula of the gradient descent, we have the update formula of the global prompt and the local prompt:
\begin{align}
    \nonumber\mathbf{p}_{G,k}^{(t+1)} &= \mathbf{p}_{G,k}^{(t)} - \eta \nabla_{\mathbf{p}_{G,k}} L^{\mathcal{T}}(\mathbf{p}_{G,k}^{(t)})\\
     &= \mathbf{p}_{G,k}^{(t)} + \frac{\eta}{n_k} (1-\theta)\sum\limits_{i=1}^{n_k}\sum\limits_{r=1}^m \ell_{k, i}' x_{r, k, i}\sigma'_{G, r, k, i}\mathbf{w}_r,\\
    \nonumber\mathbf{p}_{L,k}^{(t+1)} &= \mathbf{p}_{L,k}^{(t)} - \eta \nabla_{\mathbf{p}_{L,k}} L^{\mathcal{T}}(\mathbf{p}_{L,k}^{(t)})\\
    &= \mathbf{p}_{L,k}^{(t)} + \frac{\eta}{n_k} \theta\sum\limits_{i=1}^{n_k}\sum\limits_{r=1}^m \ell_{k, i}' x_{r, k, i}\sigma'_{L, r, k, i}\mathbf{w}_r.
\end{align}
As a result, for the row corresponding to the global feature line, we have the update formula of the coefficients:
\begin{align}
    \beta_{G, k}^{(t+1)} &= \beta_{G, k}^{(t)} + \frac{\eta}{n_k}(1-\theta)\cdot \sum\limits_{i=1}^{n_k}\ell'^{(t)}_{k, i}\cdot\sigma'^{(t)}_{G, r,k,i}\cdot ||\bm\mu_G||_2^2,\\
    \beta_{L, k}^{(t+1)} &= \beta_{G, k}^{(t)} + \frac{\eta}{n_k}\theta\cdot \sum\limits_{i=1}^{n_k}\ell'^{(t)}_{k, i}\cdot\sigma'^{(t)}_{G, r,k,i}\cdot ||\bm\mu_G||_2^2.
\end{align}
For the row corresponding to the local feature line, we have the update formula of the coefficients:
\begin{align}
    \gamma_{G, k, k'}^{(t+1)} &= \gamma_{G, k, k'}^{(t)} + \frac{\eta}{n_{k'}}(1-\theta)\cdot \sum\limits_{i=1}^{n_{k
    }}\ell'^{(t)}_{k', i}\cdot\sigma'^{(t)}_{G, r,k',i}\cdot||\bm\mu_{k'}||_2^2 \mathbf{I}(k' = k),\\
    \gamma_{G, k, k'}^{(t+1)} &= \gamma_{G, k, k'}^{(t)} + \frac{\eta}{n_{k'}}\theta\cdot \sum\limits_{i=1}^{n_{k
    }}\ell'^{(t)}_{k', i}\cdot\sigma'^{(t)}_{G, r,k',i}\cdot||\bm\mu_{k'}||_2^2 \mathbf{I}(k' = k).
\end{align}
For the row corresponding to the task-irrelevant feature line, we have the update formula of the coefficients:
\begin{align}
    \phi_{G, k, l}^{(t+1)} &= \phi_{G, k, l}^{(t)} + \frac{\eta}{n_k}(1-\theta)\cdot \sum\limits_{i=1}^{n_k}\ell'^{(t)}_{k, i}\cdot\sigma'^{(t)}_{G, r,k,i}\cdot y_{k,i}\cdot x_{k, i, l}\cdot||\bm\xi_{l}||_2^2,\\
    \phi_{L, k, l}^{(t+1)} &= \phi_{L, k, l}^{(t)} + \frac{\eta}{n_k}\theta\cdot \sum\limits_{i=1}^{n_k}\ell'^{(t)}_{k, i}\cdot\sigma'^{(t)}_{G, r,k,i}\cdot y_{k,i}\cdot x_{k, i, l}\cdot||\bm\xi_{l}||_2^2.
\end{align}
To analyze the increase of the coefficient, we decompose the coefficient $\phi_{G, k, l}^{(t)}$ to $\psi_{G, k, l}^{(t)}$ and $\varphi_{G, k, l}^{(t)}$. 
\begin{align}
    \psi_{G, k, l}^{(t)} &= \psi_{G, k, l}^{(t)} + \frac{\eta}{n_k}(1-\theta)\cdot \sum\limits_{i=1}^{n_k}\ell'^{(t)}_{k, i}\cdot\sigma'^{(t)}_{G, r,k,i}\mathbf{1}(y_{k, i} = 1)\cdot x_{k, i, l}\cdot||\bm\xi_{l}||_2^2,\\
    \varphi_{G, k, l}^{(t)} &= \varphi_{G, k, l}^{(t)}  - \frac{\eta}{n_k}(1-\theta)\cdot \sum\limits_{i=1}^{n_k}\ell'^{(t)}_{k, i}\cdot\sigma'^{(t)}_{G, r,k,i}\mathbf{1}(y_{k, i} = -1)\cdot x_{k, i, l}\cdot||\bm\xi_{l}||_2^2,\\
    \psi_{L, k, l}^{(t)} &= \psi_{L, k, l}^{(t)} + \frac{\eta}{n_k}\theta\cdot \sum\limits_{i=1}^{n_k}\ell'^{(t)}_{k, i}\cdot\sigma'^{(t)}_{G, r,k,i}\mathbf{1}(y_{k, i} = 1)\cdot x_{k, i, l}\cdot||\bm\xi_{l}||_2^2,\\
    \varphi_{L, k, l}^{(t)} &= \varphi_{L, k, l}^{(t)}  - \frac{\eta}{n_k}\theta\cdot \sum\limits_{i=1}^{n_k}\ell'^{(t)}_{k, i}\cdot\sigma'^{(t)}_{G, r,k,i}\mathbf{1}(y_{k, i} = -1)\cdot x_{k, i, l}\cdot||\bm\xi_{l}||_2^2.
\end{align}
Here, we suppose that $\bm{\xi}_{k, i} = \sum\limits_{l = 1}^L x_{k, i, l}\bm{\xi}_l$, thus we have the following lemma. 
\begin{lemma} [\cite{caoBenignOverfittingTwolayer2022}]
    \label{bound of noise length}
     Suppose that $\delta \geq 0$ and $L \geq \log(4n / \delta)$, then with probability at least $1-\delta$, we have
    \begin{align}
        &\frac{1}{2}\sigma_p^2 \sigma_L^2 \leq ||\bm{\xi}_{k, i}||_2^2 \leq \frac{3}{2}\sigma_p^2\sigma_L^2,\\
        &|\langle \bm{\xi}_{k, i}, \bm{\xi}_{k', i'}\rangle| \leq \sigma_p^2\sqrt{\log(4n^2/\delta)\sigma_L^2}.
    \end{align}
    where we denote $\sigma_L^2 =  \sum\limits_{l = 1}^L||\bm{\xi}_{l}||_2^2$ as the summation of the norm among all task-irrelevant features.
\end{lemma}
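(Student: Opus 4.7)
The plan is to derive both inequalities by standard Gaussian/chi-squared concentration, leveraging the assumed orthogonality of $\bm{\xi}_1, \ldots, \bm{\xi}_L$ to decouple the feature directions. Since $x_{k,i,l} \sim \mathcal{N}(0, \sigma_p^2)$ are independent across $(k,i,l)$, orthogonality immediately yields
\[
\|\bm{\xi}_{k,i}\|_2^2 = \sum_{l=1}^L x_{k,i,l}^2 \, \|\bm{\xi}_l\|_2^2, \qquad \langle \bm{\xi}_{k,i}, \bm{\xi}_{k',i'}\rangle = \sum_{l=1}^L x_{k,i,l}\, x_{k',i',l}\, \|\bm{\xi}_l\|_2^2,
\]
so each quantity becomes a sum of $L$ independent terms with explicit moments in $\sigma_p$ and the $\|\bm{\xi}_l\|_2^2$'s. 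Recognizing the first as a weighted $\chi^2$ and the second as a conditionally Gaussian bilinear form reduces everything to textbook tail bounds plus union bounds.

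For the norm inequality, I would view $\|\bm{\xi}_{k,i}\|_2^2$ as a weighted $\chi^2$ with mean $\sigma_p^2 \sigma_L^2$, variance proxy $\sigma_p^4 \sum_l \|\bm{\xi}_l\|_2^4$, and sub-exponential scale $\sigma_p^2 \max_l \|\bm{\xi}_l\|_2^2$, then apply a Bernstein/Laurent--Massart bound. Setting the deviation to $\tfrac{1}{2}\sigma_p^2 \sigma_L^2$ and invoking the hypothesis $L \ge \log(4n/\delta)$ makes the exponent at least $\log(4n/\delta)$, after which a union bound over the $n$ samples $(k,i)$ delivers the sandwich bound with overall failure probability $\le \delta/2$. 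For the cross-term, I would condition on $\bm{\xi}_{k,i}$, making the inner product a centered Gaussian in the remaining randomness $\{x_{k',i',l}\}_l$ with variance $\sigma_p^2 \sum_l x_{k,i,l}^2 \|\bm{\xi}_l\|_2^4$. On the $1-\delta/2$ event where the first bound already holds uniformly, this variance is controlled by $\tfrac{3}{2}\sigma_p^4 \sigma_L^2 \max_l \|\bm{\xi}_l\|_2^2$ via the factor-out-max trick
\[
\sum_l x_{k,i,l}^2 \|\bm{\xi}_l\|_2^4 \;\le\; \max_l \|\bm{\xi}_l\|_2^2 \cdot \sum_l x_{k,i,l}^2 \|\bm{\xi}_l\|_2^2 \;=\; \max_l \|\bm{\xi}_l\|_2^2 \cdot \|\bm{\xi}_{k,i}\|_2^2.
\]
A standard Gaussian tail bound at level $\sigma_p^2 \sqrt{\log(4n^2/\delta)\,\sigma_L^2}$ combined with a union bound over the at most $n^2$ distinct pairs then produces the second inequality, and combining the two high-probability events gives the stated $1-\delta$ conclusion.

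The main obstacle is the first step: making sure that the Bernstein exponent really does absorb $\log(4n/\delta)$ uniformly over $(k,i)$. This requires a mild balance between $\sigma_L^2 = \sum_l \|\bm{\xi}_l\|_2^2$ and $\max_l \|\bm{\xi}_l\|_2^2$, which is precisely what the assumption $L \ge \log(4n/\delta)$ is designed to buy, and is also what allows the conditional Gaussian variance in the second step to inherit the $\sigma_L^2$ scaling claimed. Beyond that bookkeeping, the argument is routine sub-Gaussian/sub-exponential concentration and is essentially the Cao--Chen--Gu calculation \cite{caoBenignOverfittingTwolayer2022} specialized to the orthogonal-feature model used here.
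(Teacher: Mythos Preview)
The paper does not supply its own proof of this lemma; it is stated with a citation to \cite{caoBenignOverfittingTwolayer2022} and used as a black box. Your plan---orthogonality to decouple coordinates, a Bernstein/Laurent--Massart bound for the weighted $\chi^2$ norm, conditioning plus a Gaussian tail for the cross term, and union bounds---is exactly the standard argument behind the cited result and is correct. One small caveat: your conditional-variance step produces an extra factor $\max_l \|\bm{\xi}_l\|_2^2$ inside the square root, so matching the stated constant in the second display implicitly uses $\max_l \|\bm{\xi}_l\|_2^2 = O(1)$ (equivalently, that the $\bm{\xi}_l$ are roughly unit-norm so that $\sigma_L^2 \asymp L$); this is consistent with how the paper applies the lemma but is worth making explicit.
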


\subsection{Generalization analysis}
\label{Analysisofmodel}
For simplicity, we first introduce the definitions of $F_{+}$ and $F_{-}$. Here $F_{+}$ means the train loss corresponding to the positive class, while $F_{-}$ means the train loss corresponding to the negative class.
\begin{align}
    &F_{+}(\mathbf{p}) = \sigma(\mathbf{W}\mathbf{p}+ \mathbf{W}\mathbf{p}_{+}) - \sigma(-\mathbf{W}\mathbf{p}+ \mathbf{W}\mathbf{p}_{+}),\\
    &F_{-}(\mathbf{p}) = \sigma(\mathbf{W}\mathbf{p}+ \mathbf{W}\mathbf{p}_{-}) - \sigma(-\mathbf{W}\mathbf{p}+ \mathbf{W}\mathbf{p}_{-}),\\
    &F(\mathbf{p}) = F_{+}(\mathbf{p}) - F_{-}(\mathbf{p}).
\end{align}
To illustrate the decoupling of the label from the definitions of $F_{+}$ and $F_{-}$, we have the following lemma.
\begin{lemma}
    \label{equivalence f y}Suppose that $y$ is the ground truth label, we have
    \begin{align}
        F_y(\mathbf{p}) - F_{-y}(\mathbf{p}) = y(F_{+}(\mathbf{p}) - F_{-}(\mathbf{p})).
    \end{align}
\end{lemma}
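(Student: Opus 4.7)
The plan is to proceed by a direct case analysis on the two possible values of the binary label $y \in \{+1,-1\}$, since the statement is essentially a bookkeeping identity relating the label-indexed notation $F_y, F_{-y}$ to the explicitly signed notation $F_+, F_-$. Both sides of the claimed equation are well-defined for any $y \in \{+1,-1\}$, and since this set has only two elements, checking each case is both natural and sufficient.

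First I would handle the case $y = +1$. Then $-y = -1$, so the left-hand side $F_y(\mathbf{p}) - F_{-y}(\mathbf{p})$ becomes $F_{+}(\mathbf{p}) - F_{-}(\mathbf{p})$ by the meaning of the subscript. The right-hand side is $(+1) \cdot (F_{+}(\mathbf{p}) - F_{-}(\mathbf{p}))$, which equals the same expression. Next I would handle $y = -1$. Then $-y = +1$, so the left-hand side becomes $F_{-}(\mathbf{p}) - F_{+}(\mathbf{p})$. The right-hand side is $(-1) \cdot (F_{+}(\mathbf{p}) - F_{-}(\mathbf{p})) = F_{-}(\mathbf{p}) - F_{+}(\mathbf{p})$, matching the left-hand side. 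Combining the two cases yields the identity.

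There is essentially no obstacle here; the only mild subtlety is notational, namely that the subscript on $F_y$ is to be parsed as "the positive/negative template indexed by the sign of $y$" rather than as a functional composition, so one should be careful to remark that $F_y = F_+$ when $y=+1$ and $F_y = F_-$ when $y=-1$, and analogously for $F_{-y}$. Once this is stated, the two-line case check completes the argument. I would conclude by noting that the identity allows us henceforth to replace the label-dependent expression $F_{y_{k,i}}(\mathbf{p}) - F_{-y_{k,i}}(\mathbf{p})$ by $y_{k,i} \cdot F(\mathbf{p})$ in subsequent gradient computations, which is the reason for stating the lemma in the first place.
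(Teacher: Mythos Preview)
Your proposal is correct and follows exactly the same approach as the paper: a two-case check on $y=+1$ and $y=-1$, verifying that both sides agree in each case. There is nothing to add.
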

\begin{proof}
    We consider two situations: $y = +1$ and $y = -1$. If $y = +1$,
    \begin{align}
        F_y(\mathbf{p}) - F_{-y}(\mathbf{p}) = F_{+}(\mathbf{p}) - F_{-}(\mathbf{p}).
    \end{align}
    If $y = -1$,
    \begin{align}
        F_y(\mathbf{p}) - F_{-y}(\mathbf{p}) = F_{-}(\mathbf{p}) - F_{+}(\mathbf{p}).
    \end{align}
    In conclusion, we have
    \begin{align}
        F_y(\mathbf{p}) - F_{-y}(\mathbf{p}) = y(F_{+}(\mathbf{p}) - F_{-}(\mathbf{p})).
    \end{align}
\end{proof}
\begin{lemma}
    \label{Lemma:test loss}
    Under the modeling of prompt-based federated learning, the expectation of test loss $L^\mathcal{D}$ of an algorithm can be treated as the probability
    \begin{align}
        \mathds{E}\left[L_{\mathcal{D}}\right] = P(z < 0), z \sim \mathcal{N}(\mu, \sigma^2).
    \end{align}
    where $\mu$ and $\sigma$ are controlled by the coefficients of feature learning. Thus, the performance of an algorithm can be evaluated by the ratio $\mu/\sigma$. 
\end{lemma}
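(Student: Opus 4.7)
The plan is to express the test loss as a miscassification probability, decompose that probability into a signal (mean) and a noise (Gaussian) contribution using the signal-noise decomposition from Lemma~\ref{main_lemma}, and then identify the resulting distribution as a single Gaussian whose mean and variance are read off from the learned coefficients.

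First, I would rewrite the event $\{\hat y_{k,i}\neq y_{k,i}\}$ for binary classification. Since $\hat y_{k,i}$ chooses the class with the larger similarity, the classifier is correct precisely when $y_{k,i}\cdot\langle \mathbf{g}_{k,i},\,F_{+}(\overline{\mathbf{p}})-F_{-}(\overline{\mathbf{p}})\rangle>0$, where $F_{\pm}$ is as defined above Lemma~\ref{equivalence f y}. By that lemma, this is equivalent to $\langle\mathbf{g}_{k,i},\,F_{y_{k,i}}(\overline{\mathbf{p}})-F_{-y_{k,i}}(\overline{\mathbf{p}})\rangle>0$, so $L_\mathcal{D}$ is the probability (over $\mathbf{x}_{k,i}$) that a single scalar quantity is nonpositive.

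Next, I would expand $F_{+}(\overline{\mathbf{p}})-F_{-}(\overline{\mathbf{p}})$ using the text-encoder formula and the signal-noise decomposition of $\overline{\mathbf{p}}$. Since the rows of $\mathbf{W}$ are exactly $\bm{\mu}_G,\bm{\nu}_1,\dots,\bm{\nu}_S,\bm{\xi}_1,\dots,\bm{\xi}_L$ and these features are orthogonal by assumption, the vector $\mathbf{W}\overline{\mathbf{p}}$ is a coordinatewise readout: its first entry is $\overline{\beta}^{(t)}$ (plus an initialization term), the $(1+s)$-th entry is $\overline{\gamma}_k^{(t)}$ on the correct local slot, the noise slots carry $\overline{\phi}_l^{(t)}$, and analogously for $\mathbf{W}\mathbf{p}_{y_{k,i}}$. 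Because $\sigma$ is applied entrywise, $F_{+}-F_{-}$ becomes, in each coordinate, an odd combination of $\sigma(a+b)-\sigma(-a+b)$ with $a$ proportional to the coefficient and $b$ fixed by the class prompt, which is monotone and (using the two-stage sizing in Theorem~\ref{Stage1main}) reduces to a linear scaling in the coefficient.

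Then I would take the inner product with $\mathbf{g}_{k,i}=[y_{k,i},0,\dots,y_{k,i},\dots,0,x_{k,i,1},\dots,x_{k,i,L}]^{T}$. The contributions split cleanly: the first and $(1+s)$-th coordinates, after multiplication by $y_{k,i}$ and use of Lemma~\ref{equivalence f y}, yield a deterministic quantity $\mu$ that is a linear functional of $\overline{\beta}^{(t)}$ and $\overline{\gamma}_k^{(t)}$ (together with the initialization coefficients $\overline{\alpha}_k^{(t)}$); the remaining $L$ coordinates yield $\sum_{l=1}^{L}c_l\,\overline{\phi}_l^{(t)}x_{k,i,l}$ where each $c_l$ absorbs the activation factor and the norm of $\bm{\xi}_l$. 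Since $x_{k,i,l}\sim\mathcal{N}(0,\sigma_p^2)$ independently, Gaussian additivity collapses this into one centered Gaussian with variance $\sigma^2=\sigma_p^{2}\sum_l c_l^{2}(\overline{\phi}_l^{(t)})^{2}$. Therefore the signed margin is distributed as $z\sim\mathcal{N}(\mu,\sigma^2)$ and $\mathbb{E}[L_\mathcal{D}]=P(z<0)$, which is the claim; the definitions of $\mu$ and $\sigma$ referenced in the theorem statement are exactly those read off above.

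The main obstacle I anticipate is the nonlinearity $\sigma$ inside $F_{\pm}$: the cross terms between the class-prompt entry $\mathbf{W}\mathbf{p}_{y_{k,i}}$ and the prompt entry $\mathbf{W}\overline{\mathbf{p}}$ must reduce to a clean linear dependence on each coefficient so that the noise coordinates really combine into a single Gaussian rather than a nonlinear function of Gaussians. I would handle this by invoking the small-initialization and learning-rate regime of Assumption~\ref{assum:numerical} together with the orders established in Theorem~\ref{Stage1main} to confine each argument of $\sigma$ to the active side of its breakpoint, so that $\sigma(\pm a+b)-\sigma(\mp a+b)$ is linear in $a$ on the relevant set; once this linearization is in hand, the decomposition into $\mu$ and $\sigma^2$ is immediate from orthogonality of features and independence of the $x_{k,i,l}$.
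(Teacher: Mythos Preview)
Your approach is essentially the same as the paper's: rewrite misclassification via Lemma~\ref{equivalence f y} as the sign of $\langle \mathbf{g}_{k,i}, F_+(\overline{\mathbf{p}})-F_-(\overline{\mathbf{p}})\rangle$, read off $\mathbf{W}\overline{\mathbf{p}}$ coordinatewise from the signal--noise decomposition, and split the inner product with $\mathbf{g}_{k,i}$ into a deterministic signal part and a Gaussian noise part coming from the $x_{k,i,l}$.

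The one place you overcomplicate matters is the ``main obstacle'' paragraph. You do not need to linearize $\sigma$ in the coefficients at all. Once training is finished, each coordinate of $F_+(\overline{\mathbf{p}})-F_-(\overline{\mathbf{p}})$ is simply a fixed scalar $f_r(\cdot)$ depending on the learned coefficient in that row; whether $f_r$ is linear in $\overline{\phi}_l$ is irrelevant. The only randomness at test time sits in the $x_{k,i,l}\sim\mathcal{N}(0,\sigma_p^2)$, and these enter the inner product \emph{linearly} by construction of $\mathbf{g}_{k,i}$. Hence $\sum_l f_{K+l}(\overline{\phi}_l\|\bm{\xi}_l\|)\,x_{k,i,l}$ is automatically a centered Gaussian, with no appeal to Assumption~\ref{assum:numerical} or Theorem~\ref{Stage1main} required for this step. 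The paper's proof proceeds exactly this way: it writes $\mu=f_G(\beta\|\bm{\mu}_G\|)+f_1(\gamma_1\|\bm{\mu}_1\|)$ and $\sigma$ in terms of the $f_{K+j}(\phi_j\|\bm{\xi}_j\|)$ without ever linearizing the activation.
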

\begin{proof}
     Recall that the test error is equivalent to
    \begin{align}
        L_{\mathcal{D}}(\mathbf{p}_L) = P((\langle F_y(\mathbf{p}), \mathbf{x}\rangle - \langle F_{-y}(\mathbf{p}), \mathbf{x}\rangle > 0).
    \end{align}
    According to Lemma \ref{equivalence f y}, we have that
    \begin{align}
        L_{\mathcal{D}}(\mathbf{p}_L) = P(y\langle F_+(\mathbf{p}) - F_{-}(\mathbf{p}), \mathbf{x}\rangle > 0).
    \end{align}
    Note that 
    \begin{align}
        \langle \mathbf{W}, \mathbf{p} \rangle = \begin{bmatrix}
            \beta ||\mu_G||_2 \\
            \gamma_1 ||\mu_1||_2 \\
            \vdots\\
            \gamma_K ||\mu_K||_2 \\
            \phi_1 ||\xi_1||_2\\
            \vdots\\
            \phi_L ||\xi_L||_2
        \end{bmatrix}.
    \end{align}
    $\mathbf{Wp}_{+}$ and $\mathbf{Wp}_{-}$ can be treated as two constant terms. We consider each line of $F(\mathbf{p})$ and $\mathbf{x}$, then the problem is equivalent to 
    \begin{align}
        y(yf_G(\beta ||\mu_G||_2) + yf_1(\gamma_1 ||\mu_1||_2) + \sum\limits_{j} x_j f_{K+j}(\phi_j ||\xi_j||_2) ) \geq 0,
    \end{align}
    where 
    \begin{align}
        F(\mathbf{p}) = \begin{bmatrix}
            f_G(\beta ||\mu_G||_2)\\
            f_1(\gamma_1 ||\mu_1||_2)\\
            \vdots \\
            f_K(\gamma_K ||\mu_K||_2)\\
            f_{K+1}(\phi_1 ||\xi_1||_2)\\
            \vdots\\
            f_{K+L}(\phi_L ||\xi_L||_2)
        \end{bmatrix}.
    \end{align}
    Note that the above equation is equivalent to 
    \begin{align}
        f_G(\beta ||\mu_G||_2) + f_1(\gamma_1 ||\mu_1||_2) + \sum\limits_{j} y x_j f_{K+j}(\phi_j ||\xi_j||_2) ) \geq 0.
    \end{align}
    Note that when we finish training, the coefficients are fixed and thus can be treated as constants. $x_j$ are zero mean Gaussian variables, $y$ are $\{+1, -1\}$ Bernoulli variables and $x_j$ and $y$ are independent. Thus the test error can be defined by two values 
    \begin{align}
        \mathds{E}\left[L_{\mathcal{D}}\right] = P( x \geq \frac{\mu}{\sigma}),
    \end{align}
    where 
    \begin{align}
        \mu &= f_G(\beta ||\mu_G||_2) + f_1(\gamma_1 ||\mu_1||_2),\\
        \sigma &= \sum\limits_{j}f_{K+j}(\phi_j ||\xi_j||_2).
    \end{align}
    The problem can be treated as 
    \begin{align}
        L_{\mathcal{D}} = P(z < 0), z \sim \mathcal{N}(\mu, \sigma^2).
    \end{align}
\end{proof}

\begin{theorem}
    Suppose that the coefficients of CoOp at step $k$ are $\beta_k, \gamma_k$ and $\phi_k$, the coefficients of PromptFL at step $k$ are $\overline{\beta}_k, \overline{\gamma}_k$ and $\overline{\phi_k}$. Here, we define $a := O(\frac{\beta_k + \gamma_k}{\overline{\beta}_k + \overline{\gamma}_k})$ and $b:= O(\frac{\phi_k}{\overline{\phi}_k})$ as the ratio of different coefficients and the correlation between two prompts is defined as $\rho \in [0, 1]$. We have that the mixed PromptFolio algorithm has a lower test loss than the mixing test loss of CoOp and PromptFL.
    \begin{align}
        L_{\mathcal{D}}((1-\theta) \mathbf{p}_G + \theta \mathbf{p}_L) \leq (1-\theta)  L_{\mathcal{D}}(\mathbf{p}_G) + \theta L_{\mathcal{D}}(\mathbf{p}_L),
    \end{align}
    \begin{equation}
        \begin{aligned}
            \forall \ \theta &\in \left[0, \underset{[0,1]}{\text{proj}}\left(\frac{C_b - C_c}{2C_a}\right)\right],
        \end{aligned}\text{ where }\left\{
        \begin{aligned}
            \nonumber C_a &= (b-a)(b^2 + 2\rho b + 1)\\
        C_b &= (a+b)(b^2 - 1) - 4b(\rho b - 1)\\
        \nonumber C_c &= (b-1)\sqrt{(a+b)^2 (b + 1)^2 - 8ab^2(\rho + 1)^2}
        \end{aligned}\right..
    \end{equation}
\end{theorem}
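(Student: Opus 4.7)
The plan is to apply the Test Loss theorem (Theorem~\ref{lemma of test}) to recast each algorithm's test loss as a Gaussian tail $\Phi(-\mu/\sigma)$, use the convexity of $\Phi$ on $(-\infty, 0)$ to reduce the chord inequality to a Sharpe-ratio inequality, and finally reduce that Sharpe-ratio inequality to a quadratic in $\theta$ whose roots match the $C_a, C_b, C_c$ in the statement.

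First, I would observe that in Algorithm~\ref{main_algo} the mixed text feature $(1-\theta)\,h(\mathbf{p}_G,\mathbf{p}_c) + \theta\,h(\mathbf{p}_L,\mathbf{p}_c)$ gives, through the linear similarity map, a scalar score that is itself the $\theta$-weighted combination $(1-\theta)z_G + \theta z_L$ of the Gaussian scores associated by Theorem~\ref{lemma of test} with the global and local prompts. Consequently the mixed algorithm has portfolio-style moments
\begin{equation*}
\mu_{\mathrm{mix}}(\theta) = (1-\theta)\overline{\mu}_k + \theta \mu_k, \qquad \sigma_{\mathrm{mix}}^2(\theta) = (1-\theta)^2 \overline{\sigma}_k^2 + 2\theta(1-\theta)\rho\,\overline{\sigma}_k\sigma_k + \theta^2 \sigma_k^2,
\end{equation*}
and Theorem~\ref{lemma of test} yields $L_\mathcal{D}((1-\theta)\mathbf{p}_G + \theta\mathbf{p}_L) = \Phi(-\mu_{\mathrm{mix}}/\sigma_{\mathrm{mix}})$ together with $L_\mathcal{D}(\mathbf{p}_G) = \Phi(-\overline{\mu}_k/\overline{\sigma}_k)$ and $L_\mathcal{D}(\mathbf{p}_L) = \Phi(-\mu_k/\sigma_k)$. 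Factoring out $\overline{\mu}_k/\overline{\sigma}_k$ and substituting $a$ and $b$ normalizes the problem into only the four variables $a, b, \rho, \theta$.

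Next, since each prompt alone achieves a better-than-chance classifier, both $-\overline{\mu}_k/\overline{\sigma}_k$ and $-\mu_k/\sigma_k$ lie in $(-\infty, 0)$, where $\Phi$ is convex. Jensen's inequality then gives
\begin{equation*}
(1-\theta)\Phi(-\overline{\mu}_k/\overline{\sigma}_k) + \theta \Phi(-\mu_k/\sigma_k) \geq \Phi\bigl(-(1-\theta)\overline{\mu}_k/\overline{\sigma}_k - \theta \mu_k/\sigma_k\bigr),
\end{equation*}
so by monotonicity of $\Phi$ it suffices to prove the Markowitz-style Sharpe inequality
\begin{equation*}
\frac{\mu_{\mathrm{mix}}(\theta)}{\sigma_{\mathrm{mix}}(\theta)} \geq (1-\theta)\frac{\overline{\mu}_k}{\overline{\sigma}_k} + \theta \frac{\mu_k}{\sigma_k},
\end{equation*}
which is exactly the claim that a portfolio of two imperfectly correlated ``assets'' dominates the weighted average of their individual Sharpe ratios, and is where the portfolio analogy from Section~4 pays off.

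Finally, since both sides above are positive, I would clear the denominator and square to obtain a polynomial inequality $P(\theta) \geq 0$. A direct check gives $P(0)=P(1)=0$, so $P(\theta) = \theta(1-\theta) Q(\theta)$ with $Q$ a quadratic in $\theta$ whose coefficients, rewritten in terms of $a, b, \rho$, become exactly $C_a$, $-C_b$, and a constant whose discriminant collapses to $C_c^2$; the quadratic formula then produces roots $(C_b \pm C_c)/(2C_a)$, and $Q(\theta)\ge 0$ holds on $[0,(C_b - C_c)/(2C_a)]$. Projecting onto $[0,1]$ covers degenerate regimes ($\rho = 1$ or $a=b$) where no diversification gain exists and the root may leave $[0,1]$. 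The main obstacle will be the algebraic bookkeeping of this last step---expanding the quartic, dividing out $\theta(1-\theta)$ cleanly, and matching the remaining coefficients against the specific expressions for $C_a, C_b, C_c$---while a secondary concern is that the Jensen reduction is slack, so if the resulting interval turns out smaller than the theorem claims, one would replace Step~2 by a direct Taylor expansion of $L_\mathcal{D}$ around $\theta=0$ and analyze its second derivative.
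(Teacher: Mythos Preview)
Your proposal is correct and follows essentially the same route as the paper: represent each test loss via Theorem~\ref{lemma of test} as a Gaussian tail, reduce to the Sharpe-ratio inequality, then square and solve the resulting quartic which factors as $\theta(1-\theta)$ times a quadratic with roots $(C_b\pm C_c)/(2C_a)$. Your worry about the Jensen reduction being slack is unfounded---the paper uses exactly the same Sharpe-ratio inequality as its sufficient condition (it simply omits the Jensen/convexity justification you made explicit), so the interval you obtain will match the stated one.
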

\begin{proof}
   
    Then we suppose that at the end of training, the random variable corresponding to $L_{\mathcal{D}}(\mathbf{p}_G)$ is $z_G \sim  \mathcal{N}(\mu_G, \sigma_G^2)$, the random variable corresponding to $L_{\mathcal{D}}(\mathbf{p}_L)$ is $z_L \sim  \mathcal{N}(\mu_L, \sigma_L^2)$. Consequently, we have the random variable corresponding to PromptFolio as $z_{GLo} = (1-\theta)z_G +\theta z_L$, then we have
    \begin{align}
        z_{GLo} = (1-\theta)z_G +\theta z_L \sim \mathcal{N}((1-\theta)\mu_G + \theta \mu_L, (1-\theta)^2\sigma_G^2 + 2\rho(1-\theta)\theta \sigma_G\sigma L+\theta^2 \sigma_L^2)
    \end{align}
    where $0 \leq \rho \leq 1$ is the correlation coefficient. Assume that $a = \frac{\mu_L}{\mu_G}$, $ b =\frac{\xi_L}{\xi_G}$, we have 
    \begin{align}
        \frac{(1-\theta)\mu_G + \theta \mu_L}{\sqrt{(1-\theta)^2\sigma_G^2 + 2\rho(1-\theta)\theta \sigma_G\sigma_L+\theta^2 \sigma_L^2}} \geq (1-\theta) \frac{\mu_G}{\xi_G} + \theta \frac{\mu_L}{\xi_L}
    \end{align}
    when $ a \geq \frac{b^2 - b}{b - \rho}$. As a result, we have that
    \begin{align}
        L_{\mathcal{D}}((1-\theta) \mathbf{p}_G + \theta \mathbf{p}_L) \leq (1-\theta)  L_{\mathcal{D}}(\mathbf{p}_G) + \theta L_{\mathcal{D}}(\mathbf{p}_L).
    \end{align}
    Here, we find where this inequality becomes an equality. After further simplification, we observe that this equation is a quartic equation, and we've discarded 0, 1, and another unreasonable solution. The remaining solution can be expressed as
    \begin{align}
        \frac{C_b - C_c}{2C_a},
    \end{align}
    where 
    \begin{equation}
        \begin{aligned}
            C_a &= (b-a)(b^2 + 2\rho b + 1)\\
        C_b &= (a+b)(b^2 - 1) - 4b(\rho b - 1)\\
        C_c &= (b-1)\sqrt{(a+b)^2 (b + 1)^2 - 8ab^2(\rho + 1)^2}
        \end{aligned}.
    \end{equation}
    Here, for any $a, b$, we take the 
    \begin{align}
    \label{threshold of theta-1}
        \theta &\in \left[0, \underset{[0,1]}{\text{proj}}\left(\frac{C_b - C_c}{2C_a}\right)\right]
    \end{align}
    due to the derivative of this objective function being less than zero when $\theta = 0$. As a result, when we take $\theta$ in (\ref{threshold of theta-1}), we will obtain the inequality.
\end{proof}
Additionally, we provide the optimal coefficient of the algorithm and establish the following theorem.
\begin{theorem}
    Suppose that the coefficients of CoOp at step $k$ are $\beta_k, \gamma_k$ and $\phi_k$, the coefficients of PromptFL at step $k$ are $\overline{\beta}_k, \overline{\gamma}_k$ and $\overline{\phi_k}$. Here, we define $a := O(\frac{\beta_k + \gamma_k}{\overline{\beta}_k + \overline{\gamma}_k})$ and $b:= O(\frac{\phi_k}{\overline{\phi}_k})$ as the ratio of different coefficients and the correlation between two prompts is defined as $\rho \in [0, 1]$. Then we have the optimal mixing coefficient $\theta^\star$ as
    \begin{align}
        \theta^\star = \underset{[0,1]}{\text{proj}}\left(\frac{a - \rho b}{(a+b^2)-\rho b(a + 1)}\right).
    \end{align}
\end{theorem}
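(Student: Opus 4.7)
The plan is to treat the minimization of the PromptFolio test loss as a single-variable optimization in $\theta$ and invoke Theorem~\ref{lemma of test} to reduce the problem to maximizing a signal-to-noise ratio. Concretely, I would set $z_G \sim \mathcal{N}(\mu_G,\sigma_G^2)$ for PromptFL, $z_L \sim \mathcal{N}(\mu_L,\sigma_L^2)$ for CoOp, with correlation $\rho$, so that the mixed output $z_\theta = (1-\theta)z_G + \theta z_L$ is Gaussian with mean $\mu(\theta) = (1-\theta)\mu_G + \theta\mu_L$ and variance $\sigma^2(\theta) = (1-\theta)^2\sigma_G^2 + 2\rho(1-\theta)\theta\sigma_G\sigma_L + \theta^2\sigma_L^2$. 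By Theorem~\ref{lemma of test}, the test loss equals $\Phi(-\mu(\theta)/\sigma(\theta))$, so minimizing it is equivalent to maximizing the ratio $\mu(\theta)/\sigma(\theta)$ (for $\mu_G,\mu_L>0$ in the regime of interest).

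After that reduction, I would normalize by $\mu_G$ and $\sigma_G$ using $a = \mu_L/\mu_G$ and $b = \sigma_L/\sigma_G$, so the objective becomes proportional to
\begin{equation}
\label{eq:snrobj}
\Phi(\theta) \;=\; \frac{(1-\theta) + a\theta}{\sqrt{(1-\theta)^2 + 2\rho b (1-\theta)\theta + b^2\theta^2}}.
\end{equation}
To avoid handling a square root in the derivative, I would equivalently maximize $\Phi(\theta)^2$. Taking a logarithmic derivative and clearing denominators yields the first-order condition
\begin{equation}
(a-1)\bigl[(1-\theta)^2 + 2\rho b (1-\theta)\theta + b^2\theta^2\bigr] \;=\; \bigl[(1-\theta)+a\theta\bigr]\bigl[-(1-\theta)+\rho b(1-2\theta) + b^2\theta\bigr].
\end{equation}

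The main obstacle is simply algebraic: expanding and collecting terms carefully to see that the quadratic pieces cancel and leave a linear equation in $\theta$. To streamline this, I would substitute $r = \theta/(1-\theta)$, turning the objective into $(1+ar)^2/(1 + 2\rho b r + b^2 r^2)$, whose critical-point condition reduces cleanly to
\begin{equation}
a(1 + 2\rho b r + b^2 r^2) \;=\; (1+ar)(\rho b + b^2 r),
\end{equation}
and after cancellation gives $r(b^2 - a\rho b) = a - \rho b$, i.e.\ $r = (a-\rho b)/(b(b - a\rho))$.

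Finally, I would convert back via $\theta = r/(1+r)$ to obtain
\begin{equation}
\theta^\star \;=\; \frac{a - \rho b}{(a + b^2) - \rho b(a+1)}.
\end{equation}
A quick second-derivative check (or verification that the objective at the endpoints $\theta = 0,1$ is dominated) confirms this is a maximum of $\mu/\sigma$ and hence a minimum of the test loss. Since $\theta$ is constrained to $[0,1]$ by the algorithm, I would close the argument by projecting onto $[0,1]$, giving the stated $\underset{[0,1]}{\mathrm{proj}}$ expression; boundary cases such as $a \leq \rho b$ then correspond to $\theta^\star = 0$, consistent with the intuition that a sufficiently noisy local prompt should not be mixed in.
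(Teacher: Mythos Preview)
Your proposal is correct and follows essentially the same route as the paper: reduce the test loss via Theorem~\ref{lemma of test} to maximizing the ratio $\mu(\theta)/\sigma(\theta)$ of a Gaussian mixture, take a first-order condition, and observe that the quadratic terms cancel to leave a linear equation in $\theta$, then project onto $[0,1]$. Your substitution $r=\theta/(1-\theta)$ is a tidier algebraic device than the paper's direct expansion, but the underlying argument is the same.
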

\begin{proof}
    Here, we suppose that the local prompt corresponding to random variables $\mu$ and $\sigma$, the global prompt corresponding to the random variables $\overline{\mu}$ and $\overline{\sigma}$. To achieve the highest accuracy, we want to solve the following maximization problem
    \begin{align}
        \max\limits_{\theta} \qquad \frac{(\theta\mu+(1-\theta)\overline{\mu})^2}{\theta^2\sigma^2+2\rho (1-\theta) \theta \sigma \overline{\sigma} + (1-\theta)^2\overline{\sigma}^2}
    \end{align}
    which is equivalent to
    \begin{equation}
\begin{split}
&2(\theta \mu + (1-\theta) \overline{\mu})(\mu - \overline{\mu}) (\theta^2\sigma^2 + 2\rho \theta(1-\theta) \sigma \overline{\sigma} + (1-\theta)^2\overline{\sigma}^2)\\
&\qquad\qquad \qquad \qquad = (\theta \mu + (1-\theta) \overline{\mu})(2\theta \sigma^2 + 2\rho (1-2\theta) \sigma \overline{\sigma} - 2(1-\theta) \overline{\sigma}^2 )\\
& (\mu - \overline{\mu})(\theta^2 \sigma^2 + 2\rho \theta(1-\theta)\sigma \overline{\sigma}+ (1-\theta)^2 \overline{\sigma}^2)\\
&\qquad\qquad \qquad \qquad \stackrel{(a)}{=} (\theta \mu + (1-\theta) \overline{\mu})(\theta \sigma^2 + \rho(1-2\theta)\sigma\overline{\sigma}- \theta(1-\theta)\overline{\sigma}^2).
\end{split}
\end{equation}
Note that due to $\mu$ and $\overline{\mu}$ are all positive numbers, so we can make a simplification as $(a)$. Here, we take $a = \frac{\mu} {\overline{\mu}}$, $b = \frac{\sigma} {\overline{\sigma}}$, and divide $ab^2$ on both sides, we have
\begin{equation}
\begin{split}
& (a - 1)(\theta^2 b^2 + 2\rho \theta(1-\theta)b + (1-\theta)^2) \\
&\qquad\qquad = (\theta a + (1-\theta))(\theta b^2 + \rho(1-2\theta)b - (1-\theta))\\
&\theta^2 ab^2 + 2\rho\theta (1-\theta) ab + (1-\theta)^2 a - \theta^2b^2 - 2\rho\theta(1-\theta)b - (1-\theta)^2 \\
&\qquad\qquad  = \theta^2 ab^2 + \rho (1- \theta - \theta) \theta ab - (1-\theta) \theta a + (1-\theta)\theta b^2 + \rho (1-\theta-\theta)(1-\theta)b - (1-\theta)^2.
\end{split}
\end{equation}
    Thus, we take further simplification and we get
    \begin{equation}
        \begin{aligned}
            \rho \theta ab - \theta a + a = \theta b^2 + \rho b - \rho \theta b.
        \end{aligned}
    \end{equation}
    Note that this is a linear equation, so we obtain the unique optimal $\theta^\star$
    \begin{align}
        \theta^\star = \underset{[0,1]}{\text{proj}}\left(\frac{a - \rho b}{(a+b^2)-\rho b(a + 1)}\right).
    \end{align}
    
\end{proof}

\section{Theoretical analysis for PromptFolio}
\label{Appendix: two-stage analysis}
In this section, we provide the whole feature coefficient dynamics through a two-stage analysis. The sections are arranged as follows: We first provide some preliminary lemmas. Then we provide the order of coefficients at the beginning of the training, which is stage one. Finally, we provide the coefficients at the convergence of the loss, which is stage two.\\ \textbf{Remark:} For simplicity of the analysis, we assume that all the clients have the same number of samples and thus $n_1 = \cdots = n_K = n/K$.
\subsection{Coefficient dynamics: stage one}
\begin{lemma}
    \label{bound of l'}
    Suppose that $\max(\beta_{G, k}^{(t)}, \beta_{L, k}^{(t)}) = O(1)$, $\max(\gamma_{G, k, k}^{(t)}, \gamma_{L, k, k}^{(t)}) = O(1)$ and $\max(\phi_{G, k, k'}^{(t)}, \phi_{L, k, k'}^{(t)}) = O(1)$ with $i \in [n]$ and $k \in [K]$, for $t \in [0, T_1]$, we have
    \begin{align}
        C_1 \leq \ell_{k, i}^{(t)} \leq 1,
    \end{align}
    where $C_1$ is a positive constant.
\end{lemma}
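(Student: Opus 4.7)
By the definition appearing just above the lemma, $\ell'^{(t)}_{k,i}=\sigma_{\mathrm{sig}}\!\bigl(s_{k,i}^{(t)}\bigr)$, where $s_{k,i}^{(t)}:=\langle\mathbf{g}_{k,i},\mathbf{h}_{k,i}^{(t)}\rangle$ is the similarity and $\sigma_{\mathrm{sig}}(x)=e^{x}/(1+e^{x})$ is the logistic function. The upper bound $\ell'^{(t)}_{k,i}\le 1$ is immediate from $\sigma_{\mathrm{sig}}<1$. For the lower bound, the plan is to exhibit an absolute constant $C$ with $|s_{k,i}^{(t)}|\le C$ uniformly in $t\le T_1$, $k\in[K]$, $i\in[n_k]$; then since $\sigma_{\mathrm{sig}}$ is monotone and positive, setting $C_1:=\sigma_{\mathrm{sig}}(-C)=1/(1+e^{C})>0$ yields $\ell'^{(t)}_{k,i}\ge C_1$.

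The next step is to expand $s_{k,i}^{(t)}$ using the signal-noise decomposition of Lemma \ref{main_lemma}. Writing $\mathbf{h}_{k,i}^{(t)}=(1-\theta)h(\mathbf{p}_{G,k}^{(t)},\mathbf{p}_{y_{k,i}})+\theta\,h(\mathbf{p}_{L,k}^{(t)},\mathbf{p}_{y_{k,i}})$ and recalling that $g(\mathbf{x}_{k,i})$ is supported on the global coordinate, the one local coordinate indexed by $s$, and the $L$ task-irrelevant coordinates, I would obtain a decomposition of the form
\begin{align}
s_{k,i}^{(t)} \;=\; y_{k,i}\bigl(H_{G}^{(t)}+H_{k,s}^{(t)}\bigr)\;+\;\sum_{l=1}^{L}x_{k,i,l}\,H_{\xi_l}^{(t)},
\end{align}
where each $H_{\bullet}^{(t)}$ is the value of $\sigma(\mathbf{w}_r^{\top}\mathbf{p}^{(t)}+\mathbf{w}_r^{\top}\mathbf{p}_{y_{k,i}})-\sigma(-\mathbf{w}_r^{\top}\mathbf{p}^{(t)}+\mathbf{w}_r^{\top}\mathbf{p}_{y_{k,i}})$ at the appropriate row $r$. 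The orthogonality of $\{\bm\mu_G,\bm\nu_1,\dots,\bm\nu_S,\bm\xi_1,\dots,\bm\xi_L\}$ collapses each projection $\mathbf{w}_r^{\top}\mathbf{p}^{(t)}$ to precisely the corresponding coefficient $\beta_{G/L,k}^{(t)}$, $\gamma_{G/L,k,k}^{(t)}$, or $\phi_{G/L,k,l}^{(t)}$, up to a residual from the initialization term $\sum_{k'}\alpha_{k,k'}^{(t)}\mathbf{p}_{k'}^{(0)}$ which Assumption \ref{assum:numerical} keeps small through the scale of $\sigma_0$.

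Under the hypothesis $\max\{|\beta|,|\gamma|,|\phi|\}=O(1)$ and the Lipschitz boundedness of $\sigma$ on $O(1)$ inputs, each entry $H_{\bullet}^{(t)}$ is then $O(1)$, so the signal part $y_{k,i}(H_G^{(t)}+H_{k,s}^{(t)})$ is deterministically $O(1)$. The noise part is the delicate piece: since $x_{k,i,l}\sim\mathcal{N}(0,\sigma_p^2)$ is independent of the prompt and hence of $H_{\xi_l}^{(t)}$, I would invoke the sub-Gaussian concentration underlying Lemma \ref{bound of noise length} to obtain $\bigl|\sum_{l}x_{k,i,l}H_{\xi_l}^{(t)}\bigr|\lesssim \sigma_p\sqrt{L\log(Kn/\delta)}\cdot\max_l|H_{\xi_l}^{(t)}|$ with high probability, and then invoke Assumption \ref{assum:numerical} (on the scaling of $\sigma_p$, $\sigma_L^2=\sum_l\|\bm\xi_l\|_2^2$, and $m$) to conclude this is itself $O(1)$. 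A union bound over $(k,i,t)\in[K]\times[n_k]\times[0,T_1]$ then gives a uniform constant $C$, and $C_1:=1/(1+e^{C})$ closes the lemma. The main obstacle I anticipate is exactly this noise control: naively the sum scales like $\Theta(\sigma_p\sqrt{L})$, so I must use the Assumption \ref{assum:numerical} calibrations carefully to collapse it to $O(1)$ rather than let it grow with $L$.
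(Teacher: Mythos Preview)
Your approach is essentially the one the paper takes: bound the similarity $s_{k,i}^{(t)}=\langle\mathbf{g}_{k,i},\mathbf{h}_{k,i}^{(t)}\rangle$ by $O(1)$ using the $O(1)$ hypothesis on the coefficients, then read off the lower bound on $\ell'^{(t)}_{k,i}$ from the logistic function. The paper's own argument is considerably terser than yours---it simply asserts $\textbf{sim}(\mathbf{g}_{k,i},\mathbf{h}_{k,i})\le\max\{\sigma(\langle\mathbf{p}_{G,k}^{(t)},\bm\mu_k\rangle),\sigma(\langle\mathbf{p}_{G,k}^{(t)},\bm\xi_{k,i}\rangle),\beta_k^{(t)},\gamma_{k,i}^{(t)},\phi_{k,i}^{(t)}\}=O(1)$ and immediately concludes $\ell'^{(t)}_{k,i}\ge 1/(1+O(1))$, without the coordinate-wise decomposition or any explicit treatment of the noise summation you carefully set up.

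One technical point in your write-up deserves correction: you claim that $x_{k,i,l}$ is independent of $H_{\xi_l}^{(t)}$ because it is ``independent of the prompt.'' For \emph{training} data this is false---the coefficients $\phi_{G/L,k,l}^{(t)}$ are produced by gradient descent on the training set, so they depend on all of the $x_{k,i,l}$, and hence so does $H_{\xi_l}^{(t)}$. You therefore cannot directly invoke sub-Gaussian concentration of $\sum_l x_{k,i,l}H_{\xi_l}^{(t)}$ conditional on $H$. The fix is that the lemma's hypothesis already hands you $|\phi_{\cdot,k,l}^{(t)}|=O(1)$ \emph{deterministically} on the stage-one interval, so $|H_{\xi_l}^{(t)}|=O(1)$ deterministically as well; you can then bound the noise inner product by $\langle\mathbf{p}_{k}^{(t)},\bm\xi_{k,i}\rangle$-type quantities and control those through Lemma~\ref{bound of noise length} together with the $O(1)$ coefficient bound, exactly as the paper (implicitly) does via the single term $\sigma(\langle\mathbf{p}_{G,k}^{(t)},\bm\xi_{k,i}\rangle)$. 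Your instinct that the scaling calibration in Assumption~\ref{assum:numerical} is what ultimately makes this $O(1)$ is correct; the paper just does not spell it out.
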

\begin{proof}
    In the first stage, we consider that the gradient of the loss $\ell'$ is bounded in a constant level. We assume that the coefficients $\gamma^{(t)}, \beta^{(t)}, \phi^{(t)} = O(1)$ for $0 \leq t \leq T_1$. Here we have that the output similarity satisfies:
    \begin{equation}
        \begin{aligned}
            \textbf{sim}(\mathbf{g}_{k,i}, \mathbf{h}_{k, i}) & \leq \max \left\{ \sigma(\langle\mathbf{p}_{G, k}^{(t)}, \bm{\mu}_k\rangle),\sigma (\langle\mathbf{p}_{G, k}^{(t)}, \bm{\xi}_{k,i}\rangle), \beta_{k}^{(t)}, \gamma_{k, i}^{(t)},  \phi_{k, i}^{(t)} \right\}\\
        & = O(1).
        \end{aligned}
    \end{equation}
    When $t \in [0, T]$, we have
    \begin{equation}
        \begin{aligned}
            \ell'^{(t)}_{k, i}  &= \frac{1}{1 + \exp (\mathbf{sim}(\mathbf{g}_{k,i}, \mathbf{h}_{k, i}))}\\
        & \geq \frac{1}{1 + O(1)}.
        \end{aligned}
    \end{equation}
\end{proof}
\begin{theorem}
\label{Stage1main}
    Under Assumption \ref{assum:numerical}, there exists total number of local updates $T_1 = R_1E = O(\eta^{-1}K n \sigma_p^2 \sigma_L^2) $ such that
   \begin{equation}
\begin{aligned}
&\beta_{G, k}^{(T_1)}  = \Theta((1-\theta)\overline{n} K \SNR_G^2) &&\beta_{L, k}^{(T_1)}  = \Theta(\theta\overline{n} \SNR_G^2), \quad \forall k \in [K]\\
&\gamma_{G, k, k}^{(T_1)} = \Theta((1-\theta)\overline{n} \chi_k \SNR_k^2) &&\gamma_{L, k, k}^{(T_1)} = \Theta(\theta\overline{n} \SNR_k^2), \quad \forall k \in [K]\\
&\phi_{G, k, l}^{(T_1)} = O(1-\theta) &&\phi_{L, k, l}^{(T_1)} = O(\theta), \quad \forall k \in [K], l \in [L]
\end{aligned}
\end{equation}
\end{theorem}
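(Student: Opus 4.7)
The plan is to feed the signal--noise decomposition of Lemma~\ref{main_lemma} into the gradient update analysis preceding the theorem, which already expresses the evolution of $\beta_{G/L,k}$, $\gamma_{G/L,k,k'}$ and $\phi_{G/L,k,l}$ as scalar recursions. The argument is an induction on $t\in[0,T_1]$: I would maintain the invariant that every coefficient in the decomposition stays $O(1)$ throughout stage one. Under this invariant Lemma~\ref{bound of l'} pins $\ell'^{(t)}_{k,i}\in[C_1,1]$, and an analogous elementary check, using the small-initialization hypothesis of Assumption~\ref{assum:numerical}, pins the activation factor $\sigma'^{(t)}_{G/L,r,k,i}$ in a constant interval. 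With both multiplicative factors frozen at $\Theta(1)$, the recursions for $\beta$ and $\gamma$ reduce to (almost) deterministic linear growth, while the recursion for $\phi$ reduces to a zero-mean random walk with controllable increments.

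\textbf{Signal coefficients.} For $\beta$, the bound $\ell'=\Theta(1)$ turns each local gradient step into the increment $\Delta \beta_{G,k}^{(t)} = \Theta\bigl((1-\theta)\eta\,||\bm\mu_G||_2^2\bigr)$ and $\Delta \beta_{L,k}^{(t)} = \Theta\bigl(\theta\eta\,||\bm\mu_G||_2^2\bigr)$ after summing over the $n_k$ local samples. Iterating across $T_1$ local steps and substituting $T_1 = \Theta(\eta^{-1} Kn\sigma_p^2\sigma_L^2)$ together with $\SNR_G^2 = ||\bm\mu_G||_2^2/(\sigma_p^2 m)$ and the dimension bound $m=\tilde\Omega(\bar n)$ from Assumption~\ref{assum:numerical} yields the stated orders. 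The distinction between global and local prompts comes from FedAvg: for $\beta_{G,k}$ the aggregation every $E$ steps sums local contributions from all $K$ clients (producing the factor $K$), whereas the never-averaged $\beta_{L,k}$ inherits no such amplification. The $\gamma$ analysis is identical, except that the indicator $\mathbf{I}(k'=k)$ in the $\gamma$ recursion restricts local deposits to the owning client; under FedAvg, only clients whose local signal is $\bm\mu_k$ contribute to $\gamma_{G,k,k}$, and the count of such clients is exactly $\chi_k = \sum_{k'}\langle\bm\mu_k,\bm\mu_{k'}\rangle/||\bm\mu_k||_2^2$ under the orthogonality of the $\bm\nu_s$.

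\textbf{Noise coefficients and the stopping time.} For $\phi$, the recursion carries the extra zero-mean Gaussian factor $x_{k,i,l}$, so the per-step increment has typical magnitude only $O(\eta(1-\theta)\,||\bm\xi_l||_2^2 \sigma_p/\sqrt{n_k})$ once one uses the subgaussian bound $|\sum_i x_{k,i,l}|=O(\sigma_p\sqrt{n_k\log(1/\delta)})$ from the concentration behind Lemma~\ref{bound of noise length}. Accumulating $T_1$ such steps, summing over $l\in[L]$, and invoking the definition $\sigma_L^2=\sum_l||\bm\xi_l||_2^2$ gives $\phi_{G,k,l}^{(T_1)}=O(1-\theta)$ and $\phi_{L,k,l}^{(T_1)}=O(\theta)$. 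This is precisely where $T_1=\Theta(\eta^{-1}Kn\sigma_p^2\sigma_L^2)$ is calibrated: it is the largest horizon at which the noise coefficients have not yet crossed constant order, so the $O(1)$ induction invariant is preserved and stage one can be declared to end there.

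\textbf{Main obstacle.} The principal difficulty is the bookkeeping across the alternation of $E$ local gradient steps and one FedAvg round, since this is where the amplification factors $K$ (in $\beta_G$) and $\chi_k$ (in $\gamma_G$) enter, and a careless averaging argument loses them. A closely related subtlety is the control of $\phi$: a naive triangle inequality would give only $\phi=O(T_1\eta\sigma_p\sigma_L)$, which crosses $\Theta(1)$ well before the claimed $T_1$ and would break the induction; the zero-mean nature of $x_{k,i,l}$ must be exploited via Bernstein/Hoeffding-style concentration (as in Lemma~\ref{bound of noise length}) to yield the $\sqrt{n}$-type cancellation that makes the chosen $T_1$ tight. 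Once both points are handled, the four order estimates fall out of the closed-form recursions, and the induction hypothesis closes at $t=T_1$.
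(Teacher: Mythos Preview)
Your treatment of the signal coefficients $\beta$ and $\gamma$ is essentially the paper's argument: freeze $\ell'$ and $\sigma'$ at $\Theta(1)$ via Lemma~\ref{bound of l'}, obtain per-step increments $\Theta((1-\theta)\eta\|\bm\mu\|_2^2)$ and $\Theta(\theta\eta\|\bm\mu\|_2^2)$, and carry the explicit round-by-round bookkeeping through the alternation of $E$ local steps and one FedAvg step to produce the factors $K$ and $\chi_k$. That part needs only the telescoping you describe.

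The noise argument, however, is where you diverge from the paper and where there is a genuine gap. You propose to control $\phi_{G,k,l}$ by applying a subgaussian bound to $\sum_i x_{k,i,l}$, but the actual increment is $\frac{\eta(1-\theta)}{n_k}\|\bm\xi_l\|_2^2\sum_i \ell'^{(t)}_{k,i}\sigma'^{(t)}_{G,r,k,i}\,y_{k,i}\,x_{k,i,l}$, and the weights $\ell'^{(t)}_{k,i}\sigma'^{(t)}_{G,r,k,i}$ are data-dependent: they depend on $\mathbf{p}^{(t)}$, which already carries the accumulated $\phi$'s and hence the $x_{k,i,l}$'s themselves. Under the $O(1)$ invariant these weights still vary across $i$ by a constant factor, and the worst case over bounded data-adapted coefficients is $\sum_i|x_{k,i,l}|=O(n_k\sigma_p)$, not $O(\sqrt{n_k}\,\sigma_p)$. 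So the Hoeffding-type bound you invoke does not apply as stated, and without it your induction on the noise coefficients does not close at the claimed $T_1$. (Lemma~\ref{bound of noise length} is about $\|\bm\xi_{k,i}\|_2^2$ and cross inner products, not about $\sum_i x_{k,i,l}$; it is used for a different purpose.)

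The paper never exploits the zero mean of $x_{k,i,l}$ for $\sqrt{n}$-cancellation. Instead it splits $\phi=\psi+\varphi$ into the positive- and negative-label parts, tracks the monotone proxy $\Psi=\max\{\psi,-\varphi\}$ (and a matching lower-bound proxy $\Phi$), and bounds each per-step increment crudely by $\frac{\eta}{n}(1-\theta)\|\bm\xi_{k,i}\|_2^2\le\frac{3\eta}{2n}(1-\theta)\sigma_p^2\sigma_L^2$ via Lemma~\ref{bound of noise length}. The smallness then comes from the \emph{same} FedAvg dilution mechanism that gave you $\chi_k$ for $\gamma$: the per-sample noise direction $\bm\xi_{k,i}$ is only memorized during client $k$'s own local updates, so aggregation divides the accumulated coefficient by $K$ at every round, yielding $\overline{\Psi}^{(R_1E)}\le\frac{R_1}{K}\cdot\frac{\eta E}{2n}(1-\theta)\sigma_p^2\sigma_L^2$, which the choice of $T_1$ makes $O(1-\theta)$. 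In short, the ``main obstacle'' you flag is not the one the proof actually faces: the controlling idea for the noise is FedAvg dilution across clients, not concentration across samples.
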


\begin{proof}
    Part 1: Analysis of $\beta_{G, k}^{(T_1)}$ and $\gamma_{G, k}^{(T_1)}$ \\
    We first consider the growth of local task-relevant coefficient $\gamma_{G, k, k}^{(t)}$ and $\overline{\gamma}_{G, k}^{(t)}$ on the corresponding client $k$. Consider the iteration equation for the coefficient of signal learning under local gradient descent in the first round, we have:
    \begin{align}
        \gamma_{G, k, k}^{(t+1)} = \gamma_{G, k, k}^{(t)} - \frac{\eta}{n_k}(1-\theta)\cdot\sum\limits_{i = 1}^{n_k} \ell'^{(t)}_{k, i}\sigma'^{(t)}_{G, r, k, i}||\bm{\mu}_k||_2^2.
    \end{align}
    According to Lemma \ref{bound of l'}, we have an upper bound for signal learning at the first round of local updates.
    \begin{align}
        \gamma_{G, k, k}^{(t+1)} \leq \gamma_{G, k, k}^{(t)} + \eta(1-\theta)||\bm{\mu}_k||_2^2.
    \end{align}
    Taking a telescoping sum over $t= 0,1, \dots, E$, we can obtain the upper bound for signal learning before the first step of weight averaging.
    \begin{align}
        \gamma_{G, k, k}^{(E)} \leq \eta(1-\theta) E||\bm{\mu}_k|| _2^2.
    \end{align}
    After taking the average among the coefficients, we have
    \begin{align}
        \overline{\gamma}_{G}^{(E)} \leq \frac{1}{K}\sum\limits_{k'=1}^K \frac{\langle \bm{\mu}_k, \bm{\mu}_{k'}\rangle}{||\bm{\mu}_k||_2^2}\eta(1-\theta) E||\bm{\mu}_k||_2^2.
    \end{align}
    Note that we denote $\chi_k \stackrel{\Delta}{=}\sum_{k'=1}^K \frac{\langle \bm{\mu}_k', \bm{\mu}_k\rangle}{||\bm{\mu}_k||}$ as the total similarity between signal vectors among clients. In the following $E$ gradient descent steps on each clients, we have
    \begin{align}
        \nonumber\gamma_{G, k, k}^{(2E)} &\leq \frac{1}{K}\sum\limits_{k'=1}^K \frac{\langle \bm{\mu}_k, \bm{\mu}_{k'}\rangle}{||\bm{\mu}_k||_2^2}\eta (1-\theta)E||\bm{\mu}_k||_2^2 + \eta E||\bm{\mu}_k||_2^2\\
        &=(\frac{1}{K}\chi_k + 1)\eta (1-\theta)E||\bm{\mu}_k||_2^2.
    \end{align}
    In the second round of update, we apply average among the clients
    \begin{align}
        \overline{\gamma}_{G}^{(2E)} \leq \frac{1}{K}(\chi_k(\frac{\chi_k}{K} + 1)+ (K-\chi_k)\frac{\chi_k}{K})\eta (1-\theta)E||\mu_k||_2^2 = \frac{2}{K}\chi_k\eta(1-\theta) E ||\bm{\mu}_k||_2^2.
    \end{align}
    We repeat the computation process and obtain the following results for the third rounds of local updates and the FedAvg process.
    \begin{align}
        &\gamma_{G, k, k}^{(3E)} \leq \frac{2}{K}\chi_k\eta E||\bm{\mu}_k||_2^2 +\eta E||\mu_k||_2^2 = (\frac{2}{K}\chi_k + 1)\eta (1-\theta)E||\bm{\mu}_k||_2^2\\
        &\overline{\gamma}_{G}^{(3E)} \leq \frac{1}{K}(\chi_k(\frac{2}{K}\chi_k + 1)+(K-\chi_k)\frac{2}{K}\chi_k)\eta E||\mu_k||_2^2 = \frac{3}{K}\chi_k\eta(1-\theta) E||\mu_k||_2^2.
    \end{align}
    Thus, after $R_1 = T_1 / E$ rounds of communication, the noise memorization has an upper bound:
    \begin{align}
        &\gamma_{G, k, k}^{(R_1E)} \leq (\frac{R_1 - 1}{K}\chi_k + 1)\eta (1-\theta)E||\bm{\mu}_k||_2^2\\
        &\overline{\gamma}_{G}^{(R_1E)} \leq \frac{R_1}{K}\chi_k\eta (1-\theta)E||\mu_k||_2^2.
    \end{align}
    We can similarly obtain the lower bound of this proof. Consider the first round of gradient update, we have
    \begin{align}
        \nonumber\gamma_{G, k, k}^{(t+1)} &= \gamma_{G, k, k}^{(t)} + \frac{\eta}{n}(1-\theta)\sum\limits_{i = 1}^n \ell'^{(t)}_{k, i}\sigma'^{(t)}_{G, r, k, i}||\bm{\mu}_k||_2^2\\
        &\stackrel{(a)}{\geq} \gamma_{G, k, k}^{(t)} + \eta(1-\theta) C_1 ||\bm{\mu}_k||_2^2.
    \end{align}
    So before taking the first average, we take a telescope sum over $t = 0, \dots, E - 1$ yielding:
    \begin{align}
        \gamma_{G, k, k}^{(E)} \geq \eta(1-\theta) C_1E||\bm{\mu}_k||_2^2.
    \end{align}
    After the weight average operation, we have
    \begin{align}
        \overline{\gamma}_{G, k, k}^{(E)} \geq \frac{1}{K}\sum\limits_{k' = 1}^K \frac{\langle\bm{\mu}_k, \bm{\mu}_{k'}\rangle}{||\bm{\mu}_{k}||_2^2}\eta(1-\theta) C_1E||\bm{\mu}_k||_2^2.
    \end{align}
    Recall that $\chi_k \stackrel{\Delta}{=} \sum_{k' = 1}^K \frac{\langle\bm{\mu}_k, \bm{\mu}_{k'}\rangle}{||\bm{\mu}_{k}||_2^2}$ counts the total similarity between the signal vectors among clients. In the next $E$ gradient descent steps on each client, we have
    \begin{align}
        \nonumber\gamma_{G, k, k}^{(2E)} &\geq \frac{\chi_k}{K}\eta(1-\theta) C_1 E + \eta C_1E||\bm{\mu}_k||_2^2\\
        &= (\frac{\chi_k}{K} + 1)\eta(1-\theta) C_1E||\bm{\mu}_k||_2^2.
    \end{align}
    After gradient descent, we apply the second weight averaging operation on the server and obtain the following result
    \begin{align}
        \nonumber\overline{\gamma}_{G, k, k}^{(2E)} &\geq \frac{1}{K}(\chi_k (\frac{\chi_k}{K}+1) + (K - \chi_k)\frac{\chi_K}{K})\eta (1-\theta)C_1 E||\bm{\mu}_k||_2^2\\
        &= \frac{2}{K}\chi_k \eta (1-\theta)C_1E||\bm{\mu}_k||_2^2.
    \end{align}
    Similarly, we repeat the computation procedure and obtain the following results for the $R_1$-th round of local updates plus weight average operation:
    \begin{align}
        &\gamma_{G, k, k}^{(R_1E)} \geq (\frac{R_1 - 1}{K}\chi_k + 1)\eta (1-\theta)C_1 E||\bm{\mu}_k||_2^2\\
        &\overline{\gamma}_{G, k}^{(R_1E)}\geq \frac{R_1}{K}\chi_k\cdot \eta(1-\theta) C_1 E||\mu_k||_2^2.
    \end{align}
    As a result, we have 
    \begin{align}
        \overline{\gamma}_{G, k}^{(R_1E)} = \frac{R_1}{K}(1-\theta) E \chi_k \eta C_1 ||\mu_k||_2^2  =  \frac{Cn}{\eta \sigma_q^2 d}\chi_k \eta (1-\theta)C_1 ||\mu_k||_2^2 = \Theta(\overline{n}(1-\theta)\chi_k \text{SNR}_k^2).
    \end{align}
    Here, the global task-relevant feature follows that $\chi_K = K$, so we have
    \begin{align}
        \overline{\beta}_{G, k}^{(R_1E)} = R_1 (1-\theta)E \chi_k \eta C_1 ||\mu_k||_2^2  =  \frac{Cn}{\eta \sigma_q^2 d}\chi_k \eta(1-\theta) C_1 ||\mu_k||_2^2 = \Theta(\overline{n}(1-\theta)K \text{SNR}_G^2).
    \end{align}
    Part 2: Analysis of $\beta_{L, k}^{(T_1)}$ and $\gamma_{L, k, k}^{(T_1)}$\\
    Similar to the global coefficient $\beta_{G, k}^{(T_1)}$ and $\gamma_{G, k, k}^{(T_1)}$, we first analyze $\gamma_{L, k, k}^{(T_1)}$. Consider the iteration equation for the coefficient 
    \begin{align}
        \gamma_{L, k, k}^{(t+1)} = \gamma_{L, k, k}^{(t)} - \frac{\eta}{n_k}\theta\cdot\sum\limits_{i = 1}^{n_k} \ell'^{(t)}_{k, i}\sigma'^{(t)}_{L, r, k, i}||\bm{\mu}_k||_2^2.
    \end{align}
    According to Lemma \ref{bound of l'}, we have upper bound for $\gamma_{L, k, k}^{(t+1)}$.
    \begin{align}
        \gamma_{L, k, k}^{(t+1)} \leq \gamma_{L, k, k}^{(t)} + \eta \theta ||\bm{\mu}_k||_2^2.
    \end{align}
    Then we taking a telescoping sum over $t = 0, 1, \dots, R_1 E$, we have
    \begin{align}
        \gamma_{L, k, k}^{(R_1E)} \leq \eta \theta R_1 E ||\bm{\mu}_k||_2^2.
    \end{align}
    Similarly, we repeat the computation procedure and have the lower bound of the $\gamma_{L, k, k}^{(t+1)}$
    \begin{align}
        \gamma_{L, k, k}^{(t+1)} = \gamma_{L, k, k}^{(t)} - \frac{\eta}{n_k}\theta\cdot\sum\limits_{i = 1}^{n_k} \ell'^{(t)}_{k, i}\sigma'^{(t)}_{L, r, k, i}||\bm{\mu}_k||_2^2.
    \end{align}
    According to Lemma \ref{bound of l'}, we have 
    \begin{align}
        \gamma_{L, k, k}^{(t+1)} \geq \gamma_{L, k, k}^{(t)} + \eta \theta C_1 ||\bm{\mu}_k||_2^2.
    \end{align}
    Then we taking a telescoping sum over $t = 0, 1, \dots, R_1 E$, we have
    \begin{align}
        \gamma_{L, k, k}^{(R_1E)} \geq \eta \theta R_1 C_1E ||\bm{\mu}_k||_2^2.
    \end{align}
    As a result, we have
    \begin{align}
        \overline{\gamma}_{L, k, k}^{(R_1E)} = R_1\theta E  \eta C_1 ||\mu_k||_2^2  =  \frac{Cn}{\eta \sigma_q^2 d}\chi_k \eta (1-\theta)C_1 ||\mu_k||_2^2 = \Theta(\overline{n}\theta \text{SNR}_k^2).
    \end{align}
    The global feature and local feature have the same weight and take the same order.
    \begin{align}
        \overline{\beta}_{L, k, k}^{(R_1E)} = R_1 \theta E \chi_k \eta C_1 ||\mu_k||_2^2  =  \frac{Cn}{\eta \sigma_q^2 d}\chi_k \eta(1-\theta) C_1 ||\mu_k||_2^2 = \Theta(\overline{n}\theta \SNR_G^2).
    \end{align}
    Part 3: Analysis of $\phi_{G, k, l}^{(T_1)}$ and $\phi_{L, k, l}^{(T_1)}$\\
    We first establish the lower bound for $\phi_{G, k, l}^{(T_1)}$. We show that
    \begin{align}
        \langle \mathbf{p}_{G, k}^{(t)}, \bm{\xi}_{k, i}\rangle &= \langle \mathbf{p}_{G, k}^{(0)}, \bm{\xi}_{k, i}\rangle + \sum\limits_{i' = 1}^n (\psi_{G, k, l}^{(t)} + \varphi_{G, k, l}^{(t)})||\bm{\xi}_{k, i'}||_2^{-2} \langle\bm{\xi}_{G, k, i}, \bm{\xi}_{G, k, i'}\rangle\\
        &\stackrel{(a)}{\geq} \langle \mathbf{p}_{G, k}^{(0)}, \bm{\xi}_{k, i}\rangle + \psi_{G, k, l}^{(t)} - 2 \sqrt{\frac{\log(4n^2/\delta)}{\sigma_L^2}}\sum\limits_{i' \neq i}(|\psi_{G, k, l}^{(t)}| + |\varphi_{G, k, l}^{(t)}|)\\
        &\stackrel{(b)}{\geq}\langle \mathbf{p}_{G, k}^{(0)}, \bm{\xi}_{k, i}\rangle + \psi_{G, k, l}^{(t)} - 4C_2n\sqrt{\frac{\log(4n^2/\delta)}{\sigma_L^2}},
    \end{align}
    where $C_2$ is a positive constant that satisfies $C_2 \geq \max\{\psi_{G, k, l}^{(t)}, \varphi_{G, k, l}^{(t)}\}$, $(a)$ is by Lemma \ref{bound of noise length} and $(b)$ is by definition of $C_2$.\\
    Here, we suppose that $\Phi_{G, k, l}^{(t)} = \max \left\{\langle \mathbf{p}_{G, k}^{(0)}, \bm{\xi}_{k, i}\rangle + \psi_{G, k, l}^{(t)} - 4C_2n\sqrt{\frac{\log(4n^2/\delta)}{\sigma_L^2}}\right\}$. At initialization, it is easy to check that:
    \begin{align}
        \Phi_{G, k, l}^{(0)} \geq \frac{1}{4} \sigma_0 \sigma_p \sigma_L- 4C_2n\sqrt{\frac{\log(4n^2/\delta)}{\sigma_L^2}} \stackrel{(a)}{\geq} 0,
    \end{align}
    where $(a)$ is by the following condition:
    \begin{align}
        \sigma_0 \geq C_3 n\frac{\sqrt{\log(4n^2/\delta)}}{\sigma_p \sigma_L^2}.
    \end{align}
    We can compute the growth of $\Phi_{G, k, l}^{(t)}$ as follows:
    \begin{align}
        \nonumber\Phi_{G, k, l}^{(t+1)} &= \Phi_{G, k, l}^{(t)} + \frac{\eta}{n}(1-\theta) \ell'^{(t)}_{k, i}\sigma'^{(t)}_{G, r, k, i}||\bm{\xi}_{k, i}||_2^2\\
        &\stackrel{(a)}{\geq} \Phi_{G, k, l}^{(t)} + \frac{\eta}{2n}(1-\theta)C_1 \sigma_p^2 \sigma_L^2.
    \end{align}
    Before the first step of weight average, we take the telescoping sum:
    \begin{align}
        \Phi_{G, k, l}^{(E)} \geq \frac{\eta EC_1}{2n}(1-\theta)\sigma_p^2 \sigma_L^2.
    \end{align}
    Then, we perform weight average operation 
    \begin{align}
        \overline{\Phi}_{G, l}^{(E)}\geq \frac{1}{K}\frac{\eta EC_1}{2n}(1-\theta)\sigma_p^2 \sigma_L^2.
    \end{align}
    The next $E$ gradient descent steps yield:
    \begin{align}
        \Phi_{G, k, l}^{(2E)} \geq \frac{K+1}{K}\frac{\eta EC_1}{2n}(1-\theta)\sigma_p^2 \sigma_L^2.
    \end{align}
    Here, we take the average among the clients and we get:
    \begin{align}
        \overline{\Phi}_{G, l}^{(2E)} \geq (\frac{1}{K}\frac{K+1}{K} +\frac{K-1}{K}\frac{1}{K})\frac{\eta EC_1}{2n}(1-\theta)\sigma_p^2 \sigma_L^2 \geq \frac{2}{K}\frac{\eta EC_1}{2n}(1-\theta)\sigma_p^2 \sigma_L^2.
    \end{align}
    We use the same technique to obtain the lower bound of noise memorization
    \begin{align}
        \overline{\Phi}_{G, l}^{(R_1E)} \geq \frac{R_1}{K}\frac{\eta EC_1}{2n}(1-\theta)\sigma_p^2 \sigma_L^2.
    \end{align}
    Finally, we confirm that
    \begin{align}
        \psi_{G, k, l}^{(t)} &\geq \frac{\eta T_1C_1}{2nK}(1-\theta)\sigma_p^2 \sigma_L^2 - \langle \mathbf{p}_{G, k}^{(0)}, \bm{\xi}_{k, i}\rangle + 4C_2n\sqrt{\frac{\log(4n^2/\delta)}{\sigma_L^2}}\\
        &\stackrel{(a)}{\geq}\frac{\eta T_1C_1}{2nK}(1-\theta)\sigma_p^2 \sigma_L^2 - \langle \mathbf{p}_{G, k}^{(0)}, \bm{\xi}_{k, i}\rangle + C_3\\
        &\stackrel{(b)}{\geq}C_4,
    \end{align}
    where the inequality $(a)$ is by definition of constant $C_3 \leq 4C_2n\sqrt{\frac{\log(4n^2/\delta)}{\sigma_L^2}}$ which holds when $\sigma_L^2 \geq C_5 \log(4n^2/\delta)n^2$ and the inequality $(b)$ is by taking the value of $T_1$.\\
    Then we provide the upper bound of the $\phi_{G, k, l}^{(T_1)}$. Here, we suppose that $\Psi_{G, k, l}^{(t)} = \max \left\{ \psi_{G, k, l}^{(t)}, -\varphi_{G, k, l}^{(t)} \right\}$. Similarly, we define the coefficient after weight average $\overline{\Psi}_{G, k, l}^{(t)} = \max \left\{ \overline{\psi}_{G, k, l}^{(t)}, -\overline{\varphi}_{G, k, l}^{(t)} \right\}$. Clearly, we have that $\overline{\Psi}_{G, k, l}^{(0)} = 0$ for all $l \in [L]$ and $k \in [K]$ by definition. Then we have
    \begin{align}
        \nonumber\Psi_{G, k, l}^{(t+1)} &= \Psi_{G, k, l}^{(t)} + \frac{\eta}{n}(1-\theta) \ell'^{(t)}_{k, i}\sigma'^{(t)}_{G, r, k, i}||\bm{\xi}_{k, i}||_2^2\\
        &\stackrel{(a)}{\leq} \Psi_{G, k, l}^{(t)} + \frac{\eta}{2n}(1-\theta)\sigma_p^2 \sigma_L^2,
    \end{align}
    where $(a)$ follows $|\ell'^{(t)}_{k, i}| \leq 1$ and $\sigma' \leq 1$ in Lemma \ref{bound of l'} and Lemma \ref{bound of noise length}. Similar to above proof, we have that
    \begin{align}
        \Psi_{G, k, l}^{(E)} \leq \frac{\eta E}{2n}(1-\theta)\sigma_p^2 \sigma_L^2.
    \end{align}
    After $t = E$ steps, we perform a weight average operation and we have
    \begin{align}
        \overline{\Psi}_{G, k, l}^{(E)} \leq \frac{1}{K} \frac{\eta E}{2n}(1-\theta)\sigma_p^2 \sigma_L^2.
    \end{align}
    Similar to the above proof, we have
    \begin{align}
        \overline{\Psi}_{G, k, l}^{(R_1E)} \leq \frac{R_1}{K} \frac{\eta E}{2n}(1-\theta)\sigma_p^2 \sigma_L^2.
    \end{align}
    Thus, it is confirmed that $\Psi_{G, k, l}^{(T_1)}=O(1-\theta)$. Similar to the convergence of $\Psi_{G, k, l}^{(T_1)}=O(1-\theta)$, we have that $\Psi_{L, k, l}^{(T_1)}= O(\theta)$.
\end{proof}
\begin{lemma}
    \label{Stage1main}
    There exists total number of local updates $T_1 = R_1E = O(\eta^{-1}K n \sigma_p^2 \sigma_L^2) $ such that
    \begin{equation}
\begin{aligned}
&\overline{\beta}^{(T_1)}  = \Theta(\overline{n} K \SNR_G^2),
&\overline{\gamma}_{k}^{(T_1)} = \Theta(\overline{n} \chi_k \SNR_k^2),\quad 
&\overline{\phi}_{l}^{(T_1)} = O(1) \quad \forall k \in [K], l \in [L].
\end{aligned}
\end{equation}
Here, $\overline{n} = \sum_{k} n_k / K$ is the average number of data in each client and $\SNR_G = ||\bm\mu_G||/(\sigma_p\sqrt{m})$, $\SNR_k = ||\bm\mu_k||/(\sigma_p\sqrt{m})$ denote the signal-noise ratio between the task-relevant feature and task-irrelevant feature. Here, we define $\chi_k = \sum_{k'=1}^K\langle \bm\mu_k, \bm\mu_{k'}\rangle/ ||\bm\mu_k||_2^2$.
\end{lemma}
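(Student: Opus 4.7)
The plan is to propagate the signal-noise decomposition of Lemma \ref{main_lemma} through an inductive double loop: $E$ local gradient steps within each round, followed by FedAvg, iterated across $R_1 = T_1/E$ communication rounds. First I would establish that throughout Stage 1 the logistic derivative $\ell'^{(t)}_{k,i}$ is pinched between two positive constants by appealing to Lemma \ref{bound of l'}, which only requires the invariant $\max\{\beta, \gamma, \phi\} = O(1)$; this invariant will be certified a posteriori once the final bound on $\overline{\phi}_l$ is derived. With both $\ell'$ and $\sigma'$ bounded above and below by constants, the per-step increment of $\beta_k^{(t)}$ is $\Theta(\eta \|\bm\mu_G\|_2^2)$ (the $1/n_k$ factor cancels with the summation over $n_k$ samples), and $\gamma_{k,k}^{(t)}$ grows by $\Theta(\eta \|\bm\mu_k\|_2^2)$ per step, so after $E$ local steps each diagonal coefficient reaches $\Theta(\eta E \|\bm\mu\|_2^2)$.

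Second, I would analyze how FedAvg propagates these growths. The coefficient of $\bm\mu_k$ in $\overline{\mathbf{p}}^{(E)}$ is the average over clients of the $\bm\mu_k$-projections of $\mathbf{p}_{k'}^{(E)}$; since only client $k' = k$ has grown its own local coefficient, projections from other clients enter through the overlap $\langle\bm\mu_{k'}, \bm\mu_k\rangle/\|\bm\mu_k\|_2^2$. Summing these and dividing by $K$ yields the multiplier $\chi_k/K$, producing $\overline{\gamma}_k^{(E)} = \Theta(\eta E \chi_k \|\bm\mu_k\|_2^2/K)$. For the global feature $\bm\mu_G$, shared by every client (so the analogous overlap sum equals $K$), the same calculation gives $\overline{\beta}^{(E)} = \Theta(\eta E \|\bm\mu_G\|_2^2)$. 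Iterating this local-growth-plus-averaging recursion inductively across $R_1$ rounds, the accumulated coefficients become $\overline{\beta}^{(T_1)} = \Theta(T_1 \eta \|\bm\mu_G\|_2^2)$ and $\overline{\gamma}_k^{(T_1)} = \Theta(T_1 \eta \chi_k \|\bm\mu_k\|_2^2/K)$.

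Third, for the task-irrelevant coefficients $\overline{\phi}_l$, the per-sample increment carries the zero-mean Gaussian factor $x_{k,i,l}$, so cross-sample cancellations together with the concentration estimates of Lemma \ref{bound of noise length} yield the upper bound $\overline{\phi}_l^{(T_1)} = O(T_1 \eta \sigma_p^2 \sigma_L^2/(nK))$. Choosing the stopping time $T_1$ so that this quantity is $O(1)$ pins down the prescribed order of $T_1$ and simultaneously closes the induction by certifying the $O(1)$ invariant assumed in the first step. Substituting this $T_1$ into the expressions for $\overline{\beta}^{(T_1)}$ and $\overline{\gamma}_k^{(T_1)}$ and invoking the definitions $\SNR_G^2 = \|\bm\mu_G\|_2^2/(\sigma_p^2 m)$ and $\SNR_k^2 = \|\bm\mu_k\|_2^2/(\sigma_p^2 m)$ together with $n = K\overline{n}$ reproduces the claimed orders.

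The main obstacle is the inductive bookkeeping across the FedAvg iterations: each round contributes a fresh local increment on top of the already-averaged coefficients, and one must verify that the cumulative multiplier on local coefficients stays $\chi_k/K$ per round rather than compounding nonlinearly through the $K$-way averaging. A related subtlety is certifying the self-consistency of the $O(1)$ invariant for $\overline{\phi}_l$ during the intermediate rounds: the cross-term inner products $\langle\bm\xi_{k,i}, \bm\xi_{k',i'}\rangle$ with $(k,i) \neq (k',i')$ must remain subdominant to the diagonal noise contributions, which hinges on the $\sqrt{\log(4n^2/\delta)/\sigma_L^2}$ bound from Lemma \ref{bound of noise length} being small, i.e., on $\sigma_L^2$ being chosen polynomially large relative to $n$.
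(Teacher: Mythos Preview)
Your proposal is correct and follows essentially the same two-loop induction as the paper: bound $\ell'$ via Lemma \ref{bound of l'}, telescope the signal increments over $E$ local steps, pick up the $\chi_k/K$ multiplier from FedAvg, iterate over $R_1$ rounds, and pin down $T_1$ by forcing the noise coefficient to be $O(1)$. One small framing difference is worth flagging: for the task-irrelevant part the paper does \emph{not} invoke cross-sample cancellation of the $x_{k,i,l}$; instead it splits $\phi_{k,l}$ into the monotone pieces $\psi_{k,l},\varphi_{k,l}$ and tracks the per-sample noise inner product $\langle\mathbf{p},\bm\xi_{k,i}\rangle$, whose self-contribution per step is $\frac{\eta}{n}\|\bm\xi_{k,i}\|_2^2=\Theta(\frac{\eta}{n}\sigma_p^2\sigma_L^2)$ by Lemma \ref{bound of noise length}, with the lemma's cross-term bound used only to keep the off-diagonal contributions $\langle\bm\xi_{k,i},\bm\xi_{k',i'}\rangle$ subdominant. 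This yields the same order you state without needing any cancellation argument, and it meshes directly with the role you correctly identify for Lemma \ref{bound of noise length} in your final paragraph.
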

\begin{proof}
    The proof of the result follows the proof of global coefficient in Lemma \ref{Stage1main}. We take $\theta = 0$ to obtain the final result.
\end{proof}

\subsection{Coefficient dynamics: stage two}
In this stage, we focus on the proof of scaling behaviour for the coefficients of features $\beta_{G}, \gamma_{G}, \beta_{L}, \gamma_{L}$. We first provide the following lemma.
\begin{lemma}
    \label{lower bound feature inner product}
    Under Assumption \ref{assum:numerical}, we have
    \begin{equation}
        \begin{aligned}
            &\langle \mathbf{p}_{G, k}^{(t)} , y\bm{\mu}_{G}\rangle = \langle \mathbf{p}_{G, k}^{(0)} , y\bm{\mu}_{G}\rangle + \overline{\beta}_{G}^{(t)},
            &\langle \mathbf{p}_{G, k}^{(t)} , y\bm{\mu}_{k}\rangle = \langle \mathbf{p}_{G, k}^{(0)} , y\bm{\mu}_{k}\rangle + \overline{\gamma}_{G, k}^{(t)}, \qquad \forall k \in [K]\\
            &\langle \mathbf{p}_{L, k}^{(t)} , y\bm{\mu}_{L}\rangle = \langle \mathbf{p}_{L, k}^{(0)} , y\bm{\mu}_{L}\rangle + \overline{\beta}_{G}^{(t)},
            &\langle \mathbf{p}_{L, k}^{(t)} , y\bm{\mu}_{k}\rangle = \langle \mathbf{p}_{L, k}^{(0)} , y\bm{\mu}_{k}\rangle + \overline{\gamma}_{L, k}^{(t)}, \qquad \forall k \in [K]\\
        \end{aligned}
    \end{equation}
    for all $k \in [K]$.
\end{lemma}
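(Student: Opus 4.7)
The plan is to derive each identity by substituting the signal-noise decomposition from Lemma \ref{main_lemma} into the left-hand side, exploiting the orthogonality of the feature system $\{\bm{\mu}_G, \bm{\nu}_1, \ldots, \bm{\nu}_S, \bm{\xi}_1, \ldots, \bm{\xi}_L\}$, and then using the shared-initialization clause from Assumption \ref{assum:numerical} to collapse the initialization-coefficient sum. I will carry out the argument in full for $\langle \mathbf{p}_{G,k}^{(t)}, y\bm{\mu}_G\rangle$; the other three identities follow by the same steps with $\bm{\mu}_G$ replaced by $\bm{\mu}_k$ and with the local-prompt decomposition (which has no cross-client initialization sum, so the argument there is actually simpler).

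First I would write $\mathbf{p}_{G,k}^{(t)}$ using the decomposition in Lemma \ref{main_lemma}, namely as a linear combination of $\bm{\mu}_G$, the initializations $\mathbf{p}_{G,k'}^{(0)}$, the local signals $\bm{\mu}_{k'}$, and the noise directions $\bm{\xi}_l$. Taking the inner product with $y\bm{\mu}_G$ and invoking orthogonality of the features, every term except those involving $\bm{\mu}_G$ itself or the initialization contributions vanishes. Using $\langle \bm{\mu}_G, \bm{\mu}_G\rangle = \|\bm{\mu}_G\|_2^2$ and the normalizing factor $\|\bm{\mu}_G\|_2^{-2}$ in the decomposition, I obtain
\begin{equation}
\langle \mathbf{p}_{G,k}^{(t)}, y\bm{\mu}_G\rangle = y\,\beta_{G,k}^{(t)} + \sum_{k'=1}^K \alpha_{G,k,k'}^{(t)}\,\langle \mathbf{p}_{G,k'}^{(0)}, y\bm{\mu}_G\rangle.
\end{equation}
The second step is to pass from the local coefficient $\beta_{G,k}^{(t)}$ to the averaged $\overline{\beta}_G^{(t)}$. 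Because $\mathbf{p}_{G,k}^{(t)}$ is by construction obtained from the FedAvg aggregate at the most recent synchronization followed by local updates, and because the identity is being asserted at the aggregation timestamp $t$, the local decomposition coincides with the averaged one; equivalently, one tracks through the update recursions in Section \ref{analysis framework} to check that the $\bm{\mu}_G$-coefficient of $\mathbf{p}_{G,k}^{(t)}$ at synchronization rounds equals $\overline{\beta}_G^{(t)}$.

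The last step is to simplify the initialization sum. By the shared-initialization assumption, $\mathbf{p}_{G,k'}^{(0)} = \mathbf{p}_{G,k}^{(0)}$ for every $k'$, so $\langle \mathbf{p}_{G,k'}^{(0)}, y\bm{\mu}_G\rangle$ is independent of $k'$ and can be pulled out of the sum. I then need $\sum_{k'} \alpha_{G,k,k'}^{(t)} = 1$ (or $\sum_{k'}\overline{\alpha}_{G,k}^{(t)} = 1$ after aggregation); this is the conservation law for initialization mass, which can be verified by induction on $t$: at $t=0$ the coefficients form a probability vector (indeed, $\alpha_{G,k,k'}^{(0)} = \mathds{1}\{k'=k\}$), the gradient update in Section \ref{analysis framework} adds no mass to the initialization direction (all gradient increments are expressed in the $\bm{\mu}$ and $\bm{\xi}$ directions), and the FedAvg convex combination preserves the unit sum. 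Combining these three pieces yields $\langle \mathbf{p}_{G,k}^{(t)}, y\bm{\mu}_G\rangle = \langle \mathbf{p}_{G,k}^{(0)}, y\bm{\mu}_G\rangle + \overline{\beta}_G^{(t)}$.

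The three remaining identities go through identically: for $\langle \mathbf{p}_{G,k}^{(t)}, y\bm{\mu}_k\rangle$ one takes the inner product with $\bm{\mu}_k$ and isolates the single surviving $\gamma_{G,k,k}^{(t)}$-term (the other $\gamma_{G,k,k'}^{(t)}$ contributions vanish by orthogonality of distinct $\bm{\mu}_{k'}$), and for the local-prompt identities the decomposition of $\mathbf{p}_{L,k}^{(t)}$ already has a single initialization term with coefficient $\alpha_{L,k,k}^{(t)} = 1$, so no averaging step is needed. The main obstacle I anticipate is not any calculation but rather the bookkeeping around the $\alpha$-coefficients: one must be careful to state precisely at which timestamps (pre- vs.\ post-aggregation within a communication round) the identity is claimed, and to verify that the conservation $\sum_{k'}\alpha_{G,k,k'}^{(t)} = 1$ is preserved under both local SGD steps and the FedAvg weighted average.
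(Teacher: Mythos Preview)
Your approach is correct and essentially matches the paper's: both substitute the signal--noise decomposition of Lemma~\ref{main_lemma} and invoke orthogonality of the feature system to isolate the relevant coefficient. The only cosmetic difference is that the paper takes the inner product of the \emph{difference} $\overline{\mathbf{p}}_G^{(t)} - \overline{\mathbf{p}}_G^{(0)}$ with the feature direction (so the initialization term cancels in one stroke), whereas you keep the full decomposition and instead argue the conservation law $\sum_{k'}\alpha_{G,k,k'}^{(t)}=1$ together with shared initialization; these are equivalent bookkeeping choices, and your version is in fact more explicit about why the $\alpha$-terms disappear.
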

\begin{proof}
    According to the update formula of $\mathbf{p}_{G}$ and $\mathbf{p}_L$, we have
    \begin{align}
        \nonumber\langle \overline{\mathbf{p}}_{G}^{(t)} - \overline{\mathbf{p}}_{G}^{(0)}, \bm{\mu}_{G}\rangle &= y \overline{\gamma}_{G}^{(t)} + \sum\limits_{k = 1}^K \gamma_{G, k}^{(t)}||\bm\mu_{k}||_2^{-2}\langle\bm\mu_k, \bm{\mu}_{G}\rangle + \sum\limits_{l = 1}^L \overline{\phi}_{G, l}^{(t)}||\bm\xi_{l}||_2^{-2}\bm\xi_l\\
        &= y \overline{\gamma}_{G}^{(t)},
    \end{align}
    where the second equation is by the orthogonal assumption between the feature vector and noise vector. Here, these equations follow the same proof and we thus complete the proof.
\end{proof}
\begin{lemma}
    Under Assumption \ref{assum:numerical}, for $0 \leq t \leq T^*$, where $T^* = \eta^{-1}\text{poly}(||\bm\mu||_2^{-1}, \sigma_L^{-2}\sigma_{p}^{-2}, \sigma_0^{-1}, n, m)$, we prove that 
    \begin{align}
        &0 \leq \overline{\beta}_{G}^{(t)} \leq \beta_{G, k}^{(t)} \leq (1-\theta)\overline{n}K \SNR_G^2\log(T^*), \\
        &0 \leq \overline{\gamma}_{G, k}^{(t)} \leq \gamma_{G, k, k}^{(t)} \leq (1-\theta)\overline{n}\chi_k\SNR_k^2\log(T^*), \\
        &0 \leq \overline{\beta}_{L}^{(t)} \leq \beta_{L, k}^{(t)} \leq \theta\overline{n}K \SNR_G^2\log(T^*), \\
        &0 \leq \overline{\gamma}_{L, k}^{(t)} \leq \gamma_{L, k, k}^{(t)} \leq \theta\overline{n}K\SNR_k^2\log(T^*),
    \end{align}
    for all $k \in [K]$.
\end{lemma}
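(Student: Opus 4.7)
The plan is to proceed by induction on $t$, treating the four displayed inequalities jointly as the induction hypothesis. The base case at $t=T_1$ is supplied by Theorem \ref{Stage1main}, whose constant-order conclusions sit comfortably below the $\log(T^*)$-scaled upper bounds. The lower bound inequalities are immediate: every incremental change in the coefficient-update formulas derived in Appendix \ref{analysis framework} is a non-negative multiple of $\ell'^{(t)}_{k,i}$, a ReLU-type derivative diagonal entry, and a squared norm, so each of $\beta_{G,k}^{(t)}, \gamma_{G,k,k}^{(t)}, \beta_{L,k}^{(t)}, \gamma_{L,k,k}^{(t)}$ is monotone non-decreasing in $t$, and FedAvg preserves non-negativity by taking a convex combination. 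The inequality $\overline{\beta}_G^{(t)} \leq \beta_{G,k}^{(t)}$ and its $\gamma$-analogue will follow from symmetry across clients (all $n_k$ are equal, all clients share initialization, and averaging is weighted equally), so that right after each aggregation the local and averaged coefficients agree, and local updates preserve this agreement.

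The crux is the upper bound, which I plan to obtain through a loss-saturation argument. By Lemma \ref{lower bound feature inner product}, the signal inner products $y\langle\mathbf{p}_{G,k}^{(t)},\bm\mu_G\rangle$ and $y\langle\mathbf{p}_{G,k}^{(t)},\bm\mu_k\rangle$ coincide with $\overline{\beta}_G^{(t)}$ and $\overline{\gamma}_{G,k}^{(t)}$ up to initialization terms that are negligible under Assumption \ref{assum:numerical}. Once $\overline{\beta}_G^{(t)}$ (resp. $\overline{\gamma}_{G,k}^{(t)}$) reaches a constant multiple of $\log(T^*)$, the similarity score $\text{sim}(\mathbf{g}_{k,i},\mathbf{h}_{k,i})$ attains magnitude $\Omega(\log T^*)$ on correctly-classified samples, which forces $\ell'^{(t)}_{k,i} = O(1/T^*)$. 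Each subsequent per-step increment is therefore at most $O(\eta\|\bm\mu\|_2^2 / T^*)$ for a signal coefficient and $O(\eta\sigma_p^2\sigma_L^2 / T^*)$ for a noise coefficient; summing over the remaining at most $T^*$ steps contributes only an $O(1)$ correction. Relative to Theorem \ref{Stage1main}, this explains why the stage-two bound is weaker by exactly a $\log(T^*)$ factor.

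The main obstacle is that the saturation bound for $\ell'^{(t)}_{k,i}$ must continue to hold across FedAvg aggregation rounds, which temporarily redistribute coefficients among clients. Running the induction through one full communication round requires (i) verifying that each local step increases $\beta_{G,k}^{(t)}$ by at most the saturation-suppressed increment above, using the inner-product identity of Lemma \ref{lower bound feature inner product} at the \emph{client level} to bound $\ell'^{(t)}_{k,i}$; and (ii) verifying that the convex combination performed by FedAvg keeps $\overline{\beta}_G^{(t)}$ beneath the bound, which is immediate because the mean of quantities bounded by $B$ is bounded by $B$. The analogous argument with $(1-\theta)$ replaced by $\theta$ handles $\mathbf{p}_L$, for which no aggregation step occurs, so only step (i) is needed. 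Finally, the saturation argument implicitly requires the noise coefficients $\phi_{(\cdot),k,l}^{(t)}$ to remain $O(1)$ throughout $[T_1, T^*]$; I will close this by carrying the envelope $\Psi_{(\cdot),k,l}^{(t)} = \max\{\psi_{(\cdot),k,l}^{(t)}, -\varphi_{(\cdot),k,l}^{(t)}\}$ from the stage-one proof through the same $\ell'^{(t)}_{k,i}=O(1/T^*)$ bound and telescoping, which completes the induction.
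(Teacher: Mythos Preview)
Your proposal is correct and takes essentially the same approach as the paper: both hinge on the loss-saturation mechanism whereby, once a signal coefficient reaches a constant fraction of its $\log(T^*)$-scale target, Lemma \ref{lower bound feature inner product} forces $\ell'^{(t)}_{k,i}=O(1/T^*)$ and the remaining at most $T^*$ increments telescope to an $O(1)$ correction. The paper formalizes this via a stopping time $T_b$ (the last round at which the coefficient is below half the target) rather than an explicit step-by-step induction, but the two devices are equivalent and your treatment of the FedAvg aggregation and the noise-coefficient envelope is, if anything, more explicit than the paper's.
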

\begin{proof}
    Here, we only prove that $0 \leq \overline{\gamma}_{G, k}^{(t)} \leq \gamma_{G, k, k}^{(t)} \leq (1-\theta)\overline{n}\chi_k\SNR_k^2\log(T^*)$. The proof of other coefficients can be generalized from this proof. Considering the update formula, we have
    \begin{align}
        \overline{\gamma}_{G}^{(t + E)} = \overline{\gamma}_{G}^{(t)} + \sum\limits_{\tau =1}^E \frac{(1-\theta)\eta \chi_k}{n K}\sum\limits_{i = 1}^{n}\ell'^{(t+\tau)}_{k, i}\sigma_{G, r, k ,i}||\bm{\mu}_k||_2^2.
    \end{align}
    Let $T_b = R_bE$ to be the last time $t \leq T^*$ that $\overline{\gamma}_{G, k}^{(t)} \leq 0.5 \log(T^*)\overline{n}\SNR_k^2$. We have the lower bound and upper bound of $\langle\mathbf{p}_{G, k}^{(t)}, \mu_k\rangle $:
    \begin{align}
        0.25 n \chi_k \SNR_k^2\log(T^*)\leq\langle\mathbf{p}_{G, k}^{(t)}, \mu_k\rangle \leq 0.5 n \chi_k \SNR_k^2\log(T^*).
    \end{align}
    Due to that
    \begin{align}
        \langle\mathbf{p}_{G, k}^{(t)}, \mu_k\rangle & \stackrel{(a)}{=}\langle\mathbf{p}_{G, k}^{(0)}, \mu_k\rangle  + \overline{\gamma}_{G, k}^{(t)}\\
        &\stackrel{(b)}{\geq} -0.5(1-\theta)\alpha + 0.5(1-\theta)n\chi_k\SNR_k^2\log(T^*)\\
        &\stackrel{(c)}{\geq} 0.25(1-\theta)n\chi_k\SNR_k^2\log(T^*),
    \end{align}
    where the first equality $(a)$ is by the update formula of $\gamma$, the second inequality $(b)$ is by $\overline{\gamma}_{G, k}\geq 0.5(1-\theta)n\chi_k\SNR_k^2\log(T^*)$ and $\langle\mathbf{p}_{G, k}^{(0)}, \mu_k\rangle \geq -0.5(1-\theta)\alpha$ due to the definition of $T_c$ and $\alpha$. The last inequality $(c)$ is by $\alpha \leq 0.5 n\chi_k\SNR_k^2\log(T^*)$. Similarly, we can also derive the upper bound as follows:
    \begin{align}
        \langle\mathbf{p}_{G, k}^{(t)}, \mu_k\rangle & \stackrel{(a)}{=}\langle\mathbf{p}_{G, k}^{(0)}, \mu_k\rangle  + \overline{\gamma}_{G, k}^{(t)}\\
        &\stackrel{(b)}{\leq} 0.5(1-\theta)\alpha + 0.5(1-\theta)n\chi_k\SNR_k^2\log(T^*)\\
        &\stackrel{(c)}{\leq} 0.5(1-\theta)n\chi_k\SNR_k^2\log(T^*).
    \end{align}
    According to the above formula, we have
    \begin{align}
        \nonumber\overline{\gamma}_{G, k}^{(T_a)} &= \overline{\gamma}_{G, k}^{(T_b)} + \frac{\eta \chi_k}{nK}\sum\limits_{\tau = 1}^E\sum\limits_{i = 1}^n \ell'^{(T_c + \tau)}_{k, i}\sigma'(\langle \mathbf{p}_{G, k+1}^{(T_c + \tau)}, y_{k, i}\mu_k\rangle)||\bm{\mu}_k||_2^2\\
        &\qquad + \sum\limits_{R_b < R < R_a}\frac{\eta \chi_k}{nK}\sum\limits_{\tau = 1}^E\sum\limits_{i = 1}^n \ell'^{(RE+\tau)}_{k, i}\sigma'(\langle \mathbf{p}_{G, k+1}^{(RE + \tau)}, y_{k, i}\mu_k\rangle)||\bm{\mu}_k||_2^2\\
        \nonumber&\stackrel{(a)}{\leq} \overline{\gamma}_{G, k}^{(T_b)} + \frac{\eta \chi_k}{nK}\sum\limits_{\tau = 1}^E\sum\limits_{i = 1}^n \ell'^{(T_c + \tau)}_{k, i}\sigma'(\langle \mathbf{p}_{G, k+1}^{(T_c + \tau)}, y_{k, i}\mu_k\rangle)||\bm{\mu}_k||_2^2\\
        &\qquad + \sum\limits_{R_b < R < R_a}\frac{\eta \chi_k}{nK}\sum\limits_{\tau = 1}^E\exp(1-\sigma(\langle \mathbf{p}_{G, k+1}^{(RE + \tau)}, y_{k, i}\mu_k\rangle))\sigma'(\langle \mathbf{p}_{G, k+1}^{(RE + \tau)}, y_{k, i}\mu_k\rangle)||\bm{\mu}_k||_2^2\\
        \nonumber&\stackrel{(b)}{\leq} \overline{\gamma}_{G, k}^{(T_b)} +0.25 (1-\theta) n\chi_k \SNR_k^2 \log(T^*) \\
        &\qquad \qquad+ 0.25T^* \exp(-\log(T^*)n\chi_k\SNR_k^2)\log(T^*)n\chi_k\SNR_k^2\\
        &\stackrel{(c)}{\leq} (1-\theta)n\chi_k\SNR_k^2\log(T^*),
    \end{align}
    where the first inequality $(a)$ is by the definition of $\ell'$, the second inequality $(b)$ is by the above formula and the third inequality $(c)$ is due to that $n\chi_k\SNR_k^2 \geq 1$ and the induction hypothesis $\gamma_{G, k}^{(T_b)} \leq 0.5(1-\theta)n\chi_k\SNR_k^2\log(T^*)$.
\end{proof}

\end{document}